\def \cyp#1{(\citeyear{#1})}
\newtheorem{thm}{Theorem}
\newtheorem{example}{Example}
\newcommand\specialref{}
\newcommand{\cC}{\mathcal{C}}
\newcommand{\cE}{\mathcal{E}}
\newcommand{\cA}{\mathcal{A}}
\newcommand{\cP}{\mathcal{P}}
\newcommand{\cV}{\mathcal{V}}
\newcommand{\At}{\mathit{At}}
\newcommand{\tcb}{\textcolor{black}}
\newcommand{\tcg}{\textcolor{black}}
\newcommand{\tcm}{\textcolor{black}}
\newcommand{\eqref}[1]{(\ref{#1})}
\def\cite{\citep}
\newcommand{\labitem}[2]{%
\def\@itemlabel{\textbf{#1}}
\item
\def\@currentlabel{#1}\label{#2}}
\newcommand{\bi}{\begin{itemize}}
\newcommand{\ei}{\end{itemize}}
\def\at{\mathcal{B}}
\def\dec{\Delta}
\def\lrar{\Rightarrow}
\newcommand{\tbeg}{\langle}
\newcommand{\tend}{\rangle}
\newcommand{\lpnot}{\mbox{not}\;\,}
\newcommand{\hif}{\leftarrow}
\newcommand{\st}{
}
\newcommand{\ol}{\overline}
\newcommand{\fail}{\hbox{{\em Failstate}}}
\newcommand{\ezg}{\hbox{{\sc ez}}}
\newcommand{\ezsmg}{\hbox{{\sc ezsm}}}
\newcommand{\rd}{\hbox{{\em Decide}}\xspace}
\newcommand{\rup}{\hbox{{\em Unit Propagate}}\xspace}
\newcommand{\runf}{\hbox{{\em Unfounded}}\xspace}
\newcommand{\rb}{\hbox{{\em Backtrack}}\xspace}
\newcommand{\rfail}{\hbox{{\em Fail}}\xspace}
\newcommand{\rap}{\hbox{{\em ASP-Propagate}}\xspace}
\newcommand{\rcp}{\hbox{{\em CP-Propagate}}\xspace}
\newcommand{\rlp}{\hbox{{\em Learn}}\xspace}
\newcommand{\rlt}{\hbox{{\em Learn$^t$}}\xspace}
\newcommand{\rpr}{\hbox{{\em Restart}}\xspace}
\newcommand{\rcr}{\hbox{{\em Restart$^t$}}\xspace}
\newcommand{\aspindent}{\hspace*{.5in}}
\newcommand{\aspindhif}{\hspace*{.18in}}
\newcommand{\suggestion}[2]{}
\def\ar{\leftarrow} 
\def\beq{\begin{equation}} 
\def\eeq#1{\label{#1}\end{equation}} 
\def\beq{\begin{equation}}
\def\eeq#1{\label{#1}\end{equation}}
\def\ba{\begin{array}}
\def\ea{\end{array}}
\def\oi#1{\begin{itemize}\item {#1}\end{itemize}} 
\def\bsmall{}
\def\esmall{}
\def\wseq{{\sc wseq}}
\def\is{{\sc is}}
\def\rf{{\sc rf}}
\def\seq{{\sc seq}++}
\def\aspcomp{{\sc aspcomp}\xspace}
\def\gringo{{\sc gringo}\xspace}
\def\clasp{{\sc clasp}\xspace}
\def\aspmt{{\sc aspmt2smt}\xspace}
\def\ezsmt{{\sc ezsmt}\xspace}
\def\bprolog{{\sc bprolog}\xspace}
\def\smodels{{\sc smodels}\xspace}
\def\cmodels{{\sc cmodels}\xspace}
\def\ezcsp{\textsc{ezcsp}\xspace}
\def\ez{\textsc{ez}\xspace}
\def\clingcon{{\sc clingcon}\xspace}
\def\gecode{{\sc gecode}\xspace}
\def\acsolver{{\sc acsolver}\xspace}
\def\idp{{\sc idp}}
\def\ac{{\sc ac}}
 \def\minisat{{\sc minisat}\xspace}
\def\bbox{\textit{black-box}\xspace}
\def\cbox{\textit{clear-box}\xspace}
\def\gbox{\textit{grey-box}\xspace}
\def\@copyrightspace{\relax}
\begin{document}

\selectlanguage{american}

\title{Constraint Answer Set Solver EZCSP and
Why Integration Schemas Matter\footnote{{\bf This version of the paper corrects   inaccurate claims occurring in Section 2.3 and the beginning of Section~3 of the paper that appeared in print at  TPLP 17(4): 462-515 (2017).  We are grateful to Sara Biavaschi and Agostino Dovier for bringing this issue to our attention. The changes are marked by footnotes.}}}


\author[M. Balduccini and Y. Lierler]
{MARCELLO BALDUCCINI\\
Department of Decision \& System Sciences\ \\
 Saint Joseph's University\\
              \email{marcello.balduccini@gmail.com} \\
\and YULIYA LIERLER \\
Computer Science Department \\
 University of Nebraska at Omaha \\
 \email{ylierler@unomaha.edu}
}

\date{}

\maketitle

\begin{abstract}
Researchers in 
answer set programming and constraint programming have spent
significant efforts in the development of
hybrid languages and solving algorithms combining the strengths of these
traditionally separate fields. These efforts resulted 
in a new research area: constraint answer set programming. 
Constraint answer set programming languages and  systems proved to be  successful at
providing  declarative, yet efficient solutions to problems
involving hybrid reasoning tasks. 
One of the main contributions of this paper is  the first
comprehensive account of  the constraint answer set language and solver {\ezcsp},
 a mainstream representative of this 
research area that has been used in various successful applications.
We also develop an extension of the transition systems proposed by Nieuwenhuis et al. in 2006 to capture Boolean satisfiability solvers. We use this extension to describe 
the   \ezcsp algorithm and prove formal claims about it.
The design and algorithmic details behind \ezcsp clearly demonstrate that the development of the hybrid systems of this kind is challenging. 
Many questions arise when one faces various design choices in an attempt to maximize system's benefits.
One of the key decisions that a developer of a hybrid solver makes 
is settling on a particular integration schema within its implementation. Thus, another important contribution of this paper is a 
thorough case study  based on \ezcsp, focused on the various integration schemas that it provides.\\
\emph{Under consideration in Theory and Practice of Logic Programming (TPLP).}

\end{abstract}

\section{Introduction}
Knowledge representation and automated reasoning are  areas of Artificial Intelligence 
 dedicated to understanding and  automating 
 various aspects of reasoning. Such traditionally separate fields of AI
  as answer set programming~(ASP)~\cite{nie99,mar99,bre11},
  propositional satisfiability~(SAT)~\cite{gom08}, 
 constraint (logic) programming (CSP/CLP)~\cite{ros08,jm94} are all
 representatives of distinct directions of research in automated reasoning.  
The algorithmic techniques
 developed in subfields of automated reasoning are often suitable for
 distinct reasoning tasks. For example, ASP proved to be
 an effective  tool for formalizing elaborate planning tasks, whereas
 CSP/CLP is efficient in solving difficult scheduling
 problems. However, when solving complex practical problems, such as scheduling
 problems  involving elements of planning or defeasible statements,
 methods that go beyond traditional ASP and CSP are sometimes
 desirable. By allowing one to leverage specialized algorithms for solving different parts of the problem at hand, these methods may yield better performance than the traditional ones. Additionally, by allowing the use of constructs that more closely fit each sub-problem, they may yield solutions that conform better to the knowledge representation principles of flexibility, modularity, and elaboration tolerance.
This has led, in recent years, to the development of a plethora of {\em hybrid} approaches that combine
algorithms and systems from different AI subfields.  
Constraint logic programming~\cite{jm94},  satisfiability modulo theories~(SMT)~\cite{nie06},
HEX-programs~\cite{eit05a}, and VI-programs~\cite{cal07}
are all examples of this current.
Various projects have focused on the intersection of ASP and CSP/CLP,
which  resulted in the development of a new field of
study, often called {\em constraint answer set programming}~(CASP)
\cite{elk04,mel08,geb09,bal09,dre11a,lier14}.

Constraint answer set programming allows one to combine the best of two
  different automated reasoning worlds: 
(1) the non-monotonic modeling capabilities and SAT-like solving
technology of ASP and (2) constraint processing techniques for 
effective reasoning over non-Boolean constructs.
This new area has 
already demonstrated promising results, including
the development of CASP solvers {\acsolver}~\cite{mel08},
{\clingcon}~\cite{geb09}, {\ezcsp}~\cite{bal09}, 
{\sc idp}~\cite{idp}, {\sc inca}~\cite{dre11a}, {\sc
  dingo}~\cite{jan11},  {\sc mingo}~\cite{liu12}, \aspmt~\cite{Bartholomew2014}, and~\ezsmt~\cite{sus16a}.  
CASP opens new horizons for
 declarative programming applications. 
 For instance, research by Balduccini~\cyp{bal11} on the design of
 CASP language \textsc{ezcsp} and on the corresponding solver, which
 is nowadays one of the mainstream representatives of CASP systems,
\tcm{ yielded an elegant, declarative solution to a complex industrial scheduling
 problem.}

\tcm{Unfortunately, achieving the level of integration  of CASP languages and systems requires  
nontrivial expertise in multiple areas, such as SAT, ASP and
CSP.}
%
%
\tcg{The crucial message transpiring from the developments in the CASP research area  is the  need for  standardized techniques to integrate computational
methods spanning these multiple research areas. 
We argue for
undertaking an effort to mitigate the difficulties  of designing hybrid
reasoning systems by identifying general principles for their
development and studying the implications of various design
choices. Our work constitutes a step in this
direction. Specifically, the main contributions of our work are: }
\begin{enumerate}
 \item \tcg{The paper provides the  first comprehensive account of the constraint answer set
   solver {\ezcsp} \cite{bal09},  a long-time representative of the CASP subfield.
We define the language of \ezcsp and illustrate its use on several examples.
We also account for algorithmic and implementation details behind  \ezcsp.}
\item \tcg{To  present the   \ezcsp algorithm and prove formal claims about the system, 
we develop an extension of the transition systems
proposed by Nieuwenhuis et al.~(\citeyear{nie06}) for capturing
SAT/SMT algorithms. This extension is well-suited for formalizing the behavior of the \ezcsp solver.}
\item \tcg{We also  conduct  a case study exploring a crucial aspect in building hybrid 
systems -- the integration schemas of participating solving
methods. This allows us to
shed  light on the costs and benefits of this key design choice in hybrid systems.
  For the case study, \tcm{we use
{\ezcsp} as a research tool and study its performance with three  integration schemas: ``{\bbox}'',
``\gbox'', and ``\cbox''.} One of the main conclusions of the study is that there is no single
choice of integration schema that achieves best performance in all
cases. As such, the choice of integration schema should be made as easily
configurable as it is the choice of particular branching heuristics in
SAT or ASP solvers. 
\tcm{The} work on analytical and architectural aspects \tcm{described in this paper} shows how this can be achieved.} 
\end{enumerate}

\st
We begin this paper with a review of the ASP and CASP formalisms. In
Section~\ref{sec:ezcsplang} we present the {\ezcsp} language. 
In Section~\ref{sec:integration} we provide a broader context to our
study by drawing a 
parallel between CASP and SMT solving. 
Then we review the
integration schemas used in the design of hybrid solvers focusing on the schemas
 implemented in {\ezcsp}.  
Section~\ref{sec:ezcspsolver} provides a comprehensive account of
algorithmic aspects of {\ezcsp}. 
 Section~\ref{sec:domain} introduces the details of the ``integration schema'' case study.
In particular, it provides details on the application domains considered, namely, Weighted Sequence, Incremental Scheduling, and
Reverse Folding. The section also discusses the variants
of the encodings we compared.
Experimental results and their analysis form Section~\ref{sec:experiments}. Section~\ref{sec:relsys} provides a brief overview of CASP solvers. The conclusions are stated in Section~\ref{sec:concl}.

Parts of this paper have been earlier
 presented at ASPOCP 2009~\cite{bal09}
and at PADL~2012~\cite{bl12}.

\section{Preliminaries}

\subsection{Regular Programs}\label{sec:prelim}
A {\sl regular (logic) program} is a finite set of rules of the form
\beq
\ba {l}
a_0\ar a_1,\dots, a_l,not\ a_{l+1},\dots,not\ a_m,
not\ not\ a_{m+1},\dots,not\ not\ a_n,
\ea
\eeq{e:rule}
where $a_0$ is $\bot$ (false) or an atom, and
each $a_i$ ($1\leq i\leq n$) is an atom so that $a_i\neq a_j$ ($1\leq i<j\leq l$),
$a_i\neq a_j$ ($l+1\leq i<j\leq m$), and $a_i\neq a_j$ ($m+1\leq i<j\leq n$).
This  is a special case of 
programs with nested expressions \cite{lif99d}.
The expression $a_0$ is the \emph{head} of a
rule~(\ref{e:rule}). 
If $a_0=\bot$, we often omit $\bot$ from the notation. We call such
rules {\em denials}. 
We call 
the right hand side of the arrow in (\ref{e:rule}) 
 the {\sl body}. If a body of a rule is empty, we call such rule a {\em fact} and omit the $\ar$ symbol. We also ignore the order of the elements in the rule. For example, rule $a \ar b,c$ is considered identical to $a \ar c,b$. 
If $B$ denotes
the {\sl body} of~(\ref{e:rule}),
 we write $B^{pos}$ for the elements
occurring in the {\sl positive} part
of the body, i.e., $B^{pos}=\{a_1,\dots, a_l\}$.
We frequently identify the body of~(\ref{e:rule}) 
with the conjunction of its elements (in which $not\ not$ is dropped and 
$not$ is replaced with the classical negation connective $\neg$):
\beq
  a_1\wedge \dots\wedge  a_l\wedge  \neg a_{l+1}\wedge \dots\land\neg
  a_m \wedge   a_{m+1}\wedge \dots\land a_n.
\eeq{e:bodycl}
Similarly, we often interpret a rule~(\ref{e:rule}) as a clause
\beq
a_0\vee \neg  a_1\vee \dots\vee \neg a_l\vee   
a_{l+1}\vee \dots \vee a_m \vee \neg a_{m+1}\vee \dots \vee \neg a_n
\eeq{e:cl}
In the case when  $a_0=\bot$ in~(\ref{e:rule}),  $a_0$ is absent in (\ref{e:cl}).
Given a program $\Pi$, we write $\Pi^{cl}$ for the set of clauses of the form~(\ref{e:cl}) 
corresponding to the rules in $\Pi$.
 
\paragraph{Answer sets}
An {\em alphabet} is a set of atoms.
The semantics of logic programs relies on the notion of answer sets, which are sets of
atoms. \tcb{A {\em literal} is an atom $a$ or its negation $\neg a$.
We say that a set $M$ of literals is {\em complete} over alphabet
$\sigma$ if, for any atom $a$ in $\sigma$, either $a\in M$ or $\neg
a\in M$. } 
It is easy to see how a set $X$ of atoms over some
alphabet~$\sigma$ can be identified with a complete and consistent
set of literals over~$\sigma$
(an interpretation): \[\{a\mid a\in X\}\cup\{\neg a\mid a\in\sigma\setminus X\}.\] 
We now restate the definition of an answer set due to Lifschitz et
al.~(\citeyear{lif99d})   in a form convenient for
our purposes. 
By $\At(\Pi)$ we denote the set of all atoms that occur  in $\Pi$.
The \emph{reduct} $\Pi^X$ of a regular program~$\Pi$ with
respect to set $X$ of atoms over $\At(\Pi)$ is obtained from~$\Pi$ by deleting each
rule~\eqref{e:rule} such that $X$ does not satisfy its body (recall
that we identify its body with  \eqref{e:bodycl}), and
replacing each remaining rule~\eqref{e:rule} by $a_0\ar B^{pos}$.
A set $X$ of atoms is an
\emph{answer set} of a regular program $\Pi$ if it is subset minimal among
the sets of atoms satisfying $(\Pi^X)^{cl}$. 
For example, consider a program consisting of a single rule
$$
a\ar\ not\ not\ a.
$$
This program has two answer sets: set $\emptyset$ and set $\{a\}$.
Indeed, $(\Pi^\emptyset)^{cl}$ is an empty set of clauses so that $\emptyset$ is subset minimal among the sets of atoms that satisfies  $(\Pi^\emptyset)^{cl}$.
On the other hand, $(\Pi^{\{a\}})^{cl}$ consists of a single clause $a$.
Set $\{a\}$ is subset minimal among the sets of atoms that satisfies $(\Pi^{\{a\}})^{cl}$.

A \emph{choice rule}
construct $\{a\}\ar B$~\cite{nie00} of 
the {\sc lparse} language
 can be seen as an abbreviation for a rule
\hbox{$a\ar\ not\ not\ a, B$}~\cite{fer05b}. We adopt this abbreviation in
the rest of the paper. 

\begin{example}
Consider  the regular program 
\begin{equation}\label{ex:acp}
\ba l
  \{switch\}.\\
  lightOn\ar\ switch, not\ am.\\
  \ar not\ lightOn. \\
  \{am\}.\\
\ea
\end{equation}
Intuitively, the rules of the program  state the following:
\begin{itemize}
\item action {\em switch} is exogenous,
\item {\em light} is {\em on}
 only if an action  {\em switch}
occurs during the non-{\em am} hours,
\item it is impossible that {\em light} is not {\em on} (in other words, {\em light} must be {\em on}).
\item it is either the case that these are {\em am} hours or not,
\end{itemize}
This program's only answer set is $\{switch, \ lightOn\}$.
\end{example}

We now state an important result that summarizes the effect of adding denials to a program. 
For a set $M$ of literals, by~$M^{+}$   we denote  the set of
positive  literals in $M$. For instance, $\{a,c,\neg b\}^{+}=\{a,c\}$.
\begin{thm}[Proposition~2 from~\cite{lif99d}]\label{prop:constraints} 
 For  a program $\Pi$, a set $\Gamma$ of denials, and 
a consistent and complete set $M$ of literals over $\At(\Pi)$, 
$M^{+}$ is an answer set of $\Pi\cup\Gamma$ if and only if 
$M^{+}$ is an answer set of $\Pi$ and $M$ is a model of $\Gamma^{cl}$. 
\end{thm}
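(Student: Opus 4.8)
The plan is to reduce everything to the behavior of denials under the reduct, using the fact that for a denial the clause~\eqref{e:cl} is exactly the negation of the body conjunction~\eqref{e:bodycl}. Throughout I identify the set of atoms $M^{+}$ with the interpretation it determines; since $M$ is consistent and complete over $\At(\Pi)$ (and the atoms of $\Gamma$ are among those of $\Pi$, so that ``$M$ is a model of $\Gamma^{cl}$'' is well defined), this interpretation coincides with $M$ itself. Hence ``$M^{+}$ satisfies the body $B$ of a rule'' and ``$M$ satisfies $B$'' mean the same thing, and this identification is the only place where completeness and consistency of $M$ are used.

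The heart of the argument is a local observation about a single denial $d$ with body $B$ in $\Gamma$. On the one hand, its clausal form $d^{cl}$ is the negation of the conjunction~\eqref{e:bodycl}, so $M\models d^{cl}$ iff $M^{+}$ does not satisfy $B$. On the other hand, in the reduct $(\Pi\cup\Gamma)^{M^{+}}$ the denial $d$ is deleted precisely when $M^{+}\not\models B$, while if $M^{+}\models B$ it is retained and replaced by $\bot\ar B^{pos}$, whose clause $\bigvee_{a\in B^{pos}}\neg a$ is falsified by $M^{+}$ (because $M^{+}\models B$ forces every atom of $B^{pos}$ into $M^{+}$). Summing over all denials yields the dichotomy on which the whole proof rests: if $M\models\Gamma^{cl}$ then $\Gamma^{M^{+}}=\emptyset$ and therefore $((\Pi\cup\Gamma)^{M^{+}})^{cl}=(\Pi^{M^{+}})^{cl}$, whereas if $M\not\models\Gamma^{cl}$ then $((\Pi\cup\Gamma)^{M^{+}})^{cl}$ contains a clause that $M^{+}$ does not satisfy.

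With the dichotomy in hand the two directions are short. For the forward direction, assume $M^{+}$ is an answer set of $\Pi\cup\Gamma$; then $M^{+}$ in particular satisfies $((\Pi\cup\Gamma)^{M^{+}})^{cl}$, which by the dichotomy excludes the case $M\not\models\Gamma^{cl}$, so $M\models\Gamma^{cl}$. Consequently the two clause sets coincide, and subset-minimality of $M^{+}$ among the models of $((\Pi\cup\Gamma)^{M^{+}})^{cl}$ is the same as subset-minimality among the models of $(\Pi^{M^{+}})^{cl}$, making $M^{+}$ an answer set of $\Pi$. For the converse, assume $M^{+}$ is an answer set of $\Pi$ and $M\models\Gamma^{cl}$; since $\Gamma^{M^{+}}=\emptyset$ the clausal form of $(\Pi\cup\Gamma)^{M^{+}}$ equals that of $\Pi^{M^{+}}$, so the minimality witness for $\Pi$ transfers verbatim and $M^{+}$ is an answer set of $\Pi\cup\Gamma$.

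I expect no conceptual obstacle, only bookkeeping. The one point that must be handled with care is the claim that a retained denial always contributes a clause falsified by the candidate $M^{+}$: this is what guarantees that adding denials can only destroy the answer-set status of $M^{+}$ and never create it, and it is precisely the step that makes the forward direction go through. Verifying it amounts to checking that $M^{+}\models B$ entails $B^{pos}\subseteq M^{+}$, together with the boundary case in which $B^{pos}$ is empty and the retained denial becomes the empty (always false) clause, which $M^{+}$ still fails to satisfy.
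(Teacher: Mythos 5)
Your proof is correct, but there is nothing in the paper to compare it against: the paper does not prove Theorem~\ref{prop:constraints} at all, importing it verbatim as Proposition~2 of~\cite{lif99d}. Your argument is therefore a self-contained verification of a cited result, and it goes through. The load-bearing observation is exactly the right one: for a denial $d$ with body $B$, the clause $d^{cl}$ is the De Morgan negation of the body conjunction \eqref{e:bodycl}, so $d$ is deleted from the reduct $(\Pi\cup\Gamma)^{M^{+}}$ precisely when $M\models d^{cl}$, and otherwise it survives as $\bot\ar B^{pos}$, whose clause $\bigvee_{a\in B^{pos}}\neg a$ is falsified by $M^{+}$ (since $M^{+}\models B$ forces $B^{pos}\subseteq M^{+}$; your treatment of the boundary case $B^{pos}=\emptyset$, giving the empty clause, is also needed and correct). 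From this dichotomy, together with the fact that the reduct and the clause translation both distribute over unions of rules, both directions follow exactly as you state: either the reducts' clause sets coincide and minimality transfers verbatim, or $M^{+}$ fails to satisfy $((\Pi\cup\Gamma)^{M^{+}})^{cl}$ and hence cannot be an answer set of $\Pi\cup\Gamma$. One hypothesis you add silently deserves the explicit flag you give it: the atoms of $\Gamma$ must lie in $\At(\Pi)$ for the statement to make sense and for $\At(\Pi\cup\Gamma)=\At(\Pi)$; this matches how the paper actually invokes the theorem (in Propositions~\ref{lem:asp-entail}--\ref{lem:entail} the denials are explicitly taken over $\At(\Pi)$), so your reading is faithful to the intended use.
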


\paragraph{Unfounded sets}
For a literal $l$, by $\overline l$ we denote its complement.
For a conjunction (disjunction) $B$ of literals,  $\overline B$ stands for
 a disjunction (conjunction) of the complements of literals.
 For instance, \hbox{$\ol{a\wedge \neg b}=\neg a \vee b$.}
 We sometimes associate disjunctions and conjunctions of literals with the sets containing these literals. 
 For example, conjunction $\neg a \wedge b$ and disjunction $\neg a \vee b$ are associated with the set
  $\{\neg a,~b\}$ of literals.   
 By $Bodies(\Pi,a)$ 
we denote  the set of the bodies of all rules of program~$\Pi$ with the head~$a$
(including the empty body that can be seen as $\top$).  

A set $U$ 
of atoms occurring in a program~$\Pi$ is \emph{unfounded}~\cite{van91,lee05} 
on a consistent set $M$ of literals
with respect to $\Pi$ if for every $a\in U$ and every 
$B\in Bodies(\Pi,a)$, $M\cap \overline{B}\not=\emptyset$ or 
$U\cap B^{pos}\neq\emptyset$. 
We say that a consistent and complete set $M$ of literals over $\At(\Pi)$
is a {\em model} of $\Pi$ if it is a model  of ${\Pi}^{cl}$.

We 
now state a result that can  be seen as an alternative
way to characterize  answer sets of a program.

\begin{thm}[Theorem on Unfounded Sets from~\cite{lee05}]\label{prop:leone} 
 For  a program $\Pi$ and a consistent and complete set $M$ of
 literals over $\At(\Pi)$, 
$M^{+}$ is an answer set of~$\Pi$ if and only if 
$M$ is a model of~$\Pi$ and
$M$ contains no non-empty  subsets unfounded on~$M$ 
with respect to~$\Pi$.
\end{thm}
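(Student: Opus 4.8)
The plan is to unfold the definition of an answer set and reduce the statement to two self-contained lemmas about the reduct $(\Pi^{M^{+}})^{cl}$. Recall that $M^{+}$ is an answer set of $\Pi$ precisely when $M^{+}$ is a subset-minimal model of $(\Pi^{M^{+}})^{cl}$. Accordingly I would separate the \emph{model} condition from the \emph{minimality} condition and handle each through one lemma.

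First I would prove a satisfaction lemma: for any consistent and complete $M$ over $\At(\Pi)$, the set $M^{+}$ satisfies $(\Pi^{M^{+}})^{cl}$ if and only if $M$ is a model of $\Pi$. For the ``if'' direction I take a rule \eqref{e:rule} whose body is satisfied by $M$; the corresponding reduct rule $a_0\ar B^{pos}$ yields the clause $a_0\vee\neg B^{pos}$, and since $B^{pos}\subseteq M^{+}$ the original clause \eqref{e:cl} forces $a_0\in M^{+}$, so the reduct clause holds. For the ``only if'' direction I run the same computation backwards: a rule with body unsatisfied by $M$ contributes a clause already satisfied through $\overline B$, while a rule with satisfied body forces $a_0\in M^{+}$ via the reduct clause, so \eqref{e:cl} holds. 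Denials ($a_0=\bot$) are folded in by observing that a denial whose body is satisfied by $M$ would produce the unsatisfiable reduct clause $\neg B^{pos}$; hence satisfaction of the reduct is equivalent to all denial bodies being falsified, which is exactly the denial part of $M\models\Pi$.

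The heart of the argument is a correspondence lemma: assuming $M\models\Pi$, a set $U\subseteq M^{+}$ is unfounded on $M$ with respect to $\Pi$ if and only if $M^{+}\setminus U$ satisfies $(\Pi^{M^{+}})^{cl}$. For the forward direction I assume $U$ unfounded, pick a reduct clause $a_0\vee\neg B^{pos}$ arising from a rule with body satisfied by $M$, and suppose $M^{+}\setminus U$ falsifies it; then $a_0\notin M^{+}\setminus U$ while $B^{pos}\subseteq M^{+}\setminus U$. Since $M\models\Pi$ forces $a_0\in M^{+}$, we get $a_0\in U$, and unfoundedness applied to $a_0$ and $B\in Bodies(\Pi,a_0)$ gives $M\cap\overline B\neq\emptyset$ or $U\cap B^{pos}\neq\emptyset$, both contradicted (the body is satisfied, and $B^{pos}$ is disjoint from $U$). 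For the converse I assume $M^{+}\setminus U$ models the reduct and verify the unfoundedness condition for each $a\in U$ and each $B\in Bodies(\Pi,a)$: if $B$ is not satisfied by $M$ then $M\cap\overline B\neq\emptyset$; if it is, the reduct clause $a\vee\neg B^{pos}$ is satisfied while $a\notin M^{+}\setminus U$, so some atom of $B^{pos}$ lies outside $M^{+}\setminus U$ and hence in $U$, giving $U\cap B^{pos}\neq\emptyset$.

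With both lemmas in hand the theorem follows mechanically: $M^{+}$ is an answer set iff it is a minimal model of the reduct; the model part is the first lemma (equivalent to $M\models\Pi$), and minimality fails exactly when some proper subset $X$ of $M^{+}$ (with $X\subseteq M^{+}$ and $X\neq M^{+}$) models the reduct, which by the second lemma corresponds, via $U=M^{+}\setminus U$ taken as $U=M^{+}\setminus X$, to the existence of a nonempty unfounded subset $U\subseteq M^{+}$. I expect the main obstacle to be the bookkeeping in the correspondence lemma around degenerate bodies — in particular the empty body (viewed as $\top$) and bodies with $B^{pos}=\emptyset$, where the reduct clause collapses to the unit clause $a$ — and around denials, which must be shown to drop out of the reduct under the hypothesis $M\models\Pi$ so that they do not disturb the bijection between proper submodels and nonempty unfounded subsets.
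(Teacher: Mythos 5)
Your proof is correct, but be aware that the paper contains no proof of this statement to compare against: Theorem~\ref{prop:leone} is imported as a citation (the Theorem on Unfounded Sets of \cite{lee05}, in the tradition of \cite{van91}), stated in the preliminaries and then used later, e.g., in the proof of Theorem~\ref{prop:ezsmg}. What you have written is therefore a self-contained replacement for the citation, and it holds up. The decomposition into a satisfaction lemma ($M^{+}\models(\Pi^{M^{+}})^{cl}$ iff $M\models\Pi^{cl}$) and a correspondence lemma (given $M\models\Pi$, a set $U\subseteq M^{+}$ is unfounded on $M$ iff $M^{+}\setminus U$ satisfies $(\Pi^{M^{+}})^{cl}$) is exactly what reduces ``answer set $=$ minimal model of the reduct'' to ``model of $\Pi$ plus no non-empty unfounded subset'': minimality fails precisely for proper submodels $X\subsetneq M^{+}$ of the reduct, and these are in bijection with the non-empty unfounded sets $U=M^{+}\setminus X$. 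Both lemma proofs are sound, including the degenerate cases you flag: denials indeed vanish from the reduct once $M\models\Pi$ (their clause $\overline{B}$ forces $M\not\models B$, so they are deleted), and the case $B^{pos}=\emptyset$ only makes the correspondence argument terminate earlier, since the unit reduct clause $a$ directly contradicts $a\in U$. One point worth making explicit, because the paper's rules \eqref{e:rule} allow double negation: an occurrence $not\ not\ a_i$ contributes $a_i$ to the body conjunction \eqref{e:bodycl} but is excluded from $B^{pos}$; your argument only ever appeals to ``body satisfied by $M$'' and to $B^{pos}$, which is exactly how both the reduct and the unfoundedness condition are phrased, so the proof covers regular programs and not merely normal ones. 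Two slips of wording, neither harmful: the reduct clause $\neg B^{pos}$ of a denial with satisfied body is in general falsified by $M^{+}$ rather than unsatisfiable (it is unsatisfiable only when $B^{pos}=\emptyset$), and ``via $U=M^{+}\setminus U$'' should read ``via $X=M^{+}\setminus U$, i.e., $U=M^{+}\setminus X$''.
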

Theorem~\ref{prop:leone} is essential in understanding key
features of modern answer set solvers. It provides a description of
properties of answer sets that are utilized by so called 
``propagators'' of  solvers. 
Section~\ref{sec:ezcspsolver} relies on these properties.

\subsection{Logic Programs with Constraint Atoms}\label{sec:LPCA}

A constraint satisfaction problem (CSP) is defined as a triple
$\langle X,D,C \rangle$, where $X$ is a set of variables, $D$ is a
domain -- a (possibly infinite) set of values -- 
and $C$ is a set of constraints. Every constraint is  a
pair $\langle t,R \rangle$, where $t$ is
an $n$-tuple of variables and $R$ is an $n$-ary relation on $D$. 
\tcb{When arithmetic constraints are considered, it is common to
  replace explicit representations of relations as collections of
  tuples by arithmetic expressions. For instance, for a domain of
  three values $\{1,2,3\}$ and binary-relation $R$ consisting of
  ordered pairs $(1,1),(2,2)$, and $(3,3)$, we can abbreviate the
  constraint $\langle x,y,R \rangle$ by the expression  $x=y$. We
  follow this convention in the rest of the paper. 
}

An
{\em evaluation}
of the variables is a function from the set of variables to the domain
of values, $\nu:X \rightarrow D$. An evaluation~$\nu$ {\em satisfies} a constraint
$\langle (x_1,\ldots,x_n),R \rangle$ if $(v(x_1),\ldots,v(x_n)) \in R$. A
{\em solution} is an evaluation that satisfies all constraints.

For a constraint $c=\langle t,R \rangle$, where $D$ is the domain of its
variables and $k$ is the arity of~$t$, we call 
the constraint $\overline{c}=\langle t,D^k\setminus R\rangle$  the \emph{complement}
 of $c$. Obviously, an evaluation of variables in $t$
satisfies $c$ if and only if it does not satisfy $\overline{c}$.

For a set $M$ of literals and alphabet~$\at$, by $M_{|\at}$  we denote   the set of
 literals over alphabet~$\at$ in $M$. For example, $\{\neg a,~b,c\}_{|\{a,b\}}=\{\neg a,~b\}$.


A {\em logic program with constraint atoms} (CA program) is a quadruple
\[\langle \Pi,\cC,\gamma,D \rangle,\] 
where
\begin{itemize}
\item $\cC$ is an alphabet, 
\item $\Pi$ is a regular logic program such that 
(i) $a_0\not\in\cC$ for every rule (\ref{e:rule}) in $\Pi$ 
and (ii) $\cC\subseteq \At(\Pi)$,
\item $\gamma$ is  a function from $\cC$ to constraints, and
\item $D$ is a domain.
\end{itemize}
We refer to the elements of alphabet $\cC$ as \emph{constraint} atoms.
We call all atoms occurring in $\Pi$ but not in $\cC$ \emph{regular}.
To distinguish constraint atoms from the constraints 
to which these atoms  are mapped,  
we use bars to denote that an expression  is a
constraint atom. For instance, $|x<12|$ and  $|x\geq 12|$ denote
constraint atoms. Consider alphabet $\cC_1$ that consists of these two
constraint atoms and a function  $\gamma_1$ that 
maps atoms in  $\cC_1$ to constraints as follows:
  $\gamma_1(|x<12|)$ maps
to an inequality $x<12$, whereas 
$\gamma_1(|x\geq 12|)$ maps
to an inequality $x\geq 12$.
Clearly,~$\overline{\gamma_1(|x<12|)}$ maps into 
an inequality $x\geq 12$; similarly
$\overline{\gamma_1(|x\geq 12|)}$ maps into 
an inequality $x< 12$. 

\begin{example}\label{example:p1}
Here we present a sample CA program  
\beq
\cP_1=\langle \Pi_1,\cC_1,\gamma_1,D_1 \rangle,
\eeq{eq:sample}
where 
$D_1$ is a range of integers from $0$ to $23$ and
 $\Pi_1$ is  a regular program 
 \begin{equation}\label{ex:acp2}
\ba l
  \{switch\}.\\
  lightOn\ar\ switch, not\ am.\\
  \ar not\ lightOn.\\ 
  \{am\}.\\
  \ar not\ am,\ |x< 12|. \\
  \ar am,\ |x\geq 12|. \\
\ea
\end{equation}
The first four rules of $\Pi_1$ follow the lines of~\eqref{ex:acp}.
The last two rules intuitively state that
\begin{itemize}
\item it is impossible that these are not {\em am} hours while
  variable $x$ has a value less than $12$,  
\item it is impossible that these are {\em am} hours while variable
  $x$ has a value greater or equal to $12$. 
\end{itemize}
Note how $x$  represents specific hours of a day. Also worth noting is the fact that $x$ has a global scope. This is different from the traditional treatment of variables in CLP, Prolog, and ASP.
\end{example}

Let 
$\cP=\langle \Pi,\cC, \gamma,D \rangle$ be a CA  program. 
By $\cV_{\cP}$ we denote the set of variables occurring in the constraints $\{\gamma(c)\mid c\in
\cC\}$.
For instance, $\cV_{\cP_1}=\{x\}$.
By $\Pi[\cC]$ we denote~$\Pi$ extended
with choice rules $\{c\}$ for each constraint atom $c\in\cC$. We call program  $\Pi[\cC]$ an
{\em asp-abstraction} of $\cP$. For example,
an asp-abstraction $\Pi_1[\cC_1]$ of 
any CA program whose first two elements of its quadruple are $\Pi_1$ and $\cC_1$
consists of rules~\eqref{ex:acp2} and the following choice rules
\[
\ba l
  \{|x<12|\}\\
  \{|x\geq 12|\}.\\
\ea
\]
Let $M$ be a consistent set of literals over $\At(\Pi)$. 
By $K_{\cP,M}$ we denote the following constraint satisfaction problem 
$$\langle \cV,~D,~ \{\gamma(c)|c\in M_{|\cC}, c\in\mathcal{C} \}\cup  \{\overline{\gamma(c)}|\neg c\in M_{|\cC}, c\in\mathcal{C} \}\rangle,$$ 
where $\cV$ is the set of variables occurring in the constraints of
the last element of the triple above. 
We call this constraint satisfaction problem a {\em csp-abstraction} of $\cP$ with respect to  $M$. 
For instance, a csp-abstraction of~$\cP_1$ w.r.t. $\{|x\geq 12|,~\neg|x< 12|,~ lightOn\}$, or
$K_{\cP_1,\{|x\geq 12|,~\neg|x< 12|,~ lightOn\}}$, is
\beq\langle \{x\},D_1, \{x\geq 12\}\rangle.
\eeq{ex:csp}
It is easy to see that  $\cV_{\cP}$ consists of the variables that
occur in a csp-abstractions of $\cP$ w.r.t. any
consistent sets of literals over $\At(\Pi)$.

Let $\cP=\langle \Pi,\cC,\gamma,D \rangle$ 
be a CA program and  $M$ 
be  a consistent and complete set  of literals over
$\At(\Pi)$.
We say that $M$
  is an {\em answer set} of 
$\cP$ if  
\begin{enumerate}
\labitem{(a1)}{as.1}  $M^{+}$ is an answer set of $\Pi[\cC]$ and
\labitem{(a2)}{as.2}  the constraint satisfaction problem $K_{\cP,M}$
 has a solution.
\end{enumerate}
\tcb{Let~$\alpha$ be an evaluation from the set $\cV_{\cP}$ of variables  to the set $D$ of values.}
We say that a pair $\langle M, \alpha \rangle$ is an {\em extended
   answer set} of $\cP$ if $M$ is an answer set of $\cP$ and $\alpha$
 is a solution to~$K_{\cP,M}$.

\begin{example}\label{ex:answerset}
Consider sample CA program $\cP_1=\langle \Pi_1,\cC_1,\gamma_1,D_1 \rangle$ given in
\eqref{eq:sample}.
Consistent and complete set 
\[
M_1=\{switch, lightOn, \neg am, \neg |x<12|, |x\geq 12|\}
\]
of literals over $\At(\Pi_1)$ is such that $M_1^{+}$ is the answer set of $\Pi_1[\cC_1]$.
The constraint satisfaction problem $K_{\cP_1,M_1}$ is presented in~\eqref{ex:csp}.
Pairs
$$ \langle M_1, x=12 \rangle 
$$
and
$$ \langle M_1, x=23 \rangle 
$$
are two among twelve extended answer sets of
program~\eqref{eq:sample}.
\end{example}

\subsection{CA Programs and Weak Answer Sets}
In the previous section we introduced CA programs that capture programs that a CASP solver such as {\clingcon} processes. The \ezcsp solver interprets similar programs slightly differently.  To illustrate the difference we introduce the notion of a weak answer set for a CA program and discuss the differences with earlier definition.

Let $\cP=\langle \Pi,\cC,\gamma,D \rangle$ 
be a CA program and  $X$ 
be  a set of atoms  over
$\At(\Pi)$.
We say that $X$
is a {\em weak answer set} of 
$\cP$ if  
\begin{enumerate}
        \labitem{(w1)}{as1.1}  $X$ is an answer set of $\Pi[\cC]$ and
        \labitem{(w2)}{as2.2}  the constraint satisfaction problem 
        \beq 
        \langle \cV_{\cP},~D,~ \{\gamma(c)|c\in X_{|\cC} \}\rangle,
        \eeq{eq:kpm}    
 has a solution.
\end{enumerate}
\tcb{Let~$\alpha$ be an evaluation from the set $\cV_{\cP}$ of variables  to the set $D$ of values.}
We say that a pair $\langle X, \alpha \rangle$ is an {\em extended
        weak answer set} of $\cP$ if $X$ is an answer set of $\cP$ and $\alpha$
is a solution to~\eqref{eq:kpm}.

The key difference between the definition of an answer set and a weak answer set of a CA program lies in their conditions~\ref{as.2} and~\ref{as2.2}. (It is obvious that we can always identify a complete and consistent set of literals with the set of its atoms.)
 To illustrate the difference between the two semantics, consider simple CA program:
$$
\ba{l}
night \ar |x< 6|. \\
am \ar|x<12|. \\
\ea
$$
This program has  three answer sets and four weak answer sets that we present in the following table. 
\[\ba{l|l}
\hbox{Answer Sets:}& \hbox{Weak Answer Sets:}\\
\{night, am, |x<6|,  |x<12|\}&\{night, am, |x<6|,  |x<12|\}\\
\{\neg night, am, \neg |x<6|,  |x<12|\}&\{am,  |x<12|\}\\
\{\neg night, \neg am, \neg |x<6|,  \neg |x<12|\}&\emptyset\\
& \{night, |x<6|\}
\ea
\]
Note how the last weak answer set listed yields an unexpected solution, as it suggests that it is currently night but not am hours.

Another sample program is due to Sara Biavaschi and Agostino Dovier\footnote{This example is new to the online version of the paper. It substitutes the erroneous claim found in the TPLP version of the paper.}:
$$
\ba{l}
\ar |x < 12|.\\
\ar |x > 10|.
\ea
$$
This program has no answer sets, but has a weak answer set, $\emptyset$.
Arguably, weak answer sets exhibit an {\em agnostic} attitude toward the values of variables associated with constraints that have no corresponding constraint atoms occurring in the answer sets.

\section{The {\ezcsp} Language}\label{sec:ezcsplang}
The origins of the constraint answer set solver {\ezcsp} and 
of its language go back to the development of an 
 approach for integrating ASP and constraint programming, 
in which ASP is viewed as a specification language for constraint satisfaction
problems~\cite{bal09}. In this approach, (i) ASP programs are written in such a way that some of their rules, and corresponding atoms found in their answer sets, encode the
desired constraint satisfaction problems; (ii) both the answer sets and the solutions to the
constraint problems are computed with arbitrary off-the-shelf solvers. This is achieved by an architecture that treats the underlying solvers as black boxes and relies on translation procedures for linking the ASP solver to constraint solver. The translation procedures extract from an answer set of an ASP program the constraints that must be satisfied and translate them into a constraint problem in the input language of the
corresponding constraint solver. At the core of the {\ezcsp} specification language is relation $required$, which is
used to define the atoms that encode the constraints of the constraint satisfaction problem. 

We start this section by defining the notion of propositional ez-programs and
introducing their semantics via a simple mapping into  CA programs under weak answer set semantics.  
Then, we move to describing the full language available to CASP practitioners in the {\ezcsp} system.
The tight relation between ez-programs and CA programs
makes the following evident: although the origins of {\ezcsp} are rooted in providing
a simple, yet effective framework for modeling constraint satisfaction problems, the
{\ezcsp} language developed into a full-fledged constraint answer set programming
formalism.  This also yields another interesting observation: constraint
answer set programming can be seen as a  declarative modeling framework
utilizing  constraint satisfaction solving technology.
\tcg{The MiniZinc language~\cite{mar08a} is another remarkable effort
  toward a declarative modeling framework supported by the constraint
  satisfaction technology.  It goes beyond the scope of this paper
  comparing the expressiveness of the constraint answer set
  programming and MiniZinc.} 


\paragraph{Syntax}
An {\em ez-atom} is an expression of the form
\[
required(\beta),
\]
where $\beta$ is an atom. 
Given  an alphabet $\cC$, the corresponding alphabet of {\em ez-atoms}  $\cC^{\ez}$ is obtained in a straightforward way. 
For instance, from an  alphabet $\cC_1=\{|x<12|,~  |x\geq 12|\}$
we obtain  
$\cC^{\ez}_1=\{required(|x<12|),~  required(|x\geq 12|)\}$.

A {\em (propositional) ez-program} is a tuple \[\langle E,\cA,\cC,\gamma,D \rangle,\] where
\begin{itemize}
\item $\cA$ and $\cC$ are alphabets so that
$\cA$, $\cC$, $\cC^{\ez}$ do not share the  elements,
\item $E$ is a regular logic program so that $\At(E)=\cA\cup\cC^\ez$ \tcg{and atoms from $\cC^\ez$ only occur in the head of its rules},
\item $\gamma$ is  a function from $\cC$ to constraints, and
\item $D$ is a domain.
\end{itemize}

\paragraph{Semantics}
We  define the semantics of ez-programs via a mapping to CA programs under the weak answer set semantics. 
Let  $\cE=\langle E,\cA,\cC,\gamma,D \rangle$ be an ez-program. 
By $\cP_{\cE}$ we denote the CA program  \[\langle \Pi,\cC, \gamma,D \rangle,\] 
where $\Pi$ extends $E$ by two denials
\beq
\ba{l}
\ar required(\beta),\ not\ \beta\\
\ar not\ required(\beta),\ \beta
\ea
\eeq{denialequired}
for every ez-atom $required(\beta)$ occurring in $E$.\footnote{Formula~\eqref{denialequired} is an extension of the corresponding formula from the TPLP version of the paper, which only included the first of the two denials. The latest definition of the semantics of ez programs coincides with the semantics of these programs introduced in~\cite{bal09a}. The proof of this claim can be obtained in a straightforward way from the definition of reduct and its minimal models.} 
For   a  set $X$  of atoms over
\hbox{$\At(E)\cup \cC$}
\tcb{and an evaluation $\alpha$  from the set $\cV_{\cP_{\cE}}$ of variables  to the set $D$ of values},
we say that 
\begin{itemize}
\item $X$
 is an {\em answer set} of 
$\cE$
 if $X$ is a {\em weak answer set} of $\cP_{\cE}$;
\item  a pair $\langle X, \alpha \rangle$ is an {\em extended
   answer set} of $\cE$ if $\langle M, \alpha\rangle$ is an extended weak answer set of $\cP_{\cE}$.
\end{itemize}

\begin{example}\label{ex:ezprog1}
We now illustrate the concept of an ez-program on our running example of the ``light domain''.
Let~$\cA_1$ denote the alphabet $\{switch,~lightOn,~am\}$.
 Let $E_1$ be a collection of rules 
\begin{equation}\label{ex:acpez}
\ba l
  \{switch\}.\\
  lightOn\ar\ switch, not\ am.\\
  \ar not\ lightOn.\\ 
  \{am\}.\\
  required(|x \geq 12|)\ar not\ am.\\
  required(|x<12|) \ar am. \\
\ea
\end{equation}
where $\cC^{ez}_1$ forms an alphabet of ez-atoms.
Let $\cE_1$ be an ez-program
\beq
\langle E_1,\cA_1,\cC_1,
\gamma_1,D_1 \rangle.
\eeq{eq:sample2}
The first member of the quadruple $\cP_{\cE}$ is composed of the rules from~\eqref{ex:acpez} and of the denials
\begin{equation}\label{ex:acpez2}
\ba l
 \ar required(|x \geq 12|),\ not\ |x \geq 12|.\\
 \ar required(|x<12|),\ not\ |x<12|.\\
 \ar not\ required(|x \geq 12|),\ |x \geq 12|.\\
 \ar not\ required(|x<12|),\ |x<12|.

\ea
\end{equation}
Ez-program $\cE_1$ has one answer set
\[
N_1=\{switch, lightOn,  required(|x \geq 12|), |x\geq 12|)\}\\
\]
Pairs
\beq 
\langle N_1, x=12 \rangle 
\eeq{eq:sample1}
and
$ \langle N_1, x=23 \rangle 
$
are two among twelve extended answer sets of ez-program~$\cE_1$.
\end{example}

At the core of the \ezcsp system is its \emph{solver} algorithm
(described in Section~\ref{sec:ezcspsolver}), which takes as an input a propositional ez-program and computes its answer sets. 
In order to allow for more compact specifications, the {\ezcsp} system supports an extension of
the language of propositional ez-programs, which we call {\ez}.
The language is described by means of examples next. Its definition can be found in~\ref{sec:ez-language}. Also, the part of formalization of the Weighted Sequence domain presented in Section~\ref{sec:domain} illustrates the use of the so called reified constraints, which form an important modeling tool of the {\ez} language.

\begin{example}\label{ex:ezcspprog1}
In the {\ez} language,
the ez-program $\cE_1$ introduced in ${\mathit Example}$~\ref{ex:ezprog1}  is specified as follows: 
\[
\ba l
  cspdomain(fd).\\
  cspvar(x,0,23).\\
  \{switch\}.\\
  lightOn\ar\ switch, not\ am.\\
  \ar not\ lightOn.\\ 
  \{am\}.\\
  required(x \geq 12)\ar not\ am.\\
  required(x<12) \ar am. \\
\ea
\]
The first rule specifies domain of possible csp-abstractions, which in
this case is that of finite-domains. The second rule states that $x$
is a variable over this domain ranging between $0$ and $23$. The rest
of the program follows the lines of \eqref{ex:acpez} almost verbatim. 

It is easy to see that denial~\eqref{denialequired} poses the restriction on the form of the answer sets of ez-programs so that an atom of the form $required(\beta)$ appears in an answer set if and only if  an atom of the form $\beta$ appears in it. Thus, when the \ezcsp system computes answer sets for the \ez programs, it omits $\beta$ atoms. For instance, for the  program of this example \ezcsp will output:
$$\{ cspdomain(fd), cspvar(x,0,23), required(x \geq 12), switch, lightOn, x=12  \}$$
to encode extended answer set~\eqref{eq:sample1}. 
\end{example}
\begin{example}
The {\ez} language includes support for a number of commonly-used global constraints, such as $all\_dif\!ferent$ and
$cumulative$ (more details in~\ref{sec:ez-language}). For
example, a possible encoding of the classical ``Send$+$More$=$Money''
problem is: 
\[
\begin{array}{l}
cspdomain(fd)\ldotp \\
cspvar(s,0,9)\ldotp\ \ cspvar(e,0,9)\ldotp \ \ldots\ cspvar(y,0,9)\ldotp \\

required(s*1000 + e*100 + n*10 + d + \\
\hspace*{.63in} m*1000 + o*100 + r*10 + e = \\
\hspace*{.63in} m*10000 + o*1000 + n*100 + e*10 + y)\ldotp \\

required(s \neq 0)\ldotp \ \ \ 
required(m \neq 0)\ldotp \\

required(all\_dif\!ferent([s,e,n,d,m,o,r,y]))\ldotp
\end{array}
\]
As before, the first rule specifies the domain of  possible
csp-abstractions. The next set of rules specifies the variables and
their ranges. The remaining rules state the main constraints of the
problem. Of those, the final rule encodes an $all\_dif\!ferent$
constraint, which informally requires all of the listed variables to
have distinct values. The argument of the constraint is an extensional
list of the variables of the CSP. An extensional list is a list that
explicitly enumerates all of its elements. 

A simple renaming of the variables of the problem allows us to
demonstrate the intensional specification of lists: 
\[
\begin{array}{l}
cspdomain(fd)\ldotp \\
cspvar(v(s),0,9)\ldotp\ \ cspvar(v(e),0,9)\ldotp \ \ldots\ cspvar(v(y),0,9)\ldotp \\

required(v(s)*1000 + v(e)*100 + v(n)*10 + v(d) + \\
\hspace*{.63in} v(m)*1000 + v(o)*100 + v(r)*10 + v(e) = \\
\hspace*{.63in} v(m)*10000 + v(o)*1000 + v(n)*100 + v(e)*10 + v(y))\ldotp \\

required(v(s) \neq 0)\ldotp \ \ \ 
required(v(m) \neq 0)\ldotp \\

required(all\_dif\!ferent([v/1]))\ldotp
\end{array}
\]
The argument of the global constraint in the last rule is intensional
list $[v/1]$, which is a shorthand for the extensional list, $[v(d),
  v(e), v(m), v(n), \ldots]$, of all variables of the form~$v(\cdot)$.\end{example}

\begin{example}\label{ex:riddle2}
Consider a riddle:
\begin{quote}
There are either 2 or 3 brothers in the Smith family.
There is a 3 year difference between one brother and the next
(in order of age) for all pairs of brothers.
The age of the eldest brother is twice the age of the youngest.
The youngest is at least 6 years old.
\end{quote}
Figure~\ref{fig:ex:riddle2} presents the
 \ez program that captures the riddle\footnote{The
  reader may notice that the program features the use of arithmetic
  connectives both within terms and as full-fledged
  relations. Although, strictly speaking, separate connectives should
  be introduced for each type of usage, we abuse notation slightly and
  use context to distinguish between the two cases.}. We refer to this program as~$P_1$. 
\begin{figure}
\fbox{
$
\begin{array}{l}
\mbox{\emph{\% There are either 2 or 3 brothers in the Smith family.}}\\
num\_brothers(2) \ar \lpnot num\_brothers(3)\ldotp\\
num\_brothers(3) \ar \lpnot num\_brothers(2)\ldotp\\[.1in]
index(1)\ldotp\ \ index(2)\ldotp\ \ index(3)\ldotp\\[.1in]
is\_brother(B) \ar
\hspace*{.1in}index(B), 
\hspace*{.1in} index(N),
\hspace*{.1in}num\_brothers(N),
\hspace*{.1in}B \leq N.\\[.1in]
eldest\_brother(1)\ldotp
\\
youngest\_brother(B) \ar 
\hspace*{.1in}index(B),
\hspace*{.1in}num\_brothers(B)\ldotp\\[.1in]
cspdomain(fd)\ldotp\\[.1in]
cspvar(age(B),1,80) \ar index(B), is\_brother(B)\ldotp\\[.1in]
\mbox{\emph{\% 3 year difference between one brother and the next.}}\\
required(age(B1) - age(B2) = 3)) \ar \\
\hspace*{.5in}index(B1), index(B2),
\hspace*{.1in}is\_brother(B1), is\_brother(B2),
\hspace*{.1in}B2 = B1 + 1.\\[.1in]
\mbox{\emph{\% The eldest brother is twice as old as the youngest.}}\\
required(age(BE) = age(BY) * 2) \ar\\ 
\hspace*{.5in}index(BE), index(BY), 
\hspace*{.1in}eldest\_brother(BE), 
\hspace*{.1in}youngest\_brother(BY)\ldotp\\[.1in]
\mbox{\emph{\% The youngest is at least 6 years old.}}\\
required(age(BY) \geq 6) \ar 
\hspace*{.1in}index(BY),
\hspace*{.1in}youngest\_brother(BY).
\end{array}
$}
\caption{The \ez program for the riddle of ${\mathit Example}$~\ref{ex:riddle2}\label{fig:ex:riddle2}}
\end{figure}
Note how this program contains non-constraint variables
 $B$, $N$, $B1$, $B2$, $BE$, and $BY$.
As explained in~\ref{sec:ez-language}, the grounding process
that occurs in the {\ezcsp} system transforms 
these rules into propositional (ground) rules using the same approach commonly applied to ASP\ programs. 
For instance, the last rule of program~$P_1$
 results in three ground rules
\[
\begin{array}{l}
required(age(1) \geq 6) \ar
\hspace*{.1in}index(1),
\hspace*{.1in}
youngest\_brother(1).\\
required(age(2) \geq 6) \ar 
\hspace*{.1in}index(2),
\hspace*{.1in}youngest\_brother(2).\\
required(age(3) \geq 6) \ar 
\hspace*{.1in}index(3),
\hspace*{.1in}youngest\_brother(3).\\
\end{array}
\]

The ez-program that corresponds to $P_1$  has a unique extended answer set
\[
\begin{array}{l}
\langle \{ num\_brothers(3), \\
\ \ \ \ cspvar(age(1),1,80), \ldots, cspvar(age(3),1,80), \ldots \},\\
\ \ \{ (age(1)=12, age(2)=9, age(3)=6 \} \rangle.
\end{array}
\]
The extended answer set states that there are $3$ brothers, of age $12$, $9$, and $6$
respectively. 

\end{example}

\if 0
\oi{Examples of Language in use. (a) ``light domain'' from above in
  EZCSP language \textbf{\textcolor{red}{[MB: ACTUALLY I DON'T THINK
        WE CAN CAPTURE ELEGANTLY THE LIGHT DOMAIN IN EZCSP UNLESS WE
        USE THE \#defined PART, BECAUSE THE CSP CONSTRAINT $x<12$ IS
        IN THE BODY OF A RULE]}}; (b) CASP weighted sequence? or IS or
  RF, the one 
  that is shortest of the above? Or something from ASPOCP'09 paper?}
\fi

\section{Satisfiability Modulo Theories and its Integration Schemas}\label{sec:integration}
We are now ready to draw a parallel between constraint answer set
programming and  satisfiability modulo theories. To do so, we first define the SMT problem by following the lines of~\cite[Section 3.1]{nie06}.
A {\em theory} $T$ is a set of closed first-order formulas. 
A CNF formula~$F$ (a set of clauses) over a fixed finite set of ground (variable-free) first-order atoms is {\em
  $T$-satisfiable} if there exists an interpretation, in first-order sense, that satisfies  every formula in set $F\cup T$. 
Otherwise, it is called $T$-unsatisfiable.
Let $M$ be a set of ground literals. 
We say that  $M$ is a
$T$-model of~$F$ if 
\begin{enumerate}
\labitem{(m1)}{m.1} $M$ is a model of $F$ and 
\labitem{(m2)}{m.2} 
$M$, seen as a conjunction of its elements, is  {\em  $T$-satisfiable}. 
\end{enumerate}
The SMT problem for a theory $T$ is the problem of
determining, given a formula $F$, whether~$F$ has a $T$-model.
It is easy to see that in the CASP problem, $\Pi[\cC]$ in
condition~\ref{as.1}  plays the role of~$F$ in~\ref{m.1}  in the SMT 
problem. At the same time, condition~\ref{as.2} is similar to condition~\ref{m.2}.

\st
Given this tight conceptual relation between the SMT and CASP formalisms,
it is not surprising that  solvers stemming from these different
research areas share several design traits even though these areas
have been developing to a large degree independently (CASP being a 
younger field). 
We now review major integration schemas/methods  in 
 SMT solvers by following~\cite[Section 3.2]{nie06}.
During the review, we discuss how different CASP solvers account for one or
 another method.
This discussion allows us to systematize  design patterns of solvers
present both in SMT and CASP so that their relation
becomes clearer. Such a transparent view on  architectures  of solvers
immediately  translates  findings in one area to the other. Thus,
although  the case study conducted as part of our research uses CASP
technology only, we expect similar results to hold for
SMT, and for the construction of hybrid automated reasoning methods in
general. To the best of our knowledge there was no analogous effort -- 
thorough evaluation of effect of integration schemas on  performance of systems --  
in the SMT community.


\st
In every approach discussed, 
 a formula $F$ is treated as a satisfiability formula,
where each atom is considered as a
propositional symbol, {\em forgetting} about the theory $T$. Such a view
naturally invites an idea of {\em lazy} integration: the formula
$F$ is given to a SAT solver, if the solver determines that $F$ is
unsatisfiable then $F$ has no $T$-model. Otherwise, a
propositional model $M$ of $F$ found by the SAT solver is checked by a
specialized $T$-solver, which determines whether~$M$ is 
$T$-satisfiable. If so, then it is also a $T$-model of~$F$,
otherwise~$M$ 
is used to build a clause $C$ that precludes this assignment,
i.e., $M\not\models C$ while $F \cup C$ has a $T$-model if and only
if $F$ has a $T$-model. The SAT solver is invoked on an augmented
formula $F\cup C$. This process is repeated until the procedure finds
a $T$-model or returns unsatisfiable.  Note how in this approach two
automated reasoning systems -- a SAT solver and a specialized
$T$-solver -- interleave: a SAT solver generates ``candidate
models'' whereas  a $T$-solver tests whether these models are in
accordance with requirements specified by theory $T$. We find that it
is convenient to introduce the following terminology for the future
discussion: a {\em base} 
solver and a {\em theory} solver, where  the base solver is responsible
for generating candidate models and the {\em theory} solver  is responsible
for any additional testing required for stating whether a candidate
model is indeed a solution. In this paper we refer to lazy evaluation
as  {\bbox} to be consistent with the terminology often used in
CASP. 

\st
It is easy to see how the {\bbox} integration policy
translates to the realm of CASP. Given a CA program~$\cP$, an answer set solver serves the role
of base solver by generating answer sets of  the asp-abstraction of~$\cP$
(that are ``candidate answer sets'' for~$\cP$) and then uses a
CLP/CSP solver as a theory solver to verify whether
condition~\ref{as.2} is satisfied on these candidate answer sets. 
Originally, constraint answer set solver {\ezcsp} embraced the {\bbox} integration
approach in its design.\footnote{\cite{bal09} refers to 
{\bbox} integration of {\ezcsp} as {\em lightweight}
integration of ASP and
constraint programming.} 
To  solve a CASP problem via {\bbox} approach, {\ezcsp} offers a
user various options for  base  and  theory solvers.
Table \ref{table:solvers} shows some of the currently
available solvers. 
The variety of possible
configurations of {\ezcsp} illustrates how {\bbox} integration
provides great flexibility in choosing underlying {base} and {theory}
solving technology in addressing problems of interest.
\tcb{ In principle, this approach allows for a simple integration of constraint programming systems that use
 MiniZinc and FlatZinc\footnote{\url{http://www.minizinc.org/}.}
 as their front-end description languages. 
Implementing support for this interface is a topic of future research. 
}

\begin{table}[htpb]
\begin{center}
\begin{tabular}{|l||l|}
\hline
Base Solvers&Theory Solvers\\
\hline
{\smodels}~\cite{sim02}& {\sc SICStus Prolog}~\cite{sicstus}\\
{\clasp}~\cite{geb07}&{\bprolog}~\cite{zho12}\\
{\cmodels}~\cite{giu06}&\\
\hline
\end{tabular}
\caption{Base and theory solvers supported by {\ezcsp}}\label{table:solvers}
\end{center}
\end{table}

\st
The Davis-Putnam-Logemann-Loveland (DPLL) procedure~\cite{dav62} is a
backtracking-based search algorithm for deciding the satisfiability of
a propositional CNF formula. DPLL-like procedures form the basis for
most modern SAT solvers as well as answer set solvers. If a DPLL-like
procedure underlies a base solver in the SMT and CASP tasks then it
opens a door to several refinements of {\bbox} integration. We now
describe these refinements.

\st
In the {\bbox} integration approach a base solver is
invoked iteratively. Consider the SMT task: 
a CNF formula $F_{i+1}$ of the $i+1^{\hbox{th}}$ iteration to a SAT solver
consists of a CNF formula $F_{i}$ of the $i^{\hbox{th}}$ iteration and an
additional clause (or a set of clauses). 
Modern DPLL-like solvers commonly implement such
technique as {\em incremental} solving. For instance,
incremental SAT-solving allows the user to solve several SAT problems
$F_1,\dots,F_n$ one after another (using a single invocation
of the solver), if $F_{i+1}$
results from $F_i$ by adding clauses. In turn, 
the solution to~$F_{i+1}$ may benefit 
from the knowledge obtained
during solving  $F_1,\dots,F_i$.
Various modern SAT-solvers, including {\sc minisat} \cite{minisat,minisat-manual},
  implement interfaces for incremental 
SAT solving.
Similarly, the answer set solver {\cmodels} implements an interface
that  allows the user to solve several ASP problems
$\Pi_1,\dots,\Pi_n$ one after another, if $\Pi_{i+1}$
results from $\Pi_i$ by adding a set of denials.
It is natural to utilize
incremental {\sc dpll}-like procedures for enhancing the {\bbox} integration
protocol: we call this refinement
{\gbox} integration. In this approach, rather than invoking a base
solver from scratch, an incremental interface provided by a solver is
used to implement the iterative process. CASP solver {\ezcsp}
implements {\gbox} integration using the above mentioned
incremental interface by {\cmodels}.

\st
Nieuwenhuis et al.~(\citeyear{nie06}) also review such integration techniques
used in SMT
as {\em on-line SAT solver} and {\em theory propagation}.
We refer to on-line SAT solver integration as {\cbox} here. In this approach, the $T$-satisfiability of the ``partial''
assignment is checked, while the assignment is being
built by the DPLL-like procedure. This can be done fully eagerly as
soon as a change in the partial assignment occurs, or with a certain frequency, for instance at some regular
intervals. Once the inconsistency is detected, the SAT
solver is instructed to backtrack.  The theory propagation
approach extends the {\cbox} technique by allowing
a theory solver  not only to verify that a current
partial assignment is ``$T$-consistent`` but also to detect literals in
a CNF formula that must hold given the current partial assignment.  

The CASP solver  {\clingcon} exemplifies the implementation of 
the theory propagation 
integration schema in CASP. It utilizes answer set solver {\sc clasp}
as the base solver and constraint processing system {\sc
  gecode}~\cite{gecode} as the theory solver. The
 {\acsolver} and {\idp} systems are other CASP solvers that implement 
the theory propagation integration schema.
In the scope of this work, the CASP solver {\ezcsp} was extended to
implement 
the {\cbox} integration schema using {\cmodels}.
It is worth noting that all of the above approaches consider
the theory solver as a black box, disregarding its internal
structure and only accessing it through its external API.
To the best of our knowledge, no systematic investigation exists
of integration schemas that also take advantage of the internal 
structure of the theory solver.

\tcg{An important point is due here. Some key details about the \gbox
  and \cbox integration schemas have been omitted in the presentation
  above for simplicity. To make these integration schemas perform
  efficiently, learning -- a sophisticated solving technique stemming
  from SAT~\cite{zha01} -- is used to capture the information
  (explanation) retrieved due to necessity to backtrack upon theory
  solving. This information is used by the base solver to avoid
  similar conflicts.  
Section~\ref{sec:abstractsolving} presents the details on the integration schemas formally and points at the key role of learning.}


\section{The {\ezcsp} Solver}\label{sec:ezcspsolver}

In this section, we describe an algorithm for computing answer sets of CA programs. 
A specialization of this algorithm to ez-programs is used in the {\ezcsp} system.
For this reason, we begin by giving an overview of the architecture of the {\ezcsp} system. We then describe the solving algorithm.

\subsection{Architecture}
\begin{figure}[htbp]
\begin{center}
\includegraphics[clip=true,trim=0 170 0 0,width=1\columnwidth]{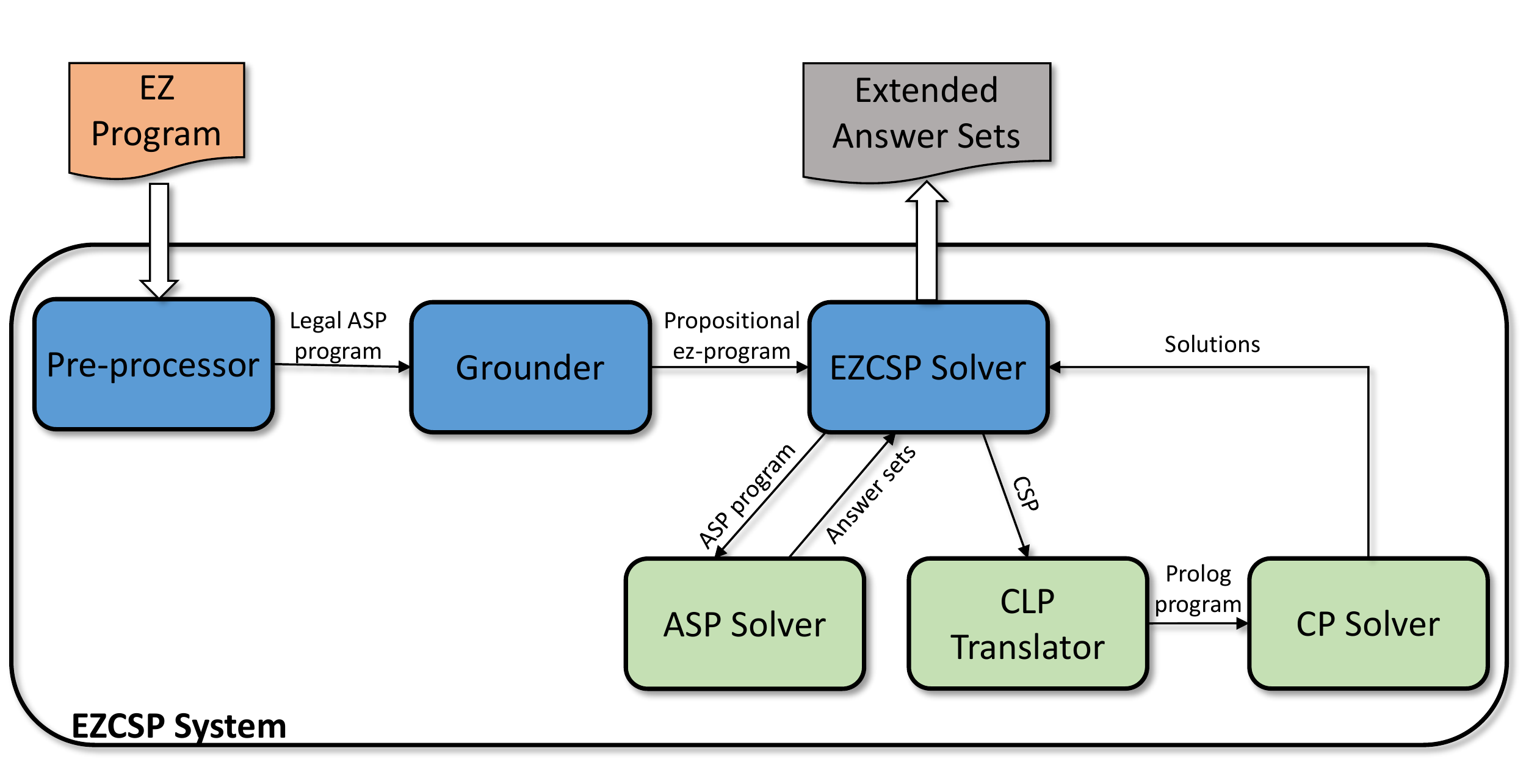}
\caption{Architecture of the {\ezcsp} system}\label{fig:arch}
\end{center}
\end{figure}
Figure~\ref{fig:arch} depicts the architecture of the system, while the narrative below elaborates on  the essential details.
Both are focused on the functioning of the \ezcsp system while
employing the \bbox integration schema. 

The first step of the execution of \ezcsp (corresponding to the \emph{Pre-processor} component in the figure) consists in running a {pre-processor, which} 
translates an input {\ez} program  into a syntactically legal ASP program.
This is accomplished by replacing the occurrences of arithmetic functions and operators
in $required(\beta)$ atoms by auxiliary function symbols. For example, an atom
$required(v>2)$ is replaced by $required(gt(v, 2))$. A similar process is also
applied to the notation for the specification of lists. For instance, an atom $required(all\_dif\!ferent([x,y]))$
is translated into $required(all\_dif\!ferent(list(x,y)))$.
The \emph{Grounder} component of the architecture transforms the
resulting program into its propositional equivalent, a regular
program, using an off-the-shelf {grounder} such as
{\gringo}~\citep{geb07b}. 
This regular program is then passed to the {\emph{{\ezcsp} Solver} component}.

The \emph{{\ezcsp} Solver} component iterates ASP and constraint
programming computations by invoking the corresponding components of
the architecture. Specifically, the \emph{ASP\ Solver} component
computes an answer set of the regular program using an off-the-shelf 
{ASP solver}, such as {\cmodels} or {\clasp}.\footnote{The ASP solver to be used can be specified by command-line options.}
If an answer set is found, the {\ezcsp} solver runs the \emph{CLP
  Translator} component, which maps the csp-abstraction corresponding
to the computed answer set 
to a {Prolog program}. The program is then passed to the
\emph{CP\ Solver} component, which uses the {CLP solver} embedded in a
Prolog interpreter, e.g. SICStus or {\bprolog},\footnote{The Prolog
  interpreter is also selectable by command-line options.} to solve
the CSP instance.  
For example, for the sample program presented in ${\mathit Example}$~\ref{ex:ezcspprog1}, the \ezcsp system produces the answer set\footnote{
        For illustrative purposes, we show the {\ez} atom $required(x \geq 12)$
        in place of the ASP atom obtained from the pre-processing phase.}:
$$\{ cspdomain(fd), cspvar(x,0,23), required(x \geq 12), switch, lightOn  \}.$$
The csp-abstraction of the program with respect to this answer set 
is translated into a Prolog rule:
\[
solve([x,V_x])\  :-\ \ V_x \geq 0,\ \ V_x \leq 23,\ \ V_x \geq12,\ \ labeling([V_x]).
\]
In this case, the CLP solver embedded in the Prolog interpreter will find feasible assignments for variable $V_x$. 
The head of the rule is designed
to return a complete solution and to ensure that the variable names used in the
{\ez} program are associated with the corresponding values. The interested reader
can refer to \cite{bal09} for a complete description of the  translation process.

Finally, the \emph{{\ezcsp} Solver} component gathers the solutions to the respective csp-abstraction and combines
them with the answer set obtained earlier to form extended answer sets. Additional extended
answer sets are computed iteratively by finding other
answer sets and the solutions to the corresponding csp-abstractions.

\subsection{Solving Algorithm}\label{sec:abstractsolving}
We are now ready to present our algorithm for computing answer sets of CA programs.
In earlier work, Lierler~(\citeyear{lier14}) demonstrated how 
the CASP language {\clingcon}~\cite{geb09} as well as the essential subset
of the CASP language {\em AC} of {\acsolver}~\cite{mel08} are captured by
CA programs. Based on those results, the
algorithm described in this section can be immediately used as an
alternative to the procedures 
implemented in systems {\clingcon} and {\acsolver}.

Usually, software systems are described by means of pseudocode. The fact
that {\ezcsp} system follows  an ``all-solvers-in-one" philosophy 
combined with a variety of integration schemas complicates the task of
describing it in this way. For example, one
configuration of {\ezcsp} may invoke answer set solver {\clasp} via
{\bbox} integration for
enumerating answer sets of an asp-abstraction of CA program, whereas another
may invoke {\cmodels} via {\gbox} integration for the same task. 
Thus, rather than committing
ourselves to a pseudocode description, we follow a path
 pioneered by Nieuwenhuis et al.~(\citeyear{nie06}). In their work, the
 authors devised a graph-based 
abstract framework for describing backtrack search procedures for
Satisfiability and SMT.
Lierler~(\citeyear{lier14}) designed a similar abstract framework that
 captures the {\ezcsp} algorithm in two cases: 
(a) when
{\ezcsp} invokes answer set solver {\smodels} via
{\bbox} integration  for enumerating answer sets of asp-abstraction program, and
(b) when
{\ezcsp} invokes answer set solver {\clasp} via
{\bbox} integration. 

In the present paper we introduce
 a graph-based abstract framework that is well suited  for capturing the
similarities and differences of the various configurations of
{\ezcsp} stemming from 
different integration schemas. 
The graph-based representation also allows us to speak of termination and correctness of procedures supporting these configurations. 
\tcb{In this framework,  nodes of a graph representing a solver
  capture its possible ``states of computation'', while edges describe
  the possible transitions from one state to another.}   
It should be noted that the graph representation is too high-level to capture some
specific features of answer set solvers or constraint
programming tools used within different {\ezcsp} configurations. For example, the graph incorporates no information on the heuristic used to select a literal upon which a decision needs to be made. This is not an issue, however:
stand alone answer set solvers have been analyzed
and compared theoretically in
the literature~\cite{angejasc06a},~\cite{giu08}~\cite{lier11} as well as empirically
in biennial answer set programming
competitions~\cite{aspcomp1},~\cite{aspcomp2},~\cite{aspcomp3}.
At the same time, {\ezcsp} treats  constraint
programming tools as ``black-boxes'' in all of its configurations. 

\subsubsection{Abstract \ezcsp}
Before  introducing  the transition system (graph) capable of
capturing a variety of \ezcsp procedures, we start by developing  some
required terminology. To make this section more self-contained we also restate some notation and definitions from earlier sections.
Recall that for a set $M$ of literals, by~$M^{+}$   we denote  the set of
positive  literals in $M$. For a CA program $\cP=\langle \Pi,\cC,\gamma,D \rangle$, a consistent and complete set $M$ of literals over $\At(\Pi)$ 
is an {\em answer set} of 
$\cP$ if  
\begin{enumerate}
        \labitem{(a1)}{as.1.REPEAT}  $M^{+}$ is an answer set of $\Pi[\cC]$ and
        \labitem{(a2)}{as.2.REPEAT}  the constraint satisfaction problem $K_{\cP,M}$
        has a solution.
\end{enumerate}
As noted in Section~\ref{sec:prelim} we can view denials as clauses. Given a denial $G$, by $G^{cl}$ we will denote a clause that corresponds to $G$, e.g.,  $(\ar not \ pm)^{cl}$ denotes a clause $pm$. We may sometime abuse the notation and refer to a clause as if it were a denial. For instance, a clause $pm$ may denote a denial $\ar not \ pm$. 

We now introduce notions for CA programs that parallel ''entailment'' for the case of classical logic formulas. 
Let $\cP=\langle \Pi,\cC,\gamma,D \rangle$  be a CA program. 
We say that $\cP$ 
\emph{asp-entails} a denial $G$ over~$\At(\Pi)$ when 
for every complete and consistent set $M$ of literals over $\At(\Pi)$ such that
\hbox{$M^{+}$} is an answer set of~$\Pi[\cC]$,~$M$ satisfies~$G^{cl}$. 
In other words, a denial is asp-entailed  if any set of literals that satisfies the condition~\ref{as.1.REPEAT} of the answer set definition is such that it satisfies this denial.
CA program~$\cP$ 
\emph{cp-entails} a denial $G$  over~$\At(\Pi)$ when 
(i) for every answer set $M$ of~$\cP$,~$M$ satisfies~$G^{cl}$
 and (ii)
there is a complete and consistent set $N$ of literals over $\At(\Pi)$ such that~$N^{+}$ is an answer set of~$\Pi[\cC]$ and $N$ does not satisfy~$G$. 
Notice that if a denial $G$ is such that a CA program~$\cP$ cp-entails~$G$, then~$\cP$ does not asp-entail $G$.
We say that~$\cP$ 
\emph{entails} a denial $G$ when $\cP$ either asp-entails or cp-entails $G$.
For a consistent set $N$ of literals over  $\At(\Pi)$ and   a literal~$l$,
we say that~$\cP$ 
\emph{asp-entails}  $l$  with respect to $N$, 
if  for every complete and consistent set $M$ of literals over~$\At(\Pi)$ such that \hbox{$M^{+}$}  is an answer set
of~$\Pi[\cC]$ and $N\subseteq M$, $l\in M$. 

\begin{example}
Recall program $\cP_1=\langle \Pi_1,\cC_1,\gamma_1,D_1 \rangle$ from ${\mathit Example}$~\ref{example:p1}. It is easy to check that
denial $\ar not\ lightOn.$  (or,  in other words  clause $lightOn$) is asp-entailed by $\cP_1$.
Also, $\cP_1$ asp-entails literals $switch$ and $\neg am$
 with respect to set $\{lightOn\}$ (and also with respect to~$\emptyset$).

Let regular program $\Pi_2$ extend program $\Pi_1$ from ${\mathit Example}$~\ref{example:p1}  by rules 
$$
\ba{l}
\{pm\}.\\
\ar\ not\ pm,\ |x\geq 12|.\\
\ar\  |x< 12|.\\
\ea
$$
Consider a CA program $\cP_2$ that differs from $\cP_1$ only by
substituting its first member $\Pi_1$ of quadruple $\langle
\Pi_1,\cC_1,\gamma_1,D_1 \rangle$ by $\Pi_2$. Denial $\ar not\ pm$ (or clause $pm$)  is
cp-entailed by~$\cP_2$. Indeed, the only answer set of this program is $\{pm, \neg |x<12|, |x\geq 12|\}$. This set satisfies 
$(\ar not\ pm)^{cl}$, in other words, clause $pm$. Consider set  
$\{\neg pm, \neg |x<12|, |x\geq 12|\}$ that does not satisfy clause $pm$. Set of atoms 
 $\{\neg pm, \neg |x<12|, |x\geq 12|\}^{+}=\{|x\geq 12|\}$ 
is an answer set of $\Pi_2[\cC_1]$. 
\end{example}
For a CA program $\cP=\langle \Pi,\cC,\gamma,D \rangle$ and a set $\Gamma$ of denials, 
by $\cP[\Gamma]$ we denote the CA program $\langle \Pi\cup\Gamma,\cC,\gamma,D \rangle$.  
  The following propositions capture important properties underlying the introduced entailment notions.
\begin{proposition}\label{lem:asp-entail}
For a CA program $\cP=\langle \Pi,\cC,\gamma,D \rangle$
 and a set $\Gamma$ of denials over $\At(\Pi)$ if $\cP$ asp-entails every denial in~$\Gamma$ then 
 (i)
programs $\Pi[\cC]$ and $(\Pi\cup\Gamma)[\cC]$
 have the same answer sets;
 (ii) CA programs~$\cP$ and $\cP[\Gamma]$ 
have the same answer sets.
\end{proposition}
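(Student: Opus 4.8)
The plan is to reduce both parts to Theorem~\ref{prop:constraints} together with the structural identity
\[(\Pi\cup\Gamma)[\cC]=\Pi[\cC]\cup\Gamma,\]
which holds because $\Pi[\cC]$ is obtained from $\Pi$ by adjoining the choice rules $\{c\}$ for $c\in\cC$, and this adjunction commutes with taking the union with the denials in $\Gamma$. Part~(i) is the crux; part~(ii) then follows almost immediately, since the constraint satisfaction problem attached to a set of literals is insensitive to the presence of denials.

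For part~(i), I would apply Theorem~\ref{prop:constraints} with the regular program $\Pi[\cC]$ playing the role of its ``$\Pi$'' and with the same set $\Gamma$ of denials. This is legitimate because $\cC\subseteq\At(\Pi)$, so the choice rules introduce no new atoms and $\At(\Pi[\cC])=\At(\Pi)$; likewise the denials of $\Gamma$ are over $\At(\Pi)$. Theorem~\ref{prop:constraints} then states that, for a complete and consistent set $M$ of literals over $\At(\Pi)$, the set $M^{+}$ is an answer set of $\Pi[\cC]\cup\Gamma$ if and only if $M^{+}$ is an answer set of $\Pi[\cC]$ and $M$ is a model of $\Gamma^{cl}$. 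One direction is now immediate from the identity above: every answer set of $(\Pi\cup\Gamma)[\cC]=\Pi[\cC]\cup\Gamma$ is an answer set of $\Pi[\cC]$. For the converse, I would take any complete and consistent $M$ with $M^{+}$ an answer set of $\Pi[\cC]$. Since $\cP$ asp-entails every denial of $\Gamma$, the definition of asp-entailment guarantees that $M$ satisfies $G^{cl}$ for each $G\in\Gamma$, that is, $M\models\Gamma^{cl}$. Hence the extra hypothesis of Theorem~\ref{prop:constraints} is met for free, and $M^{+}$ is an answer set of $\Pi[\cC]\cup\Gamma$. The two programs therefore have exactly the same answer sets.

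For part~(ii), I would recall that an answer set of a CA program is a complete and consistent set $M$ of literals satisfying conditions~\ref{as.1.REPEAT} and~\ref{as.2.REPEAT}. First note that, as $\Gamma$ consists of denials over $\At(\Pi)$, we have $\At(\Pi\cup\Gamma)=\At(\Pi)$, so $\cP$ and $\cP[\Gamma]$ are evaluated over the same alphabet and the notion of completeness is shared. Condition~\ref{as.1.REPEAT} for $\cP$ asks that $M^{+}$ be an answer set of $\Pi[\cC]$, while for $\cP[\Gamma]$ it asks that $M^{+}$ be an answer set of $(\Pi\cup\Gamma)[\cC]$; by part~(i) these requirements coincide. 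Condition~\ref{as.2.REPEAT} asks that $K_{\cP,M}$, respectively $K_{\cP[\Gamma],M}$, have a solution; but this constraint satisfaction problem depends only on $\cC$, $\gamma$, $D$, and $M_{|\cC}$, all of which are shared by $\cP$ and $\cP[\Gamma]$, so $K_{\cP,M}=K_{\cP[\Gamma],M}$ and the two conditions are literally identical. Combining the two observations, $M$ is an answer set of $\cP$ if and only if it is an answer set of $\cP[\Gamma]$.

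I do not expect a substantial obstacle: the argument is essentially a bookkeeping exercise built on Theorem~\ref{prop:constraints}. The only point demanding care is the alphabet accounting, namely verifying that neither the choice rules of the asp-abstraction nor the added denials enlarge $\At(\Pi)$, so that Theorem~\ref{prop:constraints} may legitimately be invoked with $\Pi[\cC]$ in the role of its program and so that completeness of $M$ transfers between $\cP$ and $\cP[\Gamma]$.
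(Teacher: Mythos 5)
Your proposal is correct and follows essentially the same route as the paper's own proof: part~(i) by combining Theorem~\ref{prop:constraints} (applied with $\Pi[\cC]$ as the program) with the asp-entailment hypothesis, and part~(ii) from (i) together with the observation that $K_{\cP,M}=K_{\cP[\Gamma],M}$. The paper's version is just more terse; your added bookkeeping (the identity $(\Pi\cup\Gamma)[\cC]=\Pi[\cC]\cup\Gamma$ and the alphabet accounting) makes explicit steps the paper leaves implicit.
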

\begin{proof}
We first show that condition (i) holds. From Theorem~\ref{prop:constraints} and the fact that $\cP$ asp-entails every denial in $\Gamma$ it follows that
programs $\Pi[\cC]$ and $(\Pi\cup\Gamma)[\cC]$ have the same answer sets. 
Condition (ii) follows from (i) and the fact that $K_{\cP,M}=K_{\cP[\Gamma],M}$ for any answer set $M$ of $\Pi[\cC]$ (and, consequently, for $(\Pi\cup\Gamma)[\cC]$). 
\end{proof}


\begin{proposition}\label{lem:cp-entail}
For a CA program $\cP=\langle \Pi,\cC,\gamma,D \rangle$ and a set $\Gamma$ of denials  over $\At(\Pi)$
if $\cP$ cp-entails every denial in $\Gamma$ then
CA programs $\cP$ and $\cP[\Gamma]$ 
have the same answer sets.
\end{proposition}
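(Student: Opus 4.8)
The plan is to show that $\cP$ and $\cP[\Gamma]$ have exactly the same answer sets by verifying the two defining conditions~\ref{as.1} and~\ref{as.2} separately. Let me write $\cP=\langle \Pi,\cC,\gamma,D \rangle$, so that $\cP[\Gamma]=\langle \Pi\cup\Gamma,\cC,\gamma,D \rangle$. The goal is to establish that for every consistent and complete set $M$ of literals over $\At(\Pi)$, $M$ is an answer set of $\cP$ if and only if $M$ is an answer set of $\cP[\Gamma]$.

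The crux is that cp-entailment, unlike asp-entailment, does \emph{not} preserve the answer sets of the asp-abstraction: a cp-entailed denial may cut off some set $N$ satisfying condition~\ref{as.1.REPEAT} (this is exactly clause~(ii) in the definition of cp-entailment). So I cannot invoke Theorem~\ref{prop:constraints} the way Proposition~\ref{lem:asp-entail} does. Instead I would argue as follows. First I would take a set $M$ that is an answer set of $\cP$ and show it is an answer set of $\cP[\Gamma]$. Since $\cP$ cp-entails every $G\in\Gamma$, clause~(i) of cp-entailment gives $M\models G^{cl}$ for every such $G$; combining this with the fact that $M^{+}$ is an answer set of $\Pi[\cC]$, Theorem~\ref{prop:constraints} (applied with the program $\Pi[\cC]$ and the denials $\Gamma$) yields that $M^{+}$ is an answer set of $(\Pi\cup\Gamma)[\cC]$, establishing~\ref{as.1} for $\cP[\Gamma]$. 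Condition~\ref{as.2} transfers verbatim because $K_{\cP,M}=K_{\cP[\Gamma],M}$ (the denials $\Gamma$ add no constraint atoms and do not alter $\cC$, $\gamma$, or $D$), so the associated CSP is literally identical and its solvability is unchanged.

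For the converse, I would take $M$ to be an answer set of $\cP[\Gamma]$ and show it is an answer set of $\cP$. Here the key observation is monotonicity in the other direction: by Theorem~\ref{prop:constraints}, if $M^{+}$ is an answer set of $(\Pi\cup\Gamma)[\cC]$ then $M^{+}$ is an answer set of $\Pi[\cC]$ (and $M\models\Gamma^{cl}$), so condition~\ref{as.1} for $\cP$ follows directly. Again $K_{\cP,M}=K_{\cP[\Gamma],M}$ gives~\ref{as.2}. Both directions thus reduce to one application each of Theorem~\ref{prop:constraints} plus the identity of the csp-abstractions.

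The main obstacle, and the point that must be handled with care, is that cp-entailment is the weaker hypothesis: I must \emph{not} claim that $\Pi[\cC]$ and $(\Pi\cup\Gamma)[\cC]$ have the same answer sets, since in general they do not. The argument only needs the forward implication supplied by Theorem~\ref{prop:constraints}, namely that passing from $\cP$ to $\cP[\Gamma]$ can only remove answer sets of the asp-abstraction, never add any, and that clause~(i) of cp-entailment guarantees precisely that none of the \emph{genuine} answer sets of $\cP$ (those already surviving condition~\ref{as.2}) are among the ones removed. I would be explicit that clause~(ii) of cp-entailment plays no role in this proposition; it is used elsewhere to distinguish cp-entailment from asp-entailment, but the equality of answer sets follows from clause~(i) alone together with the invariance of the CSP.
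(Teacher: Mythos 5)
Your proposal is correct and takes essentially the same route as the paper's proof: each direction is settled by one application of Theorem~\ref{prop:constraints} to the program $\Pi[\cC]$ with the denials $\Gamma$ (using the identity $(\Pi\cup\Gamma)[\cC]=\Pi[\cC]\cup\Gamma$), combined with the observation that $K_{\cP,M}=K_{\cP[\Gamma],M}$, with clause (i) of cp-entailment invoked only in the direction from $\cP$ to $\cP[\Gamma]$. Your closing remark that clause (ii) of cp-entailment plays no role here, and that one must not claim $\Pi[\cC]$ and $(\Pi\cup\Gamma)[\cC]$ have the same answer sets, accurately reflects what distinguishes this proposition from Proposition~\ref{lem:asp-entail}.
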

\begin{proof}
Let $\cP$ be a CA program $\langle \Pi,\cC,\gamma,D \rangle$. 
It is easy to see that (a) $\Pi[\cC]\cup \Gamma=(\Pi\cup \Gamma)[\cC]$ and (b) $K_{\cP,M}=K_{\cP[\Gamma],M}$.

Right-to-left:
Take  $M$ to
  be an answer set of
$\cP$. By the definition of an answer set, (i)  $M^{+}$ is an answer set of $\Pi[\cC]$ and
(ii)  the constraint satisfaction problem $K_{\cP,M}$ has a solution.
Since $\cP$ cp-entails every denial in $\Gamma$, we conclude that $M$ is a model of $\Gamma^{cl}$.
By  Theorem~\ref{prop:constraints}, $M^{+}$ is an answer set of $\Pi[\cC]\cup \Gamma$. 
From~(a) and (b) we derive that $M$ is an answer set of $\cP[\Gamma]$.

Left-to-right:
Take  $M$ to
  be an answer set of $\cP[\Gamma]$. 
  By the definition of an answer set, (i)~$M^{+}$ is an answer set of $(\Pi\cup\Gamma)[\cC]$ and
(ii)  the constraint satisfaction problem $K_{\cP[\Gamma],M}$ has a solution.
From (i) and (a) it follows that~$M^{+}$ is an answer set of $\Pi[\cC] \cup\Gamma$.
By  Theorem~\ref{prop:constraints},~$M^{+}$ is an answer set of $\Pi[\cC]$. 
By (b) and (ii) we derive that,  $M$ is an answer set of $\cP$.
\end{proof}

\begin{proposition}\label{lem:entail}
For a CA program $\cP=\langle \Pi,\cC,\gamma,D \rangle$
 and a set $\Gamma$ of denials  over $\At(\Pi)$ if $\cP$ entails every denial in $\Gamma$ then 
  (i) every answer set of $(\Pi\cup\Gamma)[\cC]$ is also an answer set of  $\Pi[\cC]$;
 (ii) CA programs $\cP$ and $\cP[\Gamma]$ 
have the same answer sets.
\end{proposition}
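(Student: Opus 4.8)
The plan is to reduce both parts to Theorem~\ref{prop:constraints}, reusing the two structural identities already established in the proof of Proposition~\ref{lem:cp-entail}: for a set $\Gamma$ of denials over $\At(\Pi)$ we have (a) $(\Pi\cup\Gamma)[\cC]=\Pi[\cC]\cup\Gamma$, because $\cC\subseteq\At(\Pi)$ so that adding choice rules for $\cC$ commutes with adjoining denials and introduces no new atoms (hence $\At(\Pi[\cC])=\At(\Pi)$), and (b) $K_{\cP,M}=K_{\cP[\Gamma],M}$, because the csp-abstraction depends only on $\cC,\gamma,D$ and $M$, not on the rules of $\Pi$. I would state these up front and invoke them throughout.

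For part (i) I would note that the entailment hypothesis is not even needed, since adjoining denials can only restrict answer sets. Concretely, take any answer set $X$ of $(\Pi\cup\Gamma)[\cC]$, identify it with the complete and consistent set $M$ of literals over $\At(\Pi)$ with $M^{+}=X$, and apply Theorem~\ref{prop:constraints} with the program $\Pi[\cC]$ and the denials $\Gamma$: by (a), $M^{+}$ is an answer set of $\Pi[\cC]\cup\Gamma$, so the theorem yields that $M^{+}=X$ is an answer set of $\Pi[\cC]$.

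For part (ii) the left-to-right inclusion is immediate from part (i) and identity (b): if $M$ is an answer set of $\cP[\Gamma]$, then $M^{+}$ is an answer set of $(\Pi\cup\Gamma)[\cC]$, hence of $\Pi[\cC]$ by (i), and $K_{\cP,M}=K_{\cP[\Gamma],M}$ has a solution, so $M$ is an answer set of $\cP$. The right-to-left inclusion is where the entailment hypothesis does real work. Let $M$ be an answer set of $\cP$, so $M^{+}$ is an answer set of $\Pi[\cC]$ and $K_{\cP,M}$ has a solution. By (a) and Theorem~\ref{prop:constraints}, it suffices to show $M$ is a model of $\Gamma^{cl}$, i.e. that $M$ satisfies $G^{cl}$ for each $G\in\Gamma$. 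I would establish this by a uniform two-case analysis on the definition of entailment: since $\cP$ entails $G$, either $\cP$ asp-entails $G$, in which case $M$ satisfies $G^{cl}$ directly because $M$ is a complete consistent set of literals with $M^{+}$ an answer set of $\Pi[\cC]$; or $\cP$ cp-entails $G$, in which case condition~(i) of cp-entailment, applied to the answer set $M$ of $\cP$, again gives that $M$ satisfies $G^{cl}$. Either way $M$ satisfies every denial in $\Gamma$, so $M^{+}$ is an answer set of $\Pi[\cC]\cup\Gamma=(\Pi\cup\Gamma)[\cC]$; combined with the solvability of $K_{\cP[\Gamma],M}=K_{\cP,M}$, this makes $M$ an answer set of $\cP[\Gamma]$.

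I do not anticipate a serious obstacle; the statement essentially merges Propositions~\ref{lem:asp-entail} and~\ref{lem:cp-entail} for a heterogeneous $\Gamma$ whose denials are each entailed in one of the two ways. The only point requiring care is that both entailment notions be funneled through the single condition ``$M$ satisfies $G^{cl}$''. A more bureaucratic alternative would partition $\Gamma$ into its asp-entailed and cp-entailed denials and apply the two earlier propositions in turn; that route additionally requires checking that passing from $\cP$ to $\cP[\Gamma_{asp}]$ preserves the answer sets of $\cP$ and of $\Pi[\cC]$ referenced in the definition of cp-entailment, so that $\cP[\Gamma_{asp}]$ still cp-entails the remaining denials. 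The direct argument above sidesteps this extra verification, which is why I would prefer it.
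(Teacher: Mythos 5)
Your proof is correct, and for part (ii) it takes a genuinely different route from the paper. The paper proves (ii) by the partition argument you call the ``bureaucratic alternative'': it splits $\Gamma$ into the subset $\Gamma_1$ of denials asp-entailed by $\cP$ and the subset $\Gamma_2$ of denials cp-entailed by $\cP$, then chains Proposition~\ref{lem:asp-entail}(ii) (so $\cP$ and $\cP[\Gamma_1]$ have the same answer sets) with Proposition~\ref{lem:cp-entail} (so $\cP[\Gamma_1]$ and $\cP[\Gamma_1\cup\Gamma_2]$ have the same answer sets). This is shorter on the page because it reuses the two earlier propositions wholesale, but, as you correctly observe, invoking Proposition~\ref{lem:cp-entail} at the second step formally requires that $\cP[\Gamma_1]$ (not just $\cP$) cp-entails every denial in $\Gamma_2$; the paper leaves this unverified, though it does hold --- condition (i) of cp-entailment transfers because $\cP$ and $\cP[\Gamma_1]$ have the same answer sets, and the witness $N$ required by condition (ii) transfers because $\Pi[\cC]$ and $(\Pi\cup\Gamma_1)[\cC]$ have the same answer sets by Proposition~\ref{lem:asp-entail}(i). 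Your direct argument sidesteps this entirely: you funnel both entailment notions through the single condition ``$M$ satisfies $G^{cl}$'' for each $G\in\Gamma$ and apply Theorem~\ref{prop:constraints} once, together with the identities $(\Pi\cup\Gamma)[\cC]=\Pi[\cC]\cup\Gamma$ and $K_{\cP,M}=K_{\cP[\Gamma],M}$. The trade-off is mild duplication of the machinery already used in the proof of Proposition~\ref{lem:cp-entail}, in exchange for a self-contained argument with no hidden preservation lemma; your treatment of part (i), including the observation that the entailment hypothesis is not needed there, coincides with the paper's.
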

\begin{proof}
Condition (i) follows from Theorem~\ref{prop:constraints} and the fact that
 $(\Pi)[\cC]$ and $(\Pi\cup\Gamma)[\cC]$  only differ in denials.

We now show that condition (ii) holds.
Set $\Gamma$ is composed of two disjoint sets $\Gamma_1$ and $\Gamma_2$ (i.e., $\Gamma=\Gamma_1\cup \Gamma_2$), where $\Gamma_1$ is the set of all denials that are asp-entailed by $\cP$ and
$\Gamma_2$ is the set of all denials that are cp-entailed by $\cP$.     
By Proposition~\ref{lem:asp-entail} (ii), CA programs~$\cP$ and $\cP[\Gamma_1]$ have the same answer sets.
By Proposition~\ref{lem:cp-entail}, CA programs~$\cP[\Gamma_1]$ and $\cP[\Gamma_1\cup\Gamma_2]$ have  the same answer sets. It immediately follows that  CA programs~$\cP$ and $\cP[\Gamma_1\cup\Gamma_2]$
have the same answer sets.
\end{proof}


For an alphabet $\sigma$,
a {\sl record} relative to $\sigma$ is
a sequence~$M$ composed of \emph{distinct} literals over $\sigma$ or symbol~$\bot$, some
literals are possibly
annotated by the symbol $\dec$, which marks them as {\sl decision} literals such
that:
\begin{enumerate}
\item the set of literals in $M$ is consistent or $M=M'l$, where the set
of literals in $M'$ is consistent and 
contains $\ol{l}$, 
\item if $M=M'l^\Delta M''$, then neither $l$ nor its dual $\ol{l}$ is in $M'$, and
\item if $\bot$ occurs in $M$, then $M=M'\bot$ and $M'$ does not contain $\bot$.
\end{enumerate}
We often identify records with the set of its members disregarding
annotations. 

For a CA program~$\cP=\langle \Pi,\cC,\gamma,D \rangle$, a \emph{state} relative 
to $\cP$ is either a distinguished 
state {\fail} or a triple $M||\Gamma||\Lambda$ where $M$ is a record 
   relative to $\At(\Pi)$; 
  $\Gamma$ and $\Lambda$ are each a set of denials  that
   are entailed by~$\cP$. Given a state $M||\Gamma||\Lambda$ if neither a literal~$l$ nor~$\overline{l}$ occurs 
in $M$, then $l$ is \emph{unassigned} by the state; if ~$\bot$ does
not occur in $M$ as well as for any atom $a$ it is not the case that
both~$a$ and~$\neg a$ occur in $M$, then this state is {\em
  consistent}. 
\tcb{For a state
$M||\Gamma||\Lambda$, we call $M$,  $\Gamma$, and $\Lambda$
the {\em atomic},  {\em permanent}, and {\em temporal}  parts of the state, respectively. 
The role of the atomic part of the state is to track 
decisions (choices)  as well as inferences that the solver has made.
The permanent and temporal parts are responsible for assisting the
solver in accumulating additional information -- entailed denials by a
given program -- that becomes apparent during the search process.}

\begin{figure}
\fbox{
$
\begin{array}[t]{lll}
\hbox{{\rd}:}& 
  M||\Gamma||\Lambda ~\lrar~ 
       M~l^\dec||\Gamma||\Lambda
  & \hbox{if  $l$ is unassigned by $M$ and $M$ is consistent} \\ \ \\
\hbox{{\rfail}:}&
  M || \Gamma||\Lambda~\lrar~  {\fail}
  & \hbox{if ~} 
   \left\{ \begin{array}{l}
  \hbox{$M$ is inconsistent and}\\
  \hbox{$M$ contains no decision literals}\\   
  \end{array}\right. \\ \ \\
\hbox{{\rb}:}&
P~l^\dec~Q||\Gamma||\Lambda\lrar~ 
  P~\overline{l}||\Gamma||\Lambda
  &\hbox{if ~} 
  \left\{ \begin{array}{l}
\hbox{$P~l^\dec~Q$ is inconsistent, and}\\  
\hbox{$Q$ contains no  decision literals.}
  \end{array}\right. \\  \ \\
\hbox{{\rap}:}&
M ||\Gamma||\Lambda ~\lrar~ 
       M~l ||\Gamma||\Lambda &
  \hbox{if ~} 
 \hbox{$\cP[\Gamma\cup \Lambda]$  asp-entails $l$ with respect to $M$}  \\  \ \\

\hbox{{\rcp}:}&
M ||\Gamma||\Lambda ~\lrar~ 
       M~\bot ||\Gamma||\Lambda &
  \hbox{if  $K_{\cP,M}$ has no solution}\\  \ \\


\hbox{{\rlp}:}&
M ||\Gamma||\Lambda ~\lrar~ 
       M||\Gamma \cup\{R\}||\Lambda &
  \hbox{if  }  
\cP[\Gamma\cup\Lambda] \hbox{ entails denial $R$ \tcb{and $R\not\in\Gamma\cup\Lambda$ }}\\  \ \\ 
\hbox{{\rlt}:}&
M ||\Gamma||\Lambda ~\lrar~ 
       M||\Gamma ||\Lambda \cup\{R\}&
  \hbox{if  }\cP[\Gamma\cup\Lambda] \hbox{ entails denial $R$ \tcb{and $R\not\in\Gamma\cup\Lambda$} }\\  \ \\

\hbox{{\rpr}:}&
M ||\Gamma ||\Lambda~\lrar~ 
       \emptyset||\Gamma||\Lambda &   \hbox{if  }M\neq\emptyset\\  \ \\

\hbox{{\rcr}:}&
M||\Gamma||\Lambda ~\lrar~ 
       \emptyset||\Gamma||\emptyset &\hbox{if  }M\neq\emptyset\\  \ \\
\end{array}
$
}
\caption{The transition rules of the graph $\ezg_{\cP}$.}\label{fig:rulesez}
\end{figure}
        \begin{figure}[htbp]
\fbox{
$
                \ba{lc}
                \emptyset||\emptyset||\emptyset
                &\stackrel{\rap}\lrar\\\ \\ lightOn||\emptyset||\emptyset
                &\stackrel{\rap}\lrar\\\ \\ lightOn\ switch||\emptyset||\emptyset
                &\stackrel{\rap}\lrar\\\ \\ lightOn\ switch\ \neg am||\emptyset||\emptyset
                &\stackrel{\rap}\lrar\\\ \\ lightOn\ switch\ \neg am\ \neg |x<12|~||\emptyset||\emptyset
                &\stackrel{\rd}\lrar \\\ \\lightOn\ switch\ \neg am\ \neg |x<12|\ \neg  |x\geq 12|^\dec||\emptyset||\emptyset
                &\stackrel{\rcp}\lrar\\\ \\ lightOn\ switch\ \neg am\ \neg |x<12|\ \neg  |x\geq 12|^\dec\ \bot||\emptyset||\emptyset
                &\stackrel{\rb}\lrar\\\ \\[.1in] lightOn\ switch\ \neg am\ \neg |x<12|\ |x\geq 12|\ ||\emptyset||\emptyset&
                \ea
$
}
                \caption{Sample path in graph ${\ez}_{\cP_1}$.}\label{fig:ezpath}
         \end{figure}
  
We now define a graph {\ezg}$_{\cP}$ for  
a CA program~$\cP$.
Its nodes
are the states relative to~$\cP$. These nodes intuitively
correspond to states of computation.
The edges of the graph {\ezg}$_{\cP}$  are specified by nine
transition rules presented in Figure~\ref{fig:rulesez}.
These rules correspond to possible operations by the
{\ezcsp} system that bring it from one state of computation to another.
A path in the graph  $\ezg_\cP$
is a description of a process of search for an answer set of $\cP$.
 The process is captured via applications of transition rules.
Theorem~\ref{prop:ezcsp} introduced later in this section makes this
statement precise. 
\begin{example}\label{ex:path}
        Recall CA program $\cP_1=\langle \Pi_1,\cC_1,\gamma_1,D_1 \rangle$ introduced in ${\mathit Example}$~\ref{example:p1}. 
        Figure~\ref{fig:ezpath} presents a sample path in  ${\ez}_{\cP_1}$ with every edge annotated by the name
        of a transition rule that justifies the presence of this edge in the
        graph.
\end{example}
Now we turn our attention to an informal discussion of
 the role of each of the transition rules in $\ezg_\cP$.


\subsubsection{Informal account on transition rules}
We refer to the transition rules \rd, \rfail, \rb, \rap, \rcp
of the graph  $\ezg_\cP$  as \emph{basic}.

The unique feature of basic rules is that they only
concern the atomic part of a state. Consider a state $S=M||\Gamma||\Lambda$.
An application of any basic rule results in a state 
whose permanent and temporal parts remain unchanged, i.e., $\Gamma$
and $\Lambda$ respectively (unless it is the case of \rfail).

\paragraph{\rd}
 An application of the transition rule {\rd} to $S$ results in a state
whose   atomic part has the form
$M~l^\dec$. Intuitively this rule allows us to pursue  evaluation of
assignments that assume value of literal $l$ to be true. The fact
that this literal is marked by $\dec$ suggests that we can still reevaluate this
assumption in the future, in other words to backtrack on this decision. 

\paragraph{\rfail}
 The transition rule $\rfail$ specifies the conditions on atomic part $M$ of
state $S$ suggesting that $\fail$ is reachable from $M$. Intuitively,
if our computation brought us to such a state  transition to $\fail$
confirms that there is no solution to the problem.

\paragraph{\rb}
 The transition rule $\rb$ specifies the conditions 
   on atomic part of
the state suggesting when it is time to backtrack and what the new
atomic part of the state is after backtracking.
Rules $\rfail$ and $\rb$ share one property: they are
applicable only when states are inconsistent.

\paragraph{\rap}
  The transition rule {\rap} specifies the condition under which a new
 literal $l$ (without a decision annotation) is added to an atomic part. Such rules are commonly called {\em propagators}.
 Note that the condition   of
 \rap
\beq
\hbox{$\cP[\Gamma\cup \Lambda]$  asp-entails $l$ with respect to $M$}
\eeq{eq:asp-cond}
is defined over
 a program extended by permanent and temporal part. This fact
 illustrates the role of these entities. They carry extra
 information aquired/learnt during the computation. 
 Also  condition~\eqref{eq:asp-cond} is semantic. 
 It refers to the notion of asp-entailment, which is defined 
 by a reference to the semantics of a program.
  Propagators used by software systems typically use 
   syntactic conditions, which are easy to check by inspecting syntactic properties of a program. 
   Later in  this section we present instances of such propagators, in particular, propagators that are used within the \ezcsp solver.

\paragraph{\rcp}
 The transition rule {\rcp} specifies the condition under which
   symbol $\bot$ is added  to an atomic part. Thus it leads to a state
   that is inconsistent suggesting that the search process is either
   ready to fail or to backtrack. The condition of  {\rcp}
    $$\hbox{$K_{\cP,M}$ has no solution}$$
   represents a decision procedure that  establishes whether
   the CSP problem $K_{\cP,M}$ has solutions or not.

\bigskip
We now turn our attention to non-basic rules that concern permanent and
temporal parts of the states of computation.

\paragraph{\rlp} Recall the definition  of the transition rule {\rlp} 
\[
M ||\Gamma||\Lambda ~\lrar~ 
       M||\Gamma \cup\{R\}||\Lambda 
  \hbox{ ~~ if~~  }  
\cP[\Gamma\cup\Lambda] \hbox{  entails denial $R$ \tcb{and $R\not\in\Gamma\cup\Lambda$ }}\]
An application of this rule to a state $M ||\Gamma||\Lambda$, results in a
state whose atomic and temporal parts stay unchanged. The permanent
part is extended by a denial $R$. Intuitively the effect of this rule is
such that from this point of computation the ``permanent'' denial
 becomes effectively a part of the program being solved. This is
 essential for two reasons. First, if the learnt denial $R$ is
 cp-entailed then $\Pi \cup \Gamma\cup\Lambda$ and   $\Pi \cup
 \Gamma\cup\Lambda\cup\{R\}$ are programs with different answer
 sets. In turn, the rule $\rap$ may be applicable to some state
 $N||\Gamma \cup\{R\}||\Lambda$ and not to  $N||\Gamma||\Lambda$. 
Similarly, due to the fact that only ``syntactic'' instances of $\rap$
are implemented in solvers, the  previous statement also
holds for the case when $R$ is asp-entailed.   
 
\paragraph{\rlt} The role of the transition rule {\rlt}  is similar to that of
{\rlp}, but the learnt denials by this rule are not meant to be preserved permanently in the
computation.

\paragraph{{\rpr} \hbox{\em and} {\rcr}} The transition rule {\rpr}  allows the computation to
start from ``scratch'' with respect to atomic part of the state. The transition
rule {\rcr} forces the computation to
start from ``scratch'' with respect to not only atomic part of the state but also
all temporally learnt denials. These restart rules are essential in
understanding the key differences between various integration
strategies that are of focus in this paper.

\subsubsection{Formal properties of ${\ezg}_\cP$}
We call the state $\emptyset||\emptyset||\emptyset$ --- 
{\em initial}. We say that a node in the graph is \emph{semi-terminal} 
if  no rule other than {\rlp}, {\rlt},  {\rpr}, {\rcr} is applicable to it \tcb{ (or, in other words, if no single basic rule is applicable to it).}
\tcb{We say that a path in ${\ezg}_\cP$
is {\em restart-safe} when, prior to any  edge $e$ due to an
application of $\rpr$ or $\rcr$ on this path, there is an edge $e'$
due to an application of $\rlp$  
such that: (i) edge $e'$ precedes $e$; (ii)  $e'$ does not precede any other edge $e''\neq e$ due to  $\rpr$ or $\rcr$.
We say that a restart-safe path $t$ is {\em maximal} if (i) the first state in $t$ is an initial state, and (ii) 
 $t$ is not a subpath of any restart-safe path  $t'\neq t$. 
}
\begin{example}
                Recall CA program $\cP_1=\langle \Pi_1,\cC_1,\gamma_1,D_1 \rangle$ introduced in ${\mathit Example}$~\ref{example:p1}.  
                Trivially a sample path in  ${\ez}_{\cP_1}$ in Figure~\ref{fig:ezpath} is a  restart-safe path. 
A nontrivial  example of restart-safe path in  ${\ez}_{\cP_1}$ follows
\beq
\ba{lc}
\emptyset||\emptyset||\emptyset
&\stackrel{\rlp}\lrar\\\ \\ \emptyset||\{\ar not\ switch\}||\emptyset
&\stackrel{\rap}\lrar\\\ \\ lightOn||\{\ar not\ switch\}||\emptyset
&\stackrel{\rpr}\lrar\\\ \\ \emptyset||\{\ar not\ switch\}||\emptyset.&
\ea
\eeq{eq:restart-safe}
Similarly, a path that extends the path above as follows 
$$
\ba{lc}
&\stackrel{\rlp}\lrar\\\ \\ \emptyset||\{\ar not\ switch,\ \ar am\}||\emptyset
&\stackrel{\rap}\lrar\\\ \\ lightOn||\{\ar not\ switch,\ \ar am\}||\emptyset
&\stackrel{\rpr}\lrar\\\ \\ \emptyset||\{\ar not\ switch,\ \ar am\}||\emptyset&
\ea
$$
is restart-safe.

A simple path in  ${\ez}_{\cP_1}$ that is not restart-safe
\[
\ba{lc}
\emptyset||\emptyset||\emptyset
&\stackrel{\rap}\lrar\\ lightOn||\emptyset||\emptyset
&\stackrel{\rpr}\lrar\\ \emptyset||\emptyset||\emptyset.&
\ea
\]
Indeed, condition (i) of the restart-safe definition  does not hold.
Another example of a not restart-safe path is a path  that extends path~\eqref{eq:restart-safe} as follows 
$$
\ba{lc}
&\stackrel{\rap}\lrar\\\ \\ lightOn||\{\ar not\ switch\}||\emptyset
&\stackrel{\rpr}\lrar\\\ \\ \emptyset||\{\ar not\ switch\}||\emptyset.&
\ea
$$
Indeed, condition (ii) of the restart-safe definition  does not hold for the second occurrence of the \rpr edge.
\end{example}

The following theorem captures key properties of the graph
${\ezg}_\cP$.
They suggest that the graph can be used  for deciding whether a
program with constraint atoms has an answer set.
\begin{thm}\label{prop:ezcsp}
For any CA program $\cP$:
\begin{itemize}
\item [(a)] 
\tcb{ every restart-safe path 
 in $\ezg_\cP$ is finite, and
any  maximal restart-safe path ends with a state that is semi-terminal,
 }
\item [(b)] for any  semi-terminal state $M||\Gamma||\Lambda$  of
  $\ezg_\cP$ reachable from initial state, $M$ is an answer
  set of $\cP$,
\item [(c)] \tcb{ state {\fail} is reachable from initial state in
  $\ezg_\cP$ by a restart-safe path if and  only if  $\cP$ has no answer set.}
\end{itemize}
\end{thm}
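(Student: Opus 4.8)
The plan is to follow the standard template for correctness of abstract DPLL-style transition systems: establish termination, then soundness of semi-terminal states, then the two directions for \fail. Before the three items I would record two facts on which everything rests. First, unwinding the definitions shows that $\cP$ \emph{entails} a denial $G$ if and only if $G^{cl}$ holds in every answer set of $\cP$ (if $\cP$ asp-entails $G$ this is immediate; if $\cP$ cp-entails $G$ it is clause (i) of that definition; and the asp/cp dichotomy is exhaustive). Combined with Proposition~\ref{lem:entail}(ii) this yields the \emph{validity invariant}: along any path from the initial state the parts $\Gamma,\Lambda$ consist of denials entailed by $\cP$, and \rlp/\rlt preserve this, since if $\cP$ entails every denial of $\Gamma\cup\Lambda$ then $\cP$ and $\cP[\Gamma\cup\Lambda]$ have the same answer sets, so any denial entailed by $\cP[\Gamma\cup\Lambda]$ is entailed by $\cP$. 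Every reachable state is thus a genuine state relative to $\cP$.

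For part (a) I would bound any restart-safe path in three nested layers. Since $\At(\Pi)$ is finite there are finitely many denials over it, and $\Gamma$ never shrinks, so \rlp --- which requires the learnt denial to be new --- fires finitely often; the restart-safe condition injectively assigns to each \rpr/\rcr edge a distinct preceding \rlp edge, so there are finitely many restarts. Between consecutive restarts neither $\Gamma$ nor $\Lambda$ is reset, so $\Gamma\cup\Lambda$ grows monotonically and, being bounded, admits finitely many \rlp/\rlt steps; between consecutive learning steps $\Gamma,\Lambda$ are fixed and only basic rules act. For a maximal run of basic rules at fixed $\Gamma,\Lambda$ I would exhibit a strictly decreasing measure: writing $M=M_0\,\ell_1^\dec M_1\cdots\ell_p^\dec M_p$ with the $\ell_i^\dec$ the decision literals, the sequence $(|M_0|,\dots,|M_p|)$ padded with $-1$'s and ordered so that a larger value at the first differing coordinate counts as smaller is strictly decreased by \rd, \rap, \rb, and by \rcp (treating the added $\bot$ as a literal of the last level); since the record set is finite this forbids repetition. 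Combining the layers shows every restart-safe path is finite. For maximality, if a maximal restart-safe path ended in a non-semi-terminal state some basic rule would apply, and appending that basic edge preserves restart-safety, contradicting maximality.

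For part (b), let $M||\Gamma||\Lambda$ be semi-terminal and reachable. If $M$ were inconsistent then \rfail (no decision literals) or \rb (splitting at the last decision literal) would apply, so $M$ is consistent; if some atom were unassigned \rd would apply, so $M$ is complete over $\At(\Pi)$. Because \rap is inapplicable and $M$ is complete, $M^{+}$ must be an answer set of $(\Pi\cup\Gamma\cup\Lambda)[\cC]$, for otherwise asp-entailment w.r.t. the complete $M$ holds vacuously for some literal outside $M$ and \rap fires; by the validity invariant and Proposition~\ref{lem:entail}(i) this makes $M^{+}$ an answer set of $\Pi[\cC]$, which is condition~\ref{as.1.REPEAT}. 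Since \rcp is inapplicable, $K_{\cP,M}$ has a solution, which is condition~\ref{as.2.REPEAT}. Hence $M$ is an answer set of $\cP$.

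For part (c), the forward direction hinges on an invariant proved by induction on the path: \emph{for every reachable state $M||\Gamma||\Lambda$, every answer set $X$ of $\cP$, and every prefix $M'$ of $M$, if all decision literals of $M'$ lie in $X$ then $M'\subseteq X$.} The \rd and \rap cases are direct (for \rap one uses Theorem~\ref{prop:constraints} to see that $X^{+}$ is an answer set of $(\Pi\cup\Gamma\cup\Lambda)[\cC]$, so the asp-entailed literal lies in $X$); the \rcp case is vacuous, because $M'\subseteq X$ would make $K_{\cP,M'}$ a subproblem of the solvable $K_{\cP,X}$, contradicting applicability; restarts and learning leave the relevant data untouched. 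The delicate case is \rb, and it is precisely why the invariant ranges over all prefixes rather than over $M$ alone: for $P\,\ell^\dec Q\lrar P\overline{\ell}$ one first shows $\overline{\ell}\in X$ (otherwise $\ell\in X$ forces the inconsistent predecessor into $X$) and then recovers $P\subseteq X$ by applying the induction hypothesis to the prefix $P$ of the predecessor. Given the invariant, if \fail is reached it is via \rfail from a decision-free inconsistent $M$; any answer set $X$ would vacuously satisfy the hypothesis, giving $M\subseteq X$ and hence an inconsistent $X$, so $\cP$ has no answer set. Conversely, if $\cP$ has no answer set I would run only basic rules from the initial state; this path is restart-safe, terminates by part (a) in a semi-terminal state, which by part (b) cannot be an answer-set state and must therefore be \fail. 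I expect the backtracking case of the prefix invariant, together with the restart/learning bookkeeping in the termination layer, to be the main obstacle.
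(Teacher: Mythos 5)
Your proposal is correct and follows the same overall skeleton as the paper's proof --- a well-founded measure for (a), an inapplicability analysis of each basic rule for (b), and an induction-on-paths invariant for (c) --- but several of your sub-arguments differ from, and in places improve on, the paper's. For (a), the paper first shows that restart-free paths are finite using a single ordering that is lexicographic in $(\Gamma,\Lambda,\alpha(M))$, then contracts every ``restarting subpath'' of a restart-safe path into an artificial edge that strictly enlarges $\Gamma$, and reapplies the same ordering; your nested count (finitely many \rlp edges because $\Gamma$ only grows, hence finitely many restarts by injectivity of the witness assignment, finitely many learning steps between consecutive restarts, and finite basic runs in between) is an equivalent but flatter bookkeeping that avoids the contraction construction. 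Your per-prefix invariant in (c) is the paper's per-literal ``Statement 2'' in disguise, with the identical resolution of the delicate \rb case. Two of your additions are genuine improvements in rigor: first, the ``validity invariant'' (any denial entailed by $\cP[\Gamma\cup\Lambda]$ is already entailed by $\cP$, because entailment of a denial amounts to truth of its clause in all answer sets of $\cP$) is tacitly assumed by the paper --- it is what guarantees that reachable states are states relative to $\cP$ at all and licenses invoking Proposition~\ref{lem:entail} at them; second, your use of Theorem~\ref{prop:constraints} in the \rap case of the invariant (to conclude that $X^{+}$ is an answer set of $(\Pi\cup\Gamma\cup\Lambda)[\cC]$, so that asp-entailment by $\cP[\Gamma\cup\Lambda]$, which is all the rule's side condition provides, forces $l\in X$) repairs a step that the paper elides by asserting asp-entailment by $\cP$ itself; and your vacuity argument for \rcp (monotonicity of csp-abstractions under $M'\subseteq X$) fills in a case the paper dismisses as obvious.
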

On the one hand, part (a) of Theorem~\ref{prop:ezcsp} asserts that, if we 
construct a \tcb{restart-safe path from initial state}, then some semi-terminal state is 
eventually reached. On the other hand, parts (b) and (c)
assert that, as soon as a semi-terminal state 
is reached by following any restart-safe path, the problem of deciding
whether CA program $\cP$ has answer sets is  
solved.  Section~\ref{sec:config} describes the varying configurations
of the \ezcsp system. 
\begin{example}
Recall ${\mathit Example}$~\ref{ex:path}.
                Since the last state in the sample path presented in Figure~\eqref{fig:ezpath} is semi-terminal, Theorem~\ref{prop:ezcsp} asserts that the set of literals composed of the elements of this semi-terminal state
                forms the  answer set of~CA program $\cP_1$. Indeed, this set
                coincides with the answer set~$M_1$ of  $\cP_1$ presented in ${\mathit Example}$~\ref{ex:answerset}.      
\end{example}
In our discussion of the transition rule $\rap$ we mentioned how
the {\ezcsp} solver accounts only for some transitions due
to this rule.
Let 
$\cP=\langle \Pi,\cC, \gamma,D \rangle$ be a CA  program. 
By $\ezsmg_\cP$ we denote an edge-induced subgraph of $\ezg_\cP$,
where we drop the edges that correspond to the application of
transition rules $\rap$ not accounted by the following two transition rules (propagators)
$\rup$ and $\runf$:
\[
\hspace*{-.3in}
\begin{array}[t]{l}
\hbox{{\rup}: }
M ||\Gamma||\Lambda ~\lrar~ 
       M~l ||\Gamma||\Lambda 
  \left\{ \begin{array}{l}
\hbox{$C\vee l \in{(\Pi[\cC]\cup\Gamma\cup \Lambda)^{cl}}$,}\\ 
\hbox{$M$ is consistent,}\\
\hbox{$M\models \ol{C}$}\\  
  \end{array}\right. \\  \ \\
\hbox{{\runf}: }
M ||\Gamma||\Lambda ~\lrar~ 
       M~l ||\Gamma||\Lambda 
  \left\{ \begin{array}{l}
\hbox{$M$ is consistent, and there is  literal $l$ so that}\\
\hbox{$\ol{l}\in U$ for a set $U$, which is}\\
\hbox{unfounded on $M$ w.r.t. $\Pi[\cC]\cup\Gamma\cup\Lambda$}\\
  \end{array}\right. \\  \ \\
  \end{array}
\]
These two propagators rely on properties that can be checked by efficient procedures. 
The conditions of these transition rules are such that they are satisfied only if $\cP[\Gamma\cup\Lambda]$ asp-entails $l$ w.r.t. $M$.
In other words, the transition rules \rup or \runf are applicable only in states where \rap is applicable. The other direction is not true. 
\tcb{ Theorem~\ref{prop:ezcsp} holds if we replace $\ezg_\cP$ by
$\ezsmg_\cP$ in its statement. The proof of this theorem relies on the
statement of Theorem~\ref{prop:leone}, and is given at the end of this subsection. 
  Graph $\ezsmg_\cP$ is only one of the possible 
subgraphs of the generic graph $\ezg_\cP$ that share its key
properties stated in Theorem~\ref{prop:ezcsp}.  
These properties show that 
graph $\ezsmg_\cP$  gives rise to a class of correct algorithms
for computing answer sets of programs with constraints.
It provides a
proof of correctness  of every CASP solver in this class and a
proof of termination under the assumption that restart-safe paths are considered by a solver.}
Note how much weaker propagators, such as \rup and \runf, than $\rap$ 
are sufficient to ensure the correctness of  respective
solving procedures. 
We picked the graph $\ezsmg_\cP$ 
 for illustration as it captures the essential
propagators present in modern (constraint) answer set solvers and allows a more
concrete view on the $\ezg_\cP$ framework. Yet the goal of this work
is not to detail the variety of possible propagators of (constraint) answer set solvers but
master the understanding of hybrid procedures that include this
technology. Therefore in the rest of this section we turn our
attention back to the $\ezg_\cP$ graph and use this graph to formulate
{\bbox}, {\gbox}, and {\cbox}
 configurations of the CASP solver {\ezcsp}.

\tcb{The rest of this subsection presents a proof of
  Theorem~\ref{prop:ezcsp} as well as a proof of the similar theorem
  for the graph $\ezsmg_\cP$.} 
\begin{proof}[Proof of Theorem~\ref{prop:ezcsp}] 
{\color{black} (a)
 Let $\cP$ be a CA program $\langle\Pi,\cC,\gamma,D\rangle$.

 We first show that any path in $\ezg_\cP$ that does not contain $\rcr$ or $\rpr$ edges is finite.
 We name this statement {\em Statement 1.}
 
Consider any path $t$ in $\ezg_\cP$ that does not contain $\rcr$ or $\rpr$ edges.

For any list $N$ of literals, by $|N|$ we denote the length of $N$.
Any state $M||\Gamma||\Lambda$ has the form $M_0~l^\dec_1~M_1\dots
l^\dec_p~M_p||\Gamma||\Lambda$, where $l^\dec_1\dots l^\dec_p$ are all  decision literals of $M$;
we define $\alpha(M||\Gamma||\Lambda)$ as the sequence of nonnegative integers
$|M_0|,|M_1|,\dots,|M_p|$, and $\alpha(\fail)=\infty$.
For any two states, $S$ and $S'$, of ${\ezg}_\cP$, we understand
$\alpha(S)<\alpha(S')$ as the lexicographical order. 
We  note that, for any state $M||\Gamma||\Lambda$, 
the value of $\alpha$ is based only on the first component, $M$, of the state. 
Second, there is a finite number of
distinct values of $\alpha$ for the states of ${\ezg}_\cP$ due to the fact that there is a finite
number of distinct $M$'s  over $\cP$. 
We now define  relation \emph{smaller} over the states of ${\ezg}_\cP$.
We say that state $M||\Gamma||\Lambda$ is {\em smaller} than state
$M'||\Gamma'||\Lambda'$ when either
\begin{enumerate}
                \item $\Gamma\subset\Gamma'$, or        
            \item 
            $\Gamma=\Gamma'$, and $\Lambda\subset\Lambda'$, or       
        \item $\Gamma=\Gamma'$, $\Lambda=\Lambda'$, and $\alpha(M||\Gamma||\Lambda)<\alpha(M'||\Gamma'||\Lambda')$.       
\end{enumerate}
It is easy to see that this relation is anti-symmetric and transitive.

By the definition of the transition rules  of ${\ezg}_\cP$, 
if there is an edge from $M||\Gamma||\Lambda$ to $M'||\Gamma'||\Lambda'$ in ${\ezg}_\cP$
   formed by any basic transition rule or rules $\rlp$ or $\rlt$,
 then $M||\Gamma||\Lambda$ is  smaller than state $M'||\Gamma'||\Lambda'$.
Observe that (i)
 there is a  finite number of distinct values of $\alpha$,
 and (ii) there is a finite  number of distinct denials entailed by $\cP$.
Then, it follows that there is only a finite number of edges in $t$, and, thus, Statement 1 holds.

We call a subpath from state $S$ to state $S'$ of some path in
$\ezg_\cP$ {\em restarting} when (i) an edge that follows $S$ is due
to the application of rule $\rlp$, (ii) an edge leading to $S'$ is due
to the application of rule $\rcr$ or $\rpr$, 
and (iii) on this subpath, there are no other edges due to applications of 
$\rlp$, $\rcr$, or $\rpr$, but the ones mentioned above.
Using Statement 1, it follows that  any restarting subpath is finite.

Consider any restart-safe path $r$ in $\ezg_\cP$. We construct a path
$r'$ by dropping some finite fragments from $r$. This is accomplished
by replacing each restarting subpath of $r$ from state~$S$ to state
$S'$ by an edge from $S$ to $S'$ that we call {\em Artificial}. It is
easy to see that an edge in $r'$ due to {\em Artificial} leads from
a state of the form $M||\Gamma||\Lambda$ to a state
\hbox{$\emptyset||\Gamma\cup \{C\}||\Lambda'$}, where $C$ is a denial.
Indeed, within a restarting subpath an edge due to rule $\rlp$ occurred introducing denial $C$. 
State $M||\Gamma||\Lambda$ is smaller than the state $\emptyset||\Gamma\cup \{C\}||\Lambda'$.
At the same time, $r'$ contains no edges due to applications of $\rcr$
or $\rpr$. Indeed, we eliminated these edges in favor of edges called {\em Artificial}.
Thus by the same argument as in the proof of Statement 1,
$r'$ contains a finite number of edges. We can now conclude that $r$ is
finite. 

It is easy to see that maximal restart-safe path ends with a state that is semi-terminal. Indeed, assume the opposite: there is 
a maximal restart-safe path $t$, which ends in a non semi-terminal
state $S$. Then, some basic rule applies to state $S$. Consider path
$t'$ consisting of path $t$ and a transition due to a basic rule
applicable to $S$. Note that $t'$ is also a restart-safe path, and
that $t$ is a subpath of $t'$. This contradicts  the definition of
maximal.  
}
 
\medskip
 \noindent
(b) 
Let $M||\Gamma||\Lambda$ be a semi-terminal state so that none of the 
Basic rules are applicable. From the fact that {\rd} is not applicable, we conclude
that  $M$ assigns all literals or $M$ is inconsistent. 

We now show that $M$ is consistent. Proof by contradiction. Assume that $M$ is
inconsistent. Then, since {\rfail} is not applicable,~$M$ contains a
decision literal. Consequently,  $M||\Gamma||\Lambda$ is a state in which \rb is applicable.
This contradicts our assumption that $M||\Gamma||\Lambda$ is semi-terminal.

Also, $M^{+}$ is an answer set of $\Pi[\cC]$. Proof by contradiction. Assume that $M^{+}$ is not an answer set of $\Pi[\cC]$. 
It follows that that $M$ is not an answer set of $\cP$.
By Proposition~\ref{lem:entail}, it follows that $M$ is not an answer set of $\cP[\Gamma\cup\Lambda]$
and $M^{+}$ is not an answer of $\Pi[\cC]\cup\Gamma\cup\Lambda$.
Recall that $\cP[\Gamma\cup\Lambda]$ asp-entails a literal $l$ with
respect to $M$ if for every complete and consistent set $M'$ of
literals over $\At(\Pi)$ such that $M'^{+}$ is an answer set of
$\Pi[\cC]\cup\Gamma\cup\Lambda$ and $M\subseteq M'$, $l\in M'$. 
Since $M$ is complete and consistent set of literals over $\At(\Pi)$ 
it follows that there is no complete and consistent set $M'$ of
literals over $\At(\Pi)$ such that $M\subseteq M'$ and $M'^{+}$ is an
answer set of $\Pi[\cC]\cup\Gamma\cup\Lambda$. 
We conclude that   $\cP[\Gamma\cup\Lambda]$ asp-entails any literal $l$.
Take $l$ to be a complement of some literal occurring in $M$. 
 It follows that \rap is applicable in state $M||\Gamma||\Lambda$
 allowing a transition to state $M~l||\Gamma||\Lambda$. This
 contradicts our assumption that $M||\Gamma||\Lambda$ is
 semi-terminal. 

CSP $K_{\cP,M}$ has a solution. This immediately follows from the
application condition of the transition rule \rcp and the fact that
the state $M||\Gamma||\Lambda$ is semi-terminal. 

From the conclusions that $M^{+}$ is an answer set of $\Pi[\cC]$ and
$K_{\cP,M}$ has a solution we derive that $M$ is an answer set of
$\cP$. 
 
\medskip
\noindent
(c) We start by proving an auxiliary statement: 

\medskip
{\em  Statement 2:
 For any CA program $\cP$, and a path from an initial state to $l_1\dots
  l_n||\Gamma||\Lambda$ in $\ezg_\cP$, every answer set $X$ for $\cP$
  satisfies $l_i$ if it satisfies all decision literals $l_j^\dec$
  with $j\leq i$.  
}

\noindent
By  induction on the length of a path.
Since the property trivially holds in the initial state, we
only need to prove that all transition rules of $\ezg_\cP$ preserve it.

Consider an edge $M||\Gamma||\Lambda\lrar S$ where 
$S$ is either a fail state or state of the form $M'||\Gamma'||\Lambda'$,
$M$ is a sequence $l_1~\dots~l_k$ such that every answer set $X$  of $\cP$
satisfies $l_i$  if it satisfies all decision literals $l^\dec_j$ with
$j\leq i$.

$\rd$, {\rfail}, ${\rcp}$ {\rlp}, {\rlt}, {\rpr}, {\rcr}: Obvious.

${\rap}$: $M'||\Gamma'||\Lambda'$ is $M~l_{k+1}||\Gamma||\Lambda$.  
Take any answer set $X$  of $\cP$ such that
$X$  satisfies all decision literals $l^\dec_j$ with
$j\leq k+1$. 
From the inductive hypothesis it follows that $X$ satisfies $M$.
Consequently, $M\subseteq X$ since $X$ is a consistent and complete set of literals.  
 From the definition of {\rap}, $\cP$ asp-entails~$l_{k+1}$ with respect to $M$.
 We also know that $X^{+}$ is an answer set of $\Pi[\cC]$. Thus, $l_{k+1}\in X$. 


${\rb}$: $M$ has the form $P~l^\dec_{i}~Q$ where $Q$ contains no decision
literals. $M'||\Gamma'||\Lambda'$ has the form $P~\overline{l_{i}}||\Gamma||\Lambda$.
Take any answer set $X$  of $\cP$ such that~$X$  satisfies all decision literals $l^\dec_j$ with
$j\leq i$. 
We need to show that $X\models\overline{l_{i}}$.
By contradiction. Assume that $X\models~\!\!{l_{i}}$.
By the inductive hypothesis, 
since~$Q$ does not contain decision literals,  
it follows that~$X$ satisfies 
$P~l^\dec_i~Q$, that is, $M$. This is impossible because 
$M$ is inconsistent. Hence,  $X\models\overline{l_{i}}$.

\medskip
Left-to-right: Since {\fail} is reachable from the initial state by a \tcb{restart-safe} path,
there is an inconsistent state 
$M||\Gamma||\Lambda$ without decision literals such that there exists a path from
the initial state to $M||\Gamma||\Lambda$.  By 
Statement~2, any answer set of~$\cP$ satisfies~$M$. 
Since~$M$ is inconsistent  we conclude that~$\cP$ has no answer sets.

Right-to-left: 
\tcb{From (a) it follows that any maximal restart-safe path is a path
  from initial state to some semi-terminal state $S$.  
 By~(b), this state $S$ cannot be different from
{\fail}, because $\cP$ has no answer sets.  }
\end{proof}

\begin{thm}\label{prop:ezsmg} For any CA program $\cP$,
\begin{itemize}
\item [(a)] 
\tcb{ every restart-safe path 
 in $\ezsmg_\cP$ is finite, and
any  maximal restart-safe path ends with a state that is semi-terminal,
 }
 \item [(b)] for any  semi-terminal state $M||\Gamma||\Lambda$  of
  $\ezsmg_\cP$ reachable from initial state, $M$ is an answer
  set of $\cP$,
\item [(c)] \tcb{ state {\fail} is reachable from initial state in
  $\ezsmg_\cP$ by a restart-safe path if and  only if  $\cP$ has no answer sets.}
\end{itemize}
\end{thm}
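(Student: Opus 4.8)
The plan is to exploit that $\ezsmg_\cP$ is an edge-induced subgraph of $\ezg_\cP$, so that every edge of $\ezsmg_\cP$ is already an edge of $\ezg_\cP$ and every restart-safe path of $\ezsmg_\cP$ is a restart-safe path of $\ezg_\cP$ (the rules $\rlp$, $\rlt$, $\rpr$, $\rcr$ referenced by the restart-safe definition are shared by the two graphs). Consequently, part (a) is almost immediate: finiteness of a restart-safe path in $\ezsmg_\cP$ follows directly from Theorem~\ref{prop:ezcsp}(a) applied to the same path viewed in $\ezg_\cP$, and the claim that a maximal restart-safe path ends in a semi-terminal state is obtained by the same structural argument as before -- if it ended in a non-semi-terminal state, some basic rule of $\ezsmg_\cP$ (one of $\rd$, $\rfail$, $\rb$, $\rup$, $\runf$, $\rcp$) would apply and could be appended while preserving restart-safety, contradicting maximality.

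The heart of the theorem, and the only place where the weakening from $\rap$ to the pair $\rup$/$\runf$ really matters, is part (b). Let $M||\Gamma||\Lambda$ be a semi-terminal state of $\ezsmg_\cP$ reachable from the initial state. Since $\rd$, $\rfail$, and $\rb$ are unchanged between the two graphs, the argument from the proof of Theorem~\ref{prop:ezcsp}(b) applies verbatim to conclude that $M$ is complete and consistent over $\At(\Pi)$. The new work is to recover that $M^{+}$ is an answer set of $\Pi[\cC]$ without the semantic rule $\rap$, and here I would invoke Theorem~\ref{prop:leone}. Because $\rup$ is inapplicable and $M$ is complete and consistent, $M$ must satisfy every clause of $(\Pi[\cC]\cup\Gamma\cup\Lambda)^{cl}$ -- otherwise a falsified clause $C\vee l$ would give $M\models\overline{C}$ and license a $\rup$ transition -- so $M$ is a model of $\Pi[\cC]\cup\Gamma\cup\Lambda$. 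Because $\runf$ is inapplicable, no non-empty subset of $M^{+}$ is unfounded on $M$ with respect to $\Pi[\cC]\cup\Gamma\cup\Lambda$ (any atom $a$ in such a subset would, via $l=\neg a$, enable a $\runf$ transition). Theorem~\ref{prop:leone} then yields that $M^{+}$ is an answer set of $\Pi[\cC]\cup\Gamma\cup\Lambda=(\Pi\cup\Gamma\cup\Lambda)[\cC]$. Since $\Gamma$ and $\Lambda$ consist of denials entailed by $\cP$, Proposition~\ref{lem:entail}(i) transfers this to an answer set of $\Pi[\cC]$; and inapplicability of $\rcp$ gives that $K_{\cP,M}$ has a solution, so the two conditions together make $M$ an answer set of $\cP$.

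Part (c) then follows from (a) and (b) together with Statement~2 of the proof of Theorem~\ref{prop:ezcsp}: for the left-to-right direction, a restart-safe path reaching $\fail$ in $\ezsmg_\cP$ is also such a path in $\ezg_\cP$, so Statement~2 forces every answer set of $\cP$ to satisfy the inconsistent record preceding $\fail$, whence $\cP$ has none; for the right-to-left direction, if $\cP$ has no answer set then any maximal restart-safe path from the initial state (which exists and terminates by part (a)) must end in a semi-terminal state that, by part (b), cannot carry an answer set and is therefore $\fail$. I expect part (b) to be the main obstacle: it is exactly the step where the purely semantic characterization underlying $\rap$ must be replaced by the model-plus-no-unfounded-set characterization of Theorem~\ref{prop:leone}, and care is needed to verify that inapplicability of the two concrete propagators $\rup$ and $\runf$ captures, respectively, the ``$M$ is a model'' and ``$M^{+}$ has no non-empty unfounded subset'' halves of that characterization over the extended program $\Pi[\cC]\cup\Gamma\cup\Lambda$.
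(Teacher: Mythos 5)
Your proposal is correct and follows essentially the same route as the paper's own proof: part (a) by reusing the argument for $\ezg_\cP$, part (b) by combining inapplicability of $\rup$ and $\runf$ with Theorem~\ref{prop:leone} and Proposition~\ref{lem:entail} (plus $\rcp$ for the CSP condition), and part (c) via the subgraph relationship and parts (a)--(b). The only difference is cosmetic: you argue part (b) directly (model plus no unfounded sets implies answer set), whereas the paper phrases the same reasoning as a proof by contradiction.
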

\begin{proof}
 Let $\cP$ be a CA program $\langle\Pi,\cC,\gamma,D\rangle$.
 
(a) This part is proved as part (a) in proof of Theorem~\ref{prop:ezcsp}. 

\medskip
\noindent
(b) Let $M||\Gamma||\Lambda$ be a semi-terminal state so that none of the 
basic rules are applicable ($\rup$ and $\runf$ are basic rules). 
As in proof of part  (b) in  Theorem~\ref{prop:ezcsp} we conclude 
that  $M$ assigns all literals and is consistent. Also, CSP $K_{\cP,M}$ has a solution.

We now illustrate that, 
$M^{+}$ is an answer set of $\Pi[\cC]$. Proof by contradiction. Assume that $M^{+}$ is not an answer set of $\Pi[\cC]$. 
It follows that that $M$ is not an answer set of $\cP$.
By Proposition~\ref{lem:entail}, it follows that $M$ is not an answer set of $\cP[\Gamma\cup\Lambda]$
and $M^{+}$ is not an answer of $\Pi[\cC]\cup\Gamma\cup\Lambda$.
By Theorem~\ref{prop:leone}, 
it follows that either $M$ is not a model of
$\Pi[\cC]\cup\Gamma\cup\Lambda$ or~$M$ contains a non-empty subset
unfounded on $M$ w.r.t. $\Pi[\cC]\cup\Gamma\cup\Lambda$. In case the
former holds we derive that the rule \rup is applicable in the state
$M||\Gamma||\Lambda$.  In case the later holds we derive that the rule
\runf is applicable in the state $M||\Gamma||\Lambda$.  
This contradicts our assumption that $M||\Gamma||\Lambda$ is semi-terminal.

From the conclusions that $M^{+}$ is an answer set of $\Pi[\cC]$ and $K_{\cP,M}$ has a solution we derive that $M$ is an answer set of $\cP$.

\noindent
(c) Left-to-right part of the proof follows from Theorem~\ref{prop:ezcsp} (c, left-to-right)  and the fact that $\ezsmg_\cP$ is a subgraph of $\ezg_\cP$.

Right-to-left part of the proof follow the lines of
Theorem~\ref{prop:ezcsp} (c, right-to-left).
\end{proof}

\subsection{Integration Configurations of \ezcsp}\label{sec:config}
We can characterize the algorithm
of a specific solver that utilizes the transition rules of
the graph $\ezg_\cP$
by describing a strategy for choosing a path in this graph. 

\paragraph{\bf \bbox: }
A
configuration of {\ezcsp} that invokes an answer set solver  via
{\bbox} integration for
enumerating answer sets of an asp-abstraction program is captured by the following strategy in
navigating the graph~$\ezg_\cP$
\begin{enumerate}
\item $\rpr$  never applies,
\item rule $\rcp$ never applies to the states where one of these rules are applicable: $\rd$, $\rb$, $\rfail$, $\rap$,
\item $\rlt$ may apply anytime with the restriction that the denial $R$ learnt by the application of this rule is such that $\cP$  asp-entails $R$,
\item single application of $\rlp$ follows immediately  after an
application of the rule $\rcp$. Furthermore, the denial $R$ learnt by the application of this rule is such that $\cP$ cp-entails $R$,
\item $\rcr$ follows immediately after an
application of the rule $\rlp$. $\rcr$ does not apply under any other condition.
\end{enumerate}

It is easy to see that the specifications of the strategy above forms a subgraph of the graph $\ezg_\cP$. Let us denote this subgraph by  $\ezg^b_\cP$.  Theorem~\ref{prop:ezcsp} holds if we replace $\ezg_\cP$ by
$\ezg^b_\cP$ in its statement:
\begin{thm}\label{prop:ezgb} For any CA program $\cP$,
        \begin{itemize}
                \item [(a)] 
                \tcb{ every restart-safe path 
                        in $\ezg^b_\cP$ is finite, and
                        any  maximal restart-safe path ends with a state that is semi-terminal,
                }
                \item [(b)] for any  semi-terminal state $M||\Gamma||\Lambda$  of
                $\ezg^b_\cP$ reachable from initial state, $M$ is an answer
                set of $\cP$,
                \item [(c)] \tcb{ state {\fail} is reachable from initial state in
                        $\ezg^b_\cP$ by a restart-safe path if and  only if  $\cP$ has no answer sets.}
        \end{itemize}
\end{thm}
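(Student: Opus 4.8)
The plan is to derive all three parts from Theorem~\ref{prop:ezcsp} by exploiting that $\ezg^b_\cP$ is a subgraph of the generic graph $\ezg_\cP$ induced by the strategy (1)--(5): every state relative to $\cP$ is a node of both graphs, the edges of $\ezg^b_\cP$ form a subset of those of $\ezg_\cP$, and the notion of a semi-terminal state (no basic rule is applicable, in the generic sense) is a property of the node and hence identical in the two graphs. Consequently, every path --- and in particular every restart-safe path --- of $\ezg^b_\cP$ is also a path of $\ezg_\cP$ of the same type. The only genuinely new fact I must establish is that the navigation restrictions of items (2) and (4)--(5) never strand the search at a node from which a basic rule is still generically applicable.

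First, for part (a): finiteness of every restart-safe path of $\ezg^b_\cP$ is immediate, since such a path is a restart-safe path of $\ezg_\cP$ and Theorem~\ref{prop:ezcsp}(a) already bounds those. For the claim that a maximal restart-safe path ends in a semi-terminal state, I would argue by contradiction, as in the proof of Theorem~\ref{prop:ezcsp}(a): if the final node $S$ were not semi-terminal, some basic rule would be generically applicable to $S$, and I would exhibit an $\ezg^b_\cP$-edge out of $S$ that extends the path while preserving restart-safety (appending a basic edge or an $\rlp$ edge creates no $\rpr$ or $\rcr$ edge and hence cannot break restart-safety), contradicting maximality. The subtlety is that the strategy may suppress exactly the basic edge that $\ezg_\cP$ would use: item (2) drops the $\rcp$ edge only at nodes where some other basic rule ($\rd$, $\rb$, $\rfail$, or $\rap$) is applicable, but that other basic edge is retained, so $S$ is extendable; and items (4)--(5) replace the $\rb$ or $\rfail$ edge out of a freshly produced conflict node $M'\bot$ (created by $\rcp$) with a forced $\rlp$ followed by $\rcr$, and that $\rlp$ edge extends the path. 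Hence in every case a non-semi-terminal node admits an extending $\ezg^b_\cP$-edge, so a maximal restart-safe path must terminate at a semi-terminal node.

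Parts (b) and (c) then follow almost formally. For (b), a semi-terminal state $M||\Gamma||\Lambda$ reachable from the initial state in $\ezg^b_\cP$ is, by the subgraph inclusion, reachable in $\ezg_\cP$, and it is semi-terminal in $\ezg_\cP$ because semi-terminality is a generic property of the node; Theorem~\ref{prop:ezcsp}(b) then gives that $M$ is an answer set of $\cP$. For (c), the left-to-right direction is immediate: a restart-safe path to $\fail$ in $\ezg^b_\cP$ is one in $\ezg_\cP$, so Theorem~\ref{prop:ezcsp}(c) yields that $\cP$ has no answer set. For the right-to-left direction I would mirror the proof of Theorem~\ref{prop:ezcsp}(c): assuming $\cP$ has no answer set, I build a maximal restart-safe path from the initial state (one exists, since the strategy is a legitimate navigation and, by part (a), every restart-safe path is finite), which by part (a) ends in a semi-terminal state; by part (b) that state cannot carry an answer set, so it must be $\fail$.

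I expect the main obstacle to be precisely the maximality argument of part (a): I must confirm that the two path-shaping restrictions with no analogue in $\ezg_\cP$ --- suppression of $\rcp$ at multiply-applicable nodes (item 2) and the forced $\rlp$-then-$\rcr$ continuation after a CSP conflict (items 4--5) --- always furnish an alternative outgoing $\ezg^b_\cP$-edge at any node where a basic rule remains generically applicable. In particular, for the right-to-left direction of (c) I would additionally note that whenever $\rcp$ produces a conflict node $M'\bot$, a cp-entailed denial precluding $M$ genuinely exists (the no-good over the constraint literals of $M$ is satisfied by every answer set of $\cP$ yet violated by $M$, and it is new), so the mandated $\rlp$ step of item (4) is always available and the construction cannot stall before reaching $\fail$.
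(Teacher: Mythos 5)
Your proposal is correct and follows essentially the same route as the paper: all three parts are reduced to Theorem~\ref{prop:ezcsp} via the observation that $\ezg^b_\cP$ is a subgraph of $\ezg_\cP$, with part (a) reusing the termination/maximality argument of that theorem, part (b) using the correspondence of semi-terminal states between the two graphs, and part (c) obtained by the subgraph inclusion in one direction and by mirroring the earlier proof in the other. The case analysis you single out as the main obstacle --- that restrictions (2) and (4)--(5) never strand the search, and that a fresh cp-entailed no-good always exists after a \rcp conflict --- is exactly what the paper compresses into its one-line assertion in part (b) that every non-semi-terminal state of $\ezg_\cP$ remains non-semi-terminal in $\ezg^b_\cP$ (``it is easy to see''); your write-up simply makes that step explicit.
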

\begin{proof}
        Let $\cP$ be a CA program $\langle\Pi,\cC,\gamma,D\rangle$.
        
                \smallskip
                \noindent
        (a) This part is proved as part (a) in proof of Theorem~\ref{prop:ezcsp}. 
        
        \smallskip
        \noindent
        (b) Graph $\ezg^b_\cP$  is the subgraph of $\ezg_\cP$. At the same time it is easy to see that any non semi-terminal state in $\ezg_\cP$ is also a non semi-terminal state in $\ezg^b_\cP$.
        Thus, claim (b) follows from Theorem~\ref{prop:ezcsp} (b).

\smallskip
        \noindent
        (c) Left-to-right part of the proof follows from Theorem~\ref{prop:ezcsp} (c, left-to-right)  and the fact that $\ezg^b_\cP$ is a subgraph of $\ezg_\cP$.
        
        Right-to-left part of the proof follows the lines of
        Theorem~\ref{prop:ezcsp} (c, right-to-left).
\end{proof}

\paragraph{\bf \gbox: }
A configuration of {\ezcsp} that invokes an answer set solver  via
{\gbox} integration for
enumerating answer sets of asp-abstraction program is captured by the strategy in
navigating the graph~$\ezg_\cP$ that differs from the strategy of \bbox in rules 1 and 5 only.
Below we present only these rules.
\begin{itemize}
\item [1.] $\rcr$  never applies,
\item [5.] $\rpr$ follows immediately after an
application of the rule $\rlp$. $\rpr$ does not apply under any other condition.
\end{itemize}

\paragraph{\bf \cbox: }
A configuration of {\ezcsp} that invokes an answer set solver  via
{\cbox} integration for
enumerating answer sets of asp-abstraction program is captured by the following strategy in
navigating the graph~$\ezg_\cP$
\begin{itemize}
\item $\rcr$ and $\rpr$  never apply.
\end{itemize}

Similar to the \bbox case, the specifications of the \gbox and \cbox strategies  form  subgraphs of the graph $\ezg_\cP$.  Theorem~\ref{prop:ezcsp} holds if we replace $\ezg_\cP$ by
these subgraphs. We avoid stating  formal proofs as they follow the lines of proof for Theorem~\ref{prop:ezgb}.

We note that the outlined strategies provide only a skeleton of  the algorithms implemented in these systems.
Generally, any particular
configurations of {\ezcsp} can be captured by some subgraph of
$\ezg_\cP$. The provided specifications of {\bbox}, {\gbox}, and {\cbox} scenarios
allow more freedom than specific configurations of {\ezcsp} do.
 For example, in any setting of {\ezcsp} it will never
follow an edge due to the transition {\rd} when the transition {\rap}
is available. Indeed, this is a design choice of all available answer
set solvers that {\ezcsp} is based upon. The provided skeleton is
meant to highlight the essence of
key differences between the variants of integration approaches. 
For instance, it is apparent that any application of $\rcr$ forces us
to restart the search process by forgetting about atomic part of a current state as well as some previously learnt clauses. The {\bbox}
integration architecture is the only one allowing this transition.

As discussed earlier, the schematic rule $\rap$ is more informative than any real
propagator implemented in any answer set solver. These solvers are
only able to identify some literals that are asp-entailed by a
program with respect to a state. Thus if a program is extended with
additional denials a specific propagator may find additional literals
that are asp-entailed. This observation is important in
understanding the benefit that  $\rpr$ provides in comparison to
$\rcr$. Note that applications of these rules highlight the difference
between $\bbox$ and $\gbox$.

\section{Application Domains}\label{sec:domain}
In this work we compare and contrast different integration schemas of
hybrid solvers  on three application domains
that stem from various subareas of computer science:  {\sl
  weighted-sequence}~\cite{lierPadl12}, {\sl incremental scheduling}~\cite{bal11},
  {\sl reverse folding}.
The weighted-sequence domain is a handcrafted
 benchmark, whose  key features  are inspired by the important
 industrial problem of finding an optimal join
order by cost-based query optimizers in database systems. 
The problem is not only practically relevant but proved to be hard for
current ASP and CASP technology as illustrated in~\cite{lierPadl12}.
The incremental scheduling domain stems from a problem
occurring in commercial printing.  CASP
offers an elegant solution to it.  
The {\sl reverse folding} problem is inspired by VLSI design --  the process of creating an integrated circuit  by combining thousands of transistors into a single chip. 

This section provides a
brief overview of these applications. 
All benchmark domains are 
from the {\sl Third  Answer Set Programming Competition --
  2011} ({\aspcomp})~\cite{aspcomp3}, 
in particular, 
 the {\sl Model and Solve} track. 
We chose these domains for our investigation for several reasons. 
First, these problems touch on applications relevant to various
industries. Thus, studying different possibilities to model and solve
these problems is of value.
Second, each one of them displays
features that benefit from the synergy of computational methods in ASP
and CSP. Each considered problem contains variables ranging over a
large integer domain thus making grounding required in pure ASP a
bottleneck. Yet, the modeling capabilities
of ASP and availability of such sophisticated solving techniques such as learning
makes ASP attractive for designing solutions to these domains. 
As a result,  CASP languages and  solvers become a natural choice for
these benchmarks making them ideal for our investigation. 
 
\smallskip
\noindent
{\bf Three Kinds of CASP Encodings:} Hybrid languages
such as CASP 
combine constructs and processing techniques stemming from different
formalisms. As a result, depending on how the encodings are crafted,
one underlying solver may be used more heavily than the other.
For example, any ASP encoding of a problem is also a CASP
formalization of it. Therefore, the computation for such encoding relies
 entirely on the base solver and the features and performance
of the theory solver are irrelevant to it. We call this a {\em pure-ASP} encoding.
At the other end of the spectrum are {\em pure-CSP} encodings: encodings that
consist entirely of ez-atoms. From a computational perspective, such an encoding
exercises only the theory solver. (From a specification perspective,
the use of CASP is still meaningful, as it allows for a convenient,
declarative, and at the same time executable specification of the
constraints.)
In the middle of the spectrum are {\em true-CASP} encodings, which, typically,
are non-stratified and include collections of ez-atoms expressing constraints whose solution is non-trivial.

\st
An analysis of these varying kinds of encodings in CASP gives us a better
perspective on how different integration schemas are affected by the
design choices made during the encoding of a problem. At the same time
considering the encoding variety allows us to verify our
intuition that true-CASP is an appropriate modeling and solving choice
for the explored domains.    
We  conducted experiments on encodings
falling in each category for all benchmarks considered. 

\smallskip
\noindent
The {\bf weighted-sequence} (\wseq) domain is a handcrafted benchmark 
problem.
Its key features are inspired by the important 
industrial problem of finding an optimal join order by cost-based query
optimizers in database systems. 
\citeauthor{lierPadl12}~(\citeyear{lierPadl12}) provides a
 complete description of the problem itself as well as the formalization named {\seq}
 that became the encoding used in the present paper.

\st
In the weighted-sequence problem we are given a set of leaves (nodes) and an
integer $m$ -- maximum cost. Each leaf is a pair {\em (weight, cardinality)} where
{\em weight} and {\em cardinality} are integers. Every sequence (permutation) of leaves is
such that all leaves but the first are assigned a {\em color} that, in
turn, associates a leaf with a {\em cost} (via a cost formula). 
A colored sequence is
associated with the {\em cost} that is a sum of leaves costs. 
The task is to find a colored sequence
with cost at most $m$. We refer the reader to \cite{lierPadl12}
for the details of pure-ASP encoding \seq. The same paper also
contains the details on a true-CASP variant of {\seq} in the language of
{\clingcon}. We further adapted that encoding to the {\ez} language by
means of simple syntactic transformations. Here we provide a 
review of details of the {\seq} formalizations using pure-ASP and the \ez language that
we find most relevant to this presentation. The reader can refer to~\ref{sec:ez-language} for details on the syntax used. 
The non-domain predicates of the pure-ASP encoding are 
$\mathit{leafPos}$, $posColor$, $posCost$. Intuitively,
 $\mathit{leafPos}$ is responsible for assigning a position to a
leaf,  $posColor$ is responsible for assigning a color to each position,
 $posCost$ carries information on costs associated with
each leaf. Some rules used to define these relations are given in Figure~\ref{fg:wseqrules}.
\begin{figure}[htbp]
\bsmall
\fbox{
$
\begin{array}{l}
\mbox{\% Give each leaf a location in the sequence} \\
1\{ leafPos(L,N) : location(N) \}1 \hif leaf(L). \\
\mbox{\% No sharing of locations} \\
\hif leafPos(L1, N), leafPos(L2, N), location(N), L1 \neq L2. \\
\\
\mbox{\% green if (weight(right) + card(right)) $<$ (weight(left) + leafCost(right))}\\
posColor(1,green) \hif leafPos(L1,0), leafPos(L2,1), \\
\hspace*{1.48in} leafWeightCardinality(L1,WL,CL), \\
\hspace*{1.48in} leafWeightCardinality(L2,WR,CR), \\
\hspace*{1.48in} leafCost(L2,W3),\\
\hspace*{1.48in} W1=WR+CR,\ W2=WL+W3,\\
\hspace*{1.48in} W1 < W2. \\
\\
\mbox{\% posCost for first coloredPos}\\
posCost(1,W) \hif posColor(1,green), leafPos(L,1), \\
\hspace*{1.2in} leafWeightCardinality(L,WR,CR), \\
\hspace*{1.2in} max\_total\_weight(MAX), \\
\hspace*{1.2in} W=WR+CR, W \leq MAX. \\
posCost(1,W) \hif \lpnot posColor(1,green), leafPos(L1,0), leafPos(L2,1),  \\
\hspace*{1.2in} leafWeightCardinality(L1,WL,CL), leafCost(L2,WR),\\
\hspace*{1.2in} max\_total\_weight(MAX),\\
\hspace*{1.2in} W=WL+WR,\ W \leq MAX. \\
\\
\mbox{\% Acceptable solutions}\\
acceptable \hif \#sum[nWeight(P,W)=W:coloredPos(P)] MAX, \\
\hspace*{.92in} max\_total\_weight(MAX).\\
\hif \lpnot acceptable.
\end{array}
$
}
\esmall
\caption{Some typical rules of the pure-ASP language formalization of \wseq.}\label{fg:wseqrules}
\end{figure}

The first two rules in Figure~\ref{fg:wseqrules} assign a distinct
location to each leaf. The next rule is 
part of the color assignment. The following two rules are part of the cost determination.
The final two rules ensure that the total cost is within the specified limit.

The main difference between the
pure-ASP and true-CASP encodings is in the treatment of the cost values of
the leaves. We first note that  cost  predicate
 $posCost$  in the pure-ASP encoding
 is ``functional". In other words, 
when this predicate occurs in an answer
set, its first argument uniquely determines its second argument.
 Often, such functional
predicates in ASP encodings can be replaced by ez-atoms\footnote{We
  abuse the term ez-atom and refer to ``non-ground'' atoms of the \ez
  language that result in ez-atoms by the same name.} in 
CASP encodings.
Indeed, this is the case in the weighted-sequence
problem. Thus in the true-CASP encoding, the definition of $posCost$
is replaced by suitable ez-atoms, making it possible to evaluate
cost values by CSP techniques. This approach is expected to
benefit performance especially when the cost values are large.
Some of the corresponding rules follow:
\bsmall
\[
\begin{array}{l}
\mbox{\% posCost for first coloredPos}\\
required(posCost(1)=W)\hif posColor(1,green), leafPos(L,1), \\
\hspace*{1.1in} leafWeightCardinality(L,WR,CR), W=WR+CR.\\
required(posCost(1)=W) \hif \lpnot posColor(1,green), \\
\hspace*{1.1in} leafPos(L1,0), leafPos(L2,1), \\
\hspace*{1.1in} leafWeightCardinality(L1,WL,CL), leafCost(L2,WR), \\
\hspace*{1.1in} W=WL+WR. \\
\\
\mbox{\% Acceptable solutions}\\
required(sum([posCost/1],\leq,MV)) \hif max\_total\_weight(MV).
\end{array}
\]
\esmall
The first two rules are rather straightforward translations of the ASP equivalents.
The last rule uses a global constraint to ensure acceptability of the total cost.

The pure-CSP encoding is obtained from the true-CASP encoding by
replacing the definitions of $\mathit{leafPos}$ and $posColor$ predicates by constraint
atoms.
The replacement is based on the observation that $\mathit{leafPos}$ and $posColor$
are functional.
\bsmall
\[
\begin{array}{l}
\mbox{\% green if (weight(right) + card(right)) $<$ (weight(left) + leafCost(right))}\\
is\_green(1,L1,L2) \hif leafWeightCardinality(L1,WL,CL), \\
\hspace*{1.48in} leafWeightCardinality(L2,WR,CR), \\
\hspace*{1.48in} leafCost(L2,W3), \\
\hspace*{1.48in} W1=WR+CR,\ W2=WL+W3,\\
\hspace*{1.48in} W1 < W2.\\

required(posColor(1)=green \leftarrow (leafPos(L1)=0 \ \land \ leafPos(L2)=1)) \hif\\
\aspindent leaf(L1),\ leaf(L2),\ is\_green(1,L1,L2).
\end{array}
\]
\esmall
As shown by the last rule, color assignment requires the use of reified constraints. It is important to note that symbol $\leftarrow$ within the scope of $required$  stands for material implication.
Color names are mapped to integers by introducing additional variables. For
example, variable $green$ is associated with value $1$ by a variable declaration
$cspvar(green,1,1)$.
Interestingly, no ez-atoms are needed for the definition of $leafPos$. The role of
the choice rule above is implicitly played by the variable declaration
$$cspvar(leafPos(L),0,N-1) \hif leaf(L), location(N).$$


\smallskip
\noindent
The {\bf incremental scheduling} (\is) domain stems from a problem
occurring in commercial
printing. In this domain,  a schedule is maintained up-to-date with respect to 
 jobs being added and  equipment going offline.
A problem description includes a set of
devices, each with predefined number of instances (slots for jobs), and a set of 
jobs to be produced. The penalty for a 
job being late is computed as $td \cdot imp$, where $td$ 
is the job's tardiness and $imp$ is a positive integer denoting 
the job's importance.
The total penalty of a schedule is the sum of the penalties 
of the jobs.
The task is to find a schedule whose
total penalty is no larger than the value  specified in
 a problem instance.
We direct the reader to~\cite{bal11} for more details on this domain.
We start by describing the pure-CSP encoding and then illustrate how it relates to the true-CASP encoding.

The pure-CSP encoding used in our experiments is the
official competition encoding submitted
to {\aspcomp} by the {\ezcsp} team.
In that encoding, constraint atoms are used for (i) 
assigning start times to jobs, (ii) selecting which
device instance will perform a job, and (iii) 
calculating tardiness and penalties.
Core rules of the encoding are shown in Figure~\ref{fig:ifex}.
\begin{figure}[htbp]
\begin{center}
\fbox{
$
\begin{array}{l}
\mbox{\% Assignment of start times: cumulative constraint}\\
required(cumulative([st(D)/2], \\
\hspace*{.65in}[operation\_len\_by\_dev(D)/3], \\
\hspace*{.65in}[operation\_res\_by\_dev(D)/3], \\
\hspace*{.65in}N)) \hif \\
\aspindent instances(D,N). \\
\mbox{\% Instance assignment}\\
required((on\_instance(J1) \neq on\_instance(J2)) \ \lor  \\
\hspace*{.65in}(st(D,J2) >= st(D,J1) + Len1) \ \lor \\
\hspace*{.65in}(st(D,J1) >= st(D,J2) + Len2)) \ \hif \\
\aspindent instances(D,N), N > 1, \\
\aspindent job\_device(J1,D), job\_device(J2,D), J1 \neq J2, \\
\aspindent job\_len(J1,Len1), job\_len(J2,Len2). \\
\mbox{\% Total Penalty}\\
required(sum([penalty/1],=,tot\_penalty)). \\
required(tot\_penalty \leq K) \hif max\_total\_penalty(K).
\end{array}
$
}
\end{center}
\caption{Rules of the pure-CSP formalization of \is.}\label{fig:ifex}
\end{figure}

The ez-atom of the first rule uses a global constraint to specify that the start times must be 
assigned in such a way as to ensure that no more than $n_d$ jobs are executed at any time,
where $n_d$ is the number of instances of a given device $d$.
The  ez-atom of the second rule uses reified constraints with the $\lor$ connective (disjunction) to guarantee that at most one job
is executed on a device instance at every time.
The  ez-atom of the third rule uses a global constraint to define total penalty. The last rule
restricts  total penalty to be within the allowed maximum value.

\st
The true-CASP encoding was obtained from the pure-CSP encoding
by introducing a new relation $on\_instance(j,i)$, stating
that job $j$  runs on device-instance $i$. 
The rules formalizing the assignment of device instances in the
pure-CSP encoding were replaced by ez-atoms. For example, the second
rule from Figure~\ref{fig:ifex} was replaced by: 
\bsmall
\[
\begin{array}{l}
1 \{ on\_instance(J,I) : instance\_of(D,I) \} 1 \hif job\_device(J,D). \\
\end{array}
\]
\[
\begin{array}{l}
required((st(D,J2) \geq st(D,J1) + Len1)\ \lor \\
\hspace*{0.65in}(st(D,J1) \geq st(D,J2) + Len2)) \hif \\
\aspindent on\_instance(J1,I), on\_instance(J2,I), \\
\aspindent instances(D,N), N > 1, \\
\aspindent job\_device(J1,D), job\_device(J2,D), J1 \neq J2, \\
\aspindent job\_len(J1,Len1), job\_len(J2,Len2). \\
\end{array}
\]
\esmall
The main difference with respect to the ez-atom of the pure-CSP encoding is
the introduction of a choice rule to select an instance $I$ for a job $J$.
The constraint that each instance processes at most one job at a time
is still encoded using an ez-atom.

\st
Finally, the pure-ASP encoding was obtained from the true-CASP encoding
by introducing suitable new relations, such as $start(j,s)$
and $penalty(j,p)$, to replace all remaining ez-atoms. 
The rules that replace the first rule in Figure~\ref{fig:ifex}  follow:
\bsmall
\[
\begin{array}{l}
1 \{ start(J,S) : time(S) \} 1 \hif job(J). \\
\\
\hif on\_instance(J1,I), on\_instance(J2,I), J1 \neq J2, \\
\aspindhif job\_device(J1,D), job\_device(J2,D), \\
\aspindhif start(J1,S1), job\_len(J1,L1), start(J2,S2), \\
\aspindhif S1 \leq S2, S2 < S1 + L1.
\end{array}
\]
\esmall
The last two rules in Figure~\ref{fig:ifex} are replaced by the rules in the pure-ASP encoding:
\bsmall
\[
\begin{array}{l}
tot\_penalty(TP) \hif ~TP~[~ penalty(J,P)=P ~]~TP. \\
\hif~ \lpnot [ penalty(J,P) = P ] Max, max\_total\_penalty(Max).\\
\end{array}
\]
\esmall
In the {\bf reverse folding} (\rf) domain,
one manipulates a sequence of $n$
pairwise connected segments located on a 2D plane in order to take the sequence from
an initial configuration to a goal configuration.
The sequence is manipulated by pivot moves: rotations of a segment around its
starting point by 90 degree in either direction.
A pivot move on a segment causes
the segments that follow to rotate around the same
center. 
Concurrent pivot moves are prohibited. At the end of each
move, the segments in the sequence must not intersect.
A problem instance specifies the number of segments, 
the goal configuration, and required number of moves denoted by $t$.
The task is to find a sequence of exactly
$t$ pivot moves that produces the goal configuration.

The true-CASP encoding used for our experiments is from
the official {\aspcomp2011} submission package of the {\ezcsp} team.
In this encoding, relation $pivot(s,i,d)$ states that at step
$s$ the $i^{th}$ segment is rotated in direction $d$.
The effects of pivot moves are described by 
ez-atoms, which allows us to carry out the corresponding calculations
with CSP techniques.
\bsmall
\[
\begin{array}{l}
pivot(1,I,D) \hif first(I), requiredMove(I,D).\\
pivot(N1,I1,D1) \hif pivot(N2,I2,D2), N1 = N2 + 1, \\
\hspace*{1.39in} requiredMove(I1,D1), requiredMove(I2,D2),\\
\hspace*{1.39in} next(I1,I2).\\
\end{array}
\]
\[
\begin{array}{l}
\mbox{\% Effect of pivot(t,i,d)}\\
required(tfoldy(S2,I)=tfoldx(S1,P)-tfoldx(S1,I)+tfoldy(S1,P)) \hif \\
\aspindent step(S1), step(S2), S2=S1+1, \\
\aspindent pivot(S1,P,clock), \\
\aspindent index(I), I \geq P. \\
required(tfoldy(S2,I)=tfoldx(S1,I)-tfoldx(S1,P)+tfoldy(S1,P)) \hif \\
\aspindent step(S1), step(S2), S2=S1+1, \\
\aspindent pivot(S1,P,anticlock), \\
\aspindent index(I), I \geq P.
\end{array}
\]
\esmall
The first two rules are some of the rules used for determining
the pivot rotations. The determination is based on the technique
described in \cite{bl12}. The last two rules are part of the calculation
of the effects of pivot moves. Note that $tfoldx(s,i)$ and $tfoldy(s,i)$ denote the $x$ and $y$ coordinates of the start of segment $i$ at step $s$.

The pure-ASP encoding was obtained from the true-CASP encoding
by adopting an ASP-based formalization of the effects of
pivot moves. This was accomplished by introducing two new
relations, ${tfoldx}(s,i,x)$ and ${tfoldy}(s,i,y)$,
stating that the new start of segment $i$ at step $s$
is $\tbeg x, y \tend$.
The definition of the relations is provided by
suitable ASP rules, such as:
\bsmall
\[
\begin{array}{l}
tfoldy(S+1,I,Y2) \hif tfoldx(S,I,X1), pivot(S,P,D), I \geq P, \\
\hspace*{1.5in} tfoldx(S,P,XP), tfoldy(S,P,YP), X0 = X1 - XP,\\
\hspace*{1.5in} rotatedx(D,X0,Y0), Y2 = Y0 + YP. \\
rotatedx(clock,X,-X) \hif xcoord(X). \\
xcoord(-2*N..2*N) \hif length(N).
\end{array}
\]
\esmall
Differently from the previous domains, for {\rf} we were unable to formulate a pure-CSP variant of the true-CASP encoding. Thus, we resorted to the encoding described in \cite{dfp11}. This encoding leverages a mapping from action language $\mathcal{B}$ \cite{gel98} statements to numerical constraints, which are then solved by a CLP system.

\section{Experimental Results}\label{sec:experiments}

\begin{figure}[t]
\begin{center}
\includegraphics[clip=true,trim=40 250 40 250,width=1\columnwidth]{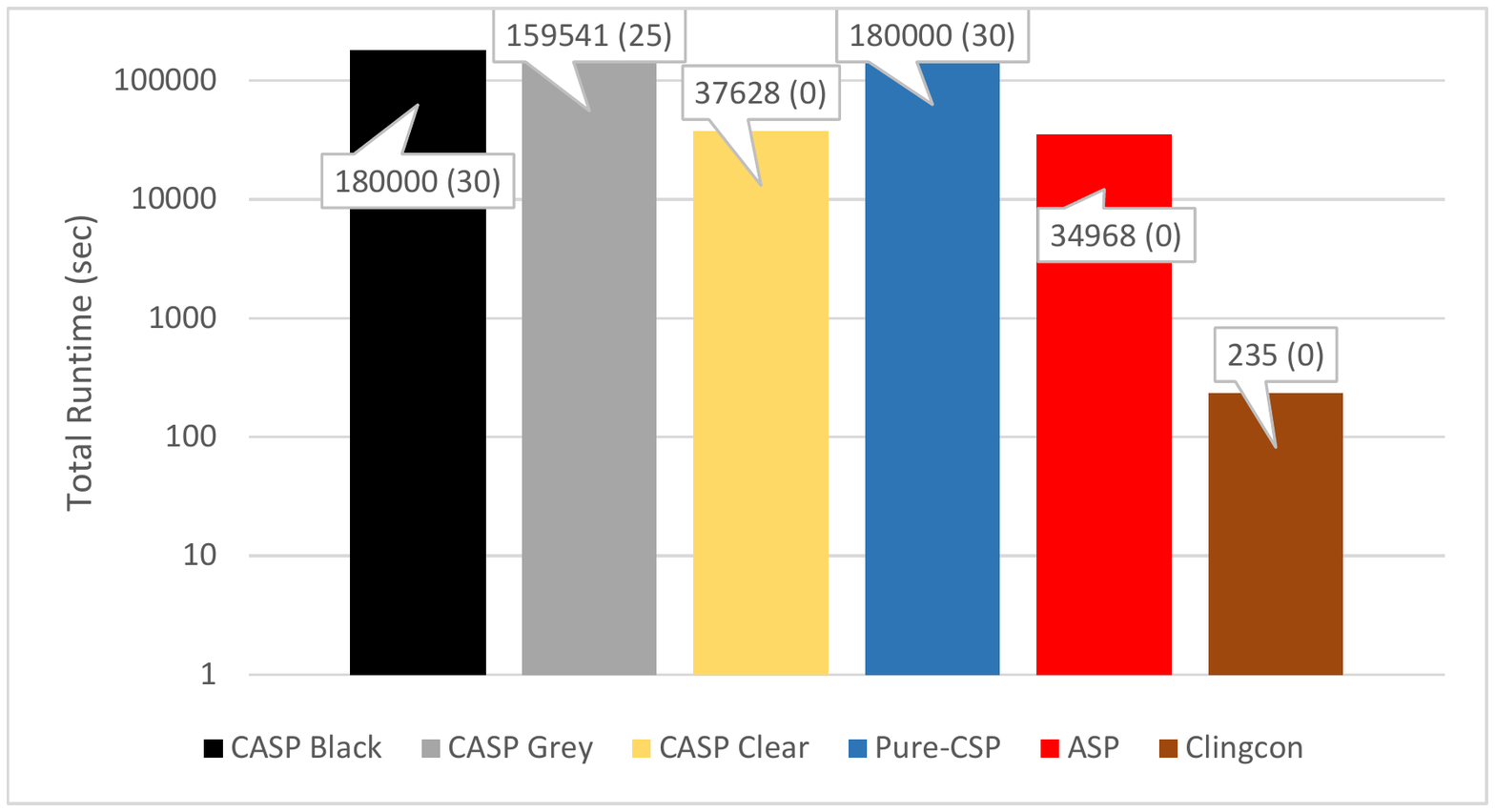}
\end{center}
\caption{Performance on {\wseq} domain: total times in logarithmic scale}
\label{fig:wseq-totals}
\end{figure}
\begin{figure}[t]
\begin{center}
\includegraphics[clip=true,trim=40 110 40 110,width=1\columnwidth]{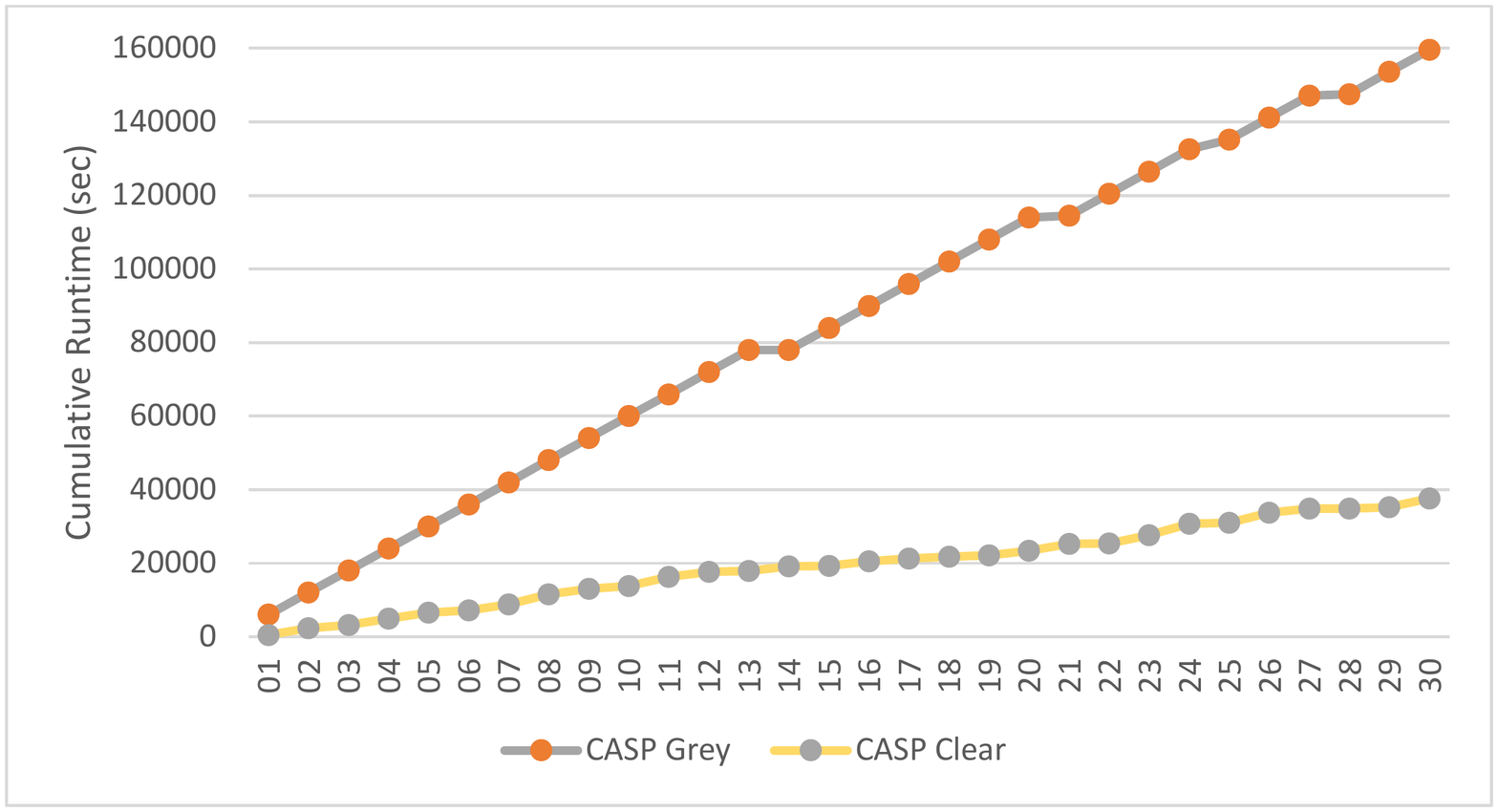}
\end{center}
\caption{Performance on {\wseq} domain: cumulative view (\gbox and \cbox)}
\label{fig:wseq-cumulative}
\end{figure}

\st
The experimental comparison of the integration schemas
was conducted on a computer with an Intel Core i7
processor at 3GHz and running Fedora Core 16.
The memory limit for each process 
and the timeout were set to $1$ GB
RAM\footnote{The instances that resulted in an out-of-memory were also tested with 4 GB RAM, with no change in the outcome.} and $6,000$ seconds respectively. A single processor core was used for every experiment.

The version of {\ezcsp}
used in the experiments was 1.6.20b49.
This version implements the {\bbox}, {\gbox}, and {\cbox}  integration schemas, when
 suitable API interfaces are available in the base
solver. One answer set solver that provides such interfaces is
{\cmodels}, which for this reason was chosen as base solver for the
experiments.  
It is worth noting that the development of the API in
{\cmodels} was greatly facilitated by the API provided by
{\minisat} v. 1.12b supporting non-clausal
constraints~\cite{minisat-manual} ({\minisat} forms the main
inference mechanism of {\cmodels}).
In the experiments, we used {\cmodels} version
3.83 as the base solver
and  {\bprolog} 7.4 as the theory solver.\footnote{We note that {\bprolog} is the default theory solver of {\ezcsp}. Command-line option {\tt --solver cmodels-3.83} instructs \ezcsp to invoke \cmodels 3.83 using the {\bbox} integration schema.
Command-line options {\tt --cmodels-incremental} and {\tt --cmodels-feedback}
instruct {\ezcsp} to use, respectively, the {\gbox} and {\cbox} integration schema. In these two cases, {\cmodels} 3.83 is automatically selected as the base solver.}
Answer set solver {\cmodels} 3.83 (with the inference mechanism of {\minisat} v. 1.12b)
was also used for the
experiments with
 the pure-ASP encodings. Unless otherwise specified, for all solvers we used their default configurations.

The executables used in the experiments and the
encodings can be downloaded, respectively, from 
\begin{itemize}
\item
\bsmall
\url{http://www.mbal.tk/ezcsp/int_schemas/ezcsp-binaries.tgz},\\
and
\esmall
\item
\bsmall
\url{http://www.mbal.tk/ezcsp/int_schemas/experiments.tgz}.
\esmall
\end{itemize}

In order to provide a frame of reference with respect
to the state of the art in CASP, the results also include 
performance information for {\clingcon} 2.0.3
on the true-CASP encodings adapted to the language of {\clingcon}.
\tcb{We conjecture that the choice of constraint solver by {\clingcon} (namely, {\gecode}) together with theory propagation
is the reason  for {\clingcon}'s better performance in a number of the experiments. Yet, in the context of our experiments,  the performance of {\clingcon} w.r.t. {\ezcsp} is
irrelevant. Our work is a comparative study of the impact of the
different integration 
schemas for a fixed selection of a base and theory solver pair. System {\ezcsp} provides us with essential means to perform this study.
}

In all figures presented:
CASP Black, CASP Grey, CASP Clear denote 
{\ezcsp} implementing respectively {\bbox}, {\gbox} and {\cbox}, and
running a true-CASP encoding; 
Pure-CSP denotes {\ezcsp} implementing {\bbox} running
a pure-CSP encoding (note that for pure-CSP encodings there is no
difference in performance between the integration schemas); 
ASP denotes {\cmodels} running a pure-ASP encoding;
Clingcon  denotes
{\clingcon} running a true-CASP encoding. Each configuration is
associated with the same color in all figures. A pattern is applied to the filling of the bars whenever the bar goes off-chart. The numbers in the
overlaid boxes report the time in seconds and, in parentheses, the
total number of timeouts and out-of-memory. 

\st
We begin our analysis with {\wseq} (Figures
\ref{fig:wseq-totals} and \ref{fig:wseq-cumulative}).
The total times across all
the instances for all solvers/encodings pairs considered are shown in
Figure~\ref{fig:wseq-totals}. Because of the large difference between best and worst performance, a logarithmic scale is used. For uniformity of presentation, in the
charts 
out-of-memory conditions and timeouts are
both rendered as out-of-time results. The instances used in the experiments are the 30
instances available via {\aspcomp}.
Interestingly, answer set solver {\cmodels} on the pure-ASP encoding has excellent performance, comparable to the best performance obtained with CASP encodings by \ezcsp.
Of the CASP encodings, the true-CASP encoding running in {\bbox} times out on every instance.
\st 
Figure~\ref{fig:wseq-cumulative} thus focuses on the cumulative run times of
{\cbox} and {\gbox} (on the true-CASP encoding). The numbers on the
horizontal axis identify the instances, while the vertical axis is for
the cumulative run time, that is, the value for instance $n$ is the
sum of the run times for instances $1 \ldots n$. Cumulative times were
chosen for the per-instance figures because they make for a more
readable chart when there is large variation between the run times for
the individual instances. As shown in
Figure~\ref{fig:wseq-cumulative},  
the true-CASP encoding running in {\cbox} performs \emph{substantially}
better than {\gbox}. 
 This demonstrates that,
for this domain, the tight integration schema has an advantage.

\begin{figure}[h]
\begin{center}
\includegraphics[clip=true,trim=40 250 40 250,width=1\columnwidth]{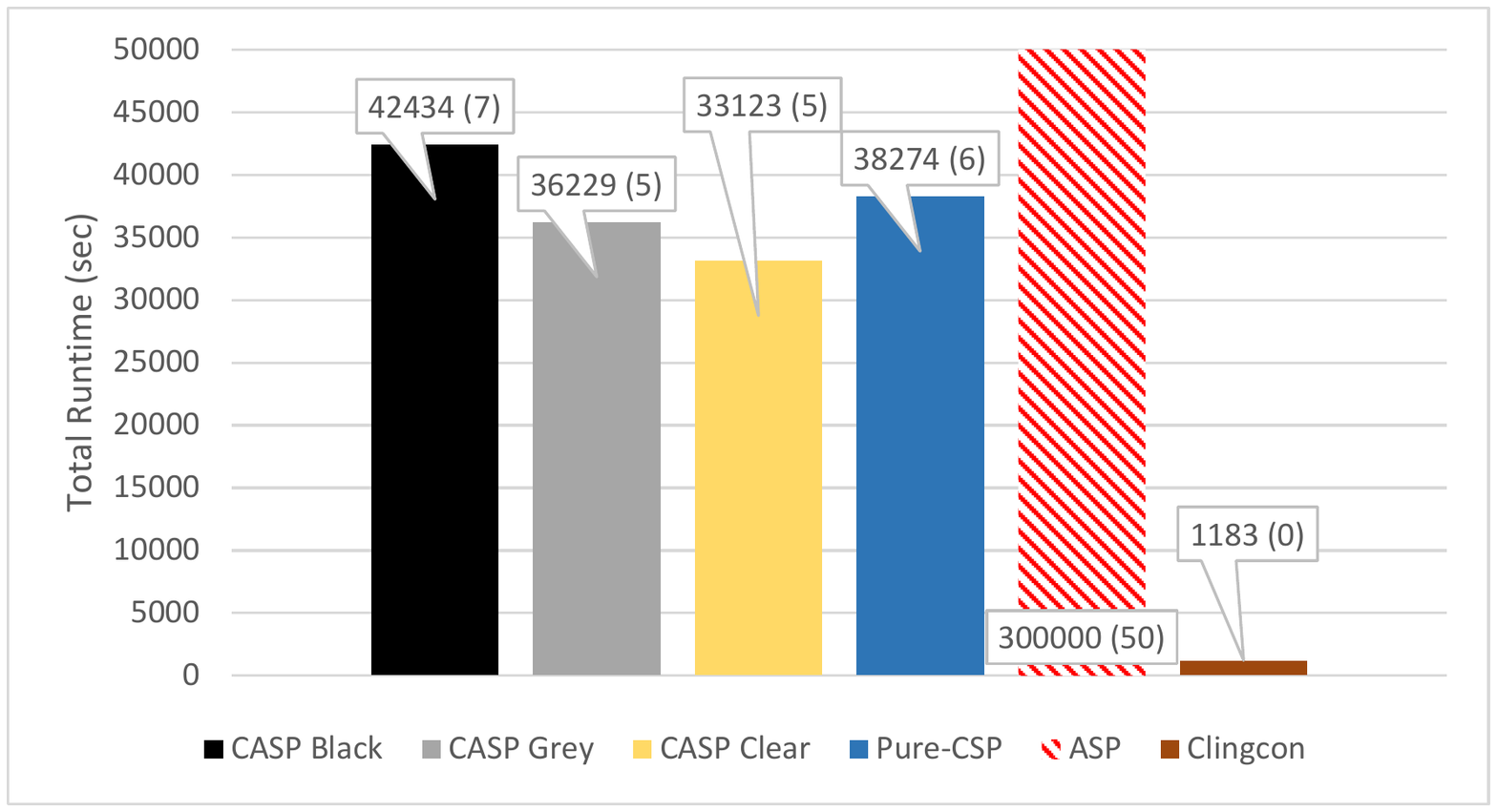}
\end{center}
\caption{Performance on {\is} domain, easy instances: total times (ASP encoding off-chart)}
\label{fig:is-easy-totals}
\end{figure}
\begin{figure}[h]
\begin{center}
\includegraphics[clip=true,trim=40 110 40 110,width=1\columnwidth]{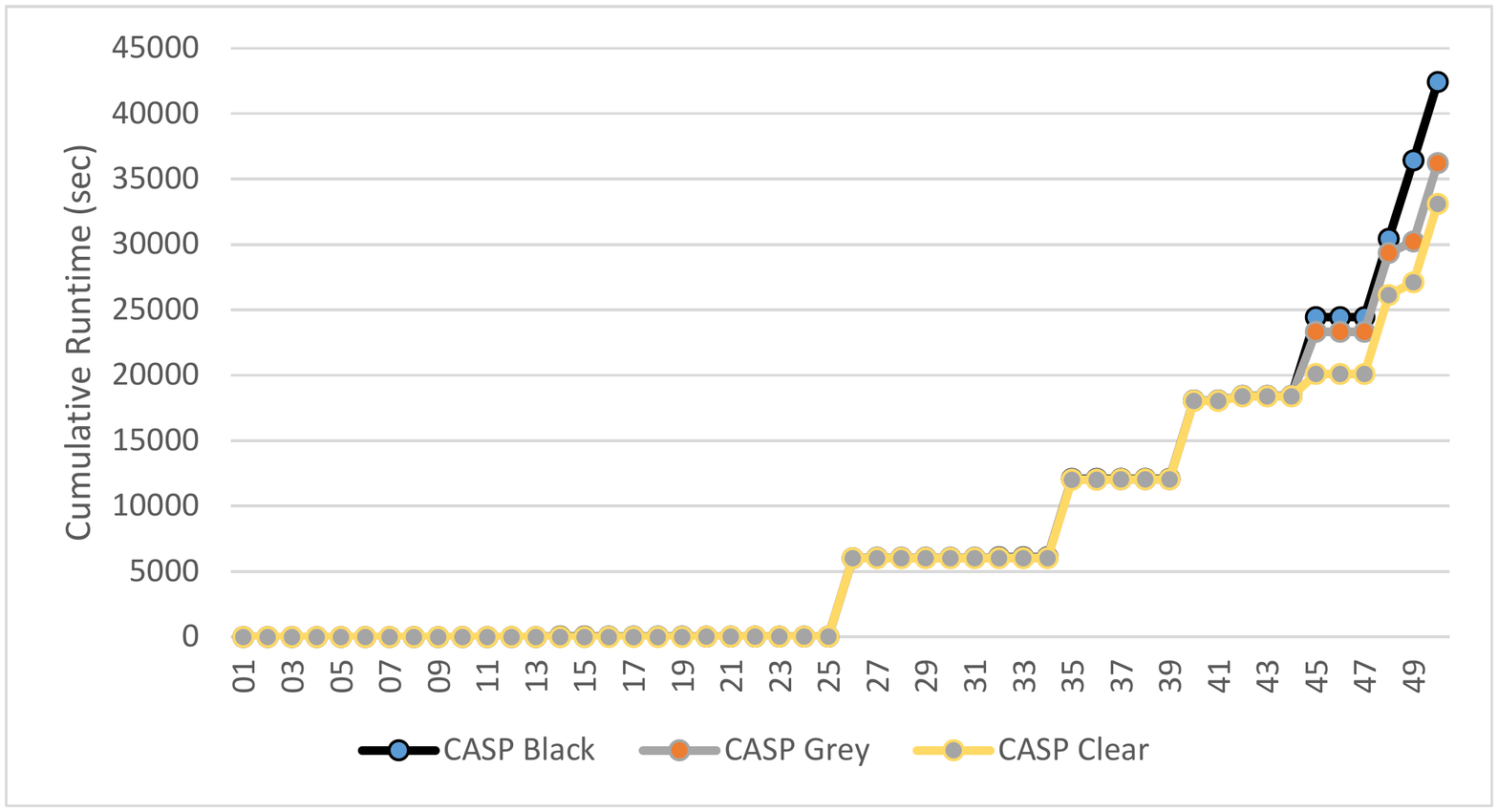}
\end{center}
\caption{Performance on {\is} domain, easy instances: cumulative view}
\label{fig:is-easy-cumulative}
\end{figure}

\begin{figure}[h]
\begin{center}
\includegraphics[clip=true,trim=40 250 40 250,width=1\columnwidth]{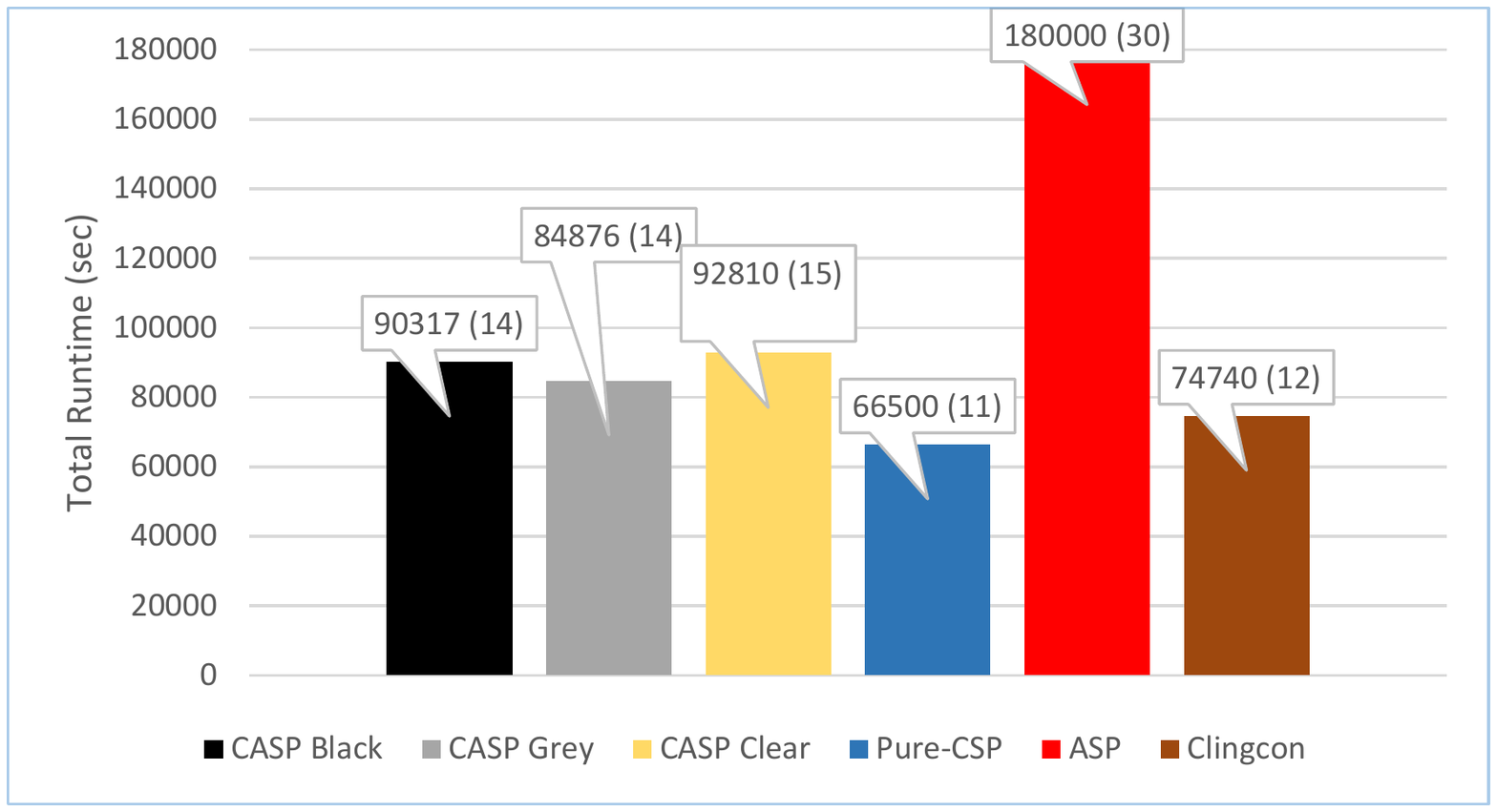}
\end{center}
\caption{Performance on {\is} domain, hard instances: overall view}
\label{fig:is-hard-totals}
\end{figure}
\begin{figure}[h]
\begin{center}
\includegraphics[clip=true,trim=40 250 40 250,width=1\columnwidth]{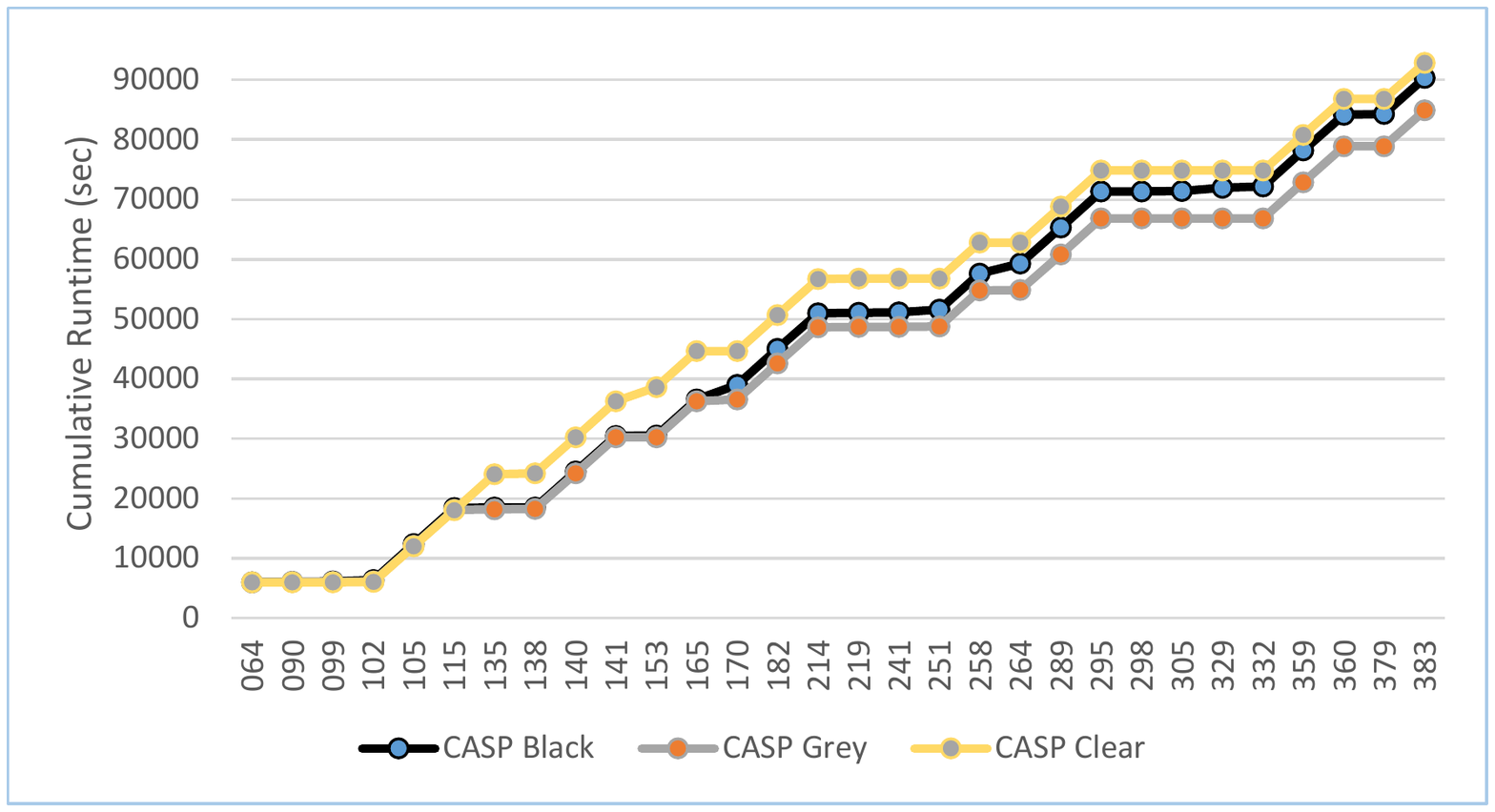}
\end{center}
\caption{Performance on {\is} domain, hard instances: cumulative view}
\label{fig:is-hard-cumulative}
\end{figure}

\st
In case of the {\is} domain we considered two sets of experiments. In
the first one (Figures~\ref{fig:is-easy-totals} and~\ref{fig:is-easy-cumulative}), we
used the 50 official instances from {\aspcomp}. We refer to these
instances as {\em easy}, since the corresponding run times are rather small.
Figure~\ref{fig:is-easy-totals}
provides a comparison of the total times.
Judging by the total times, tight integration schemas
appear to have an advantage, allowing the true-CASP encoding to outperform
the pure-CSP encoding. As one might expect, the
best performance for the true-CASP encoding is from the {\cbox} integration schema.
In this case the early pruning of the search space 
made possible by the {\cbox} architecture seems to yield substantial benefits.
As expected, {\gbox} is also faster than {\bbox}, while {\cmodels} on
the pure-ASP encoding runs out of memory in all the instances.

\st
%

\st
The second set of experiments for the {\is} domain (Figures \ref{fig:is-hard-totals} and \ref{fig:is-hard-cumulative})
consists of~$30$ instances that we generated to be substantially more complex than the ones
from {\aspcomp}, and that are thus called {\em hard}. As discussed below, this second set of experiments
reveals a remarkable change in the behavior of
solver/encodings pairs
when the instances require more computational effort.
The process we followed to generate the~$30$ {\em hard} instances
consisted in (1) generating randomly~$500$ fresh instances;
(2) running the true-CASP
encoding with the {\gbox} integration schema on them
with a timeout of~$300$ seconds; (3) 
selecting randomly, from those,~$15$ instances that resulted
in timeout and~$15$ instances that were solved in~$25$ seconds
or more.
The numerical parameters used in the process
were selected with the purpose of identifying more challenging instances
than those from the {\em easy} set and were based on the results on that set.
The execution times reported in
Figure~\ref{fig:is-hard-totals} 
clearly indicate the
level of difficulty of the selected instances (once again, {\cmodels} runs out of memory).
Remarkably, these more difficult instances
are solved more efficiently by the pure-CSP encoding that relies
only on the CSP solver.
In fact, the pure-CSP encoding outperforms every other method of computation
(\emph{including} {\clingcon} on true-CASP encoding).
More specifically, solving the instances with the true-CASP 
encoding takes between $30\%$ and
$50\%$ longer than with the pure-CSP encoding.
This was not the isolated effect of a few instances, but  rather a
constant pattern throughout the experiment. A possible explanation for
this phenomenon is that domain {\is} is overall best suited 
to the CSP solving procedures. It seems natural for the difference in performance
to become more evident as the problem instances become more challenging, when other
factors such as overhead play less of a role. This conjecture is compatible with 
the difference in performance observed earlier on the easy instances.

Another remarkable aspect highlighted by
Figure~\ref{fig:is-hard-cumulative} is that {\cbox} is 
outperformed by {\gbox}. This is the opposite of what was observed on
the easy instances and highlights the fact that there is no  single-best
integration schema, even when one focuses on true-CASP encodings. We
hypothesize this to be due to the nature of the underlying scheduling
problem, which is hard to solve, but whose relaxations (obtained by
dropping one or more constraints) are relatively easy. Under these
conditions, the calls executed by {\cbox} to the theory solver are
ineffective at pruning the search space and incur a non-negligible
overhead. (The performance of {\clingcon} is likely affected by the
same behavior.) In {\gbox}, on the other hand, no time is wasted
trying to prune the search space of the base solver, and all the time
spent in the theory solver is dedicated to solving the final CSP. The
performance of {\bbox} is likely due to the minor efficiency of its
integration schema compared to {\gbox}.  

\begin{figure}[h]
\begin{center}
\includegraphics[clip=true,trim=40 250 40 250,width=1\columnwidth]{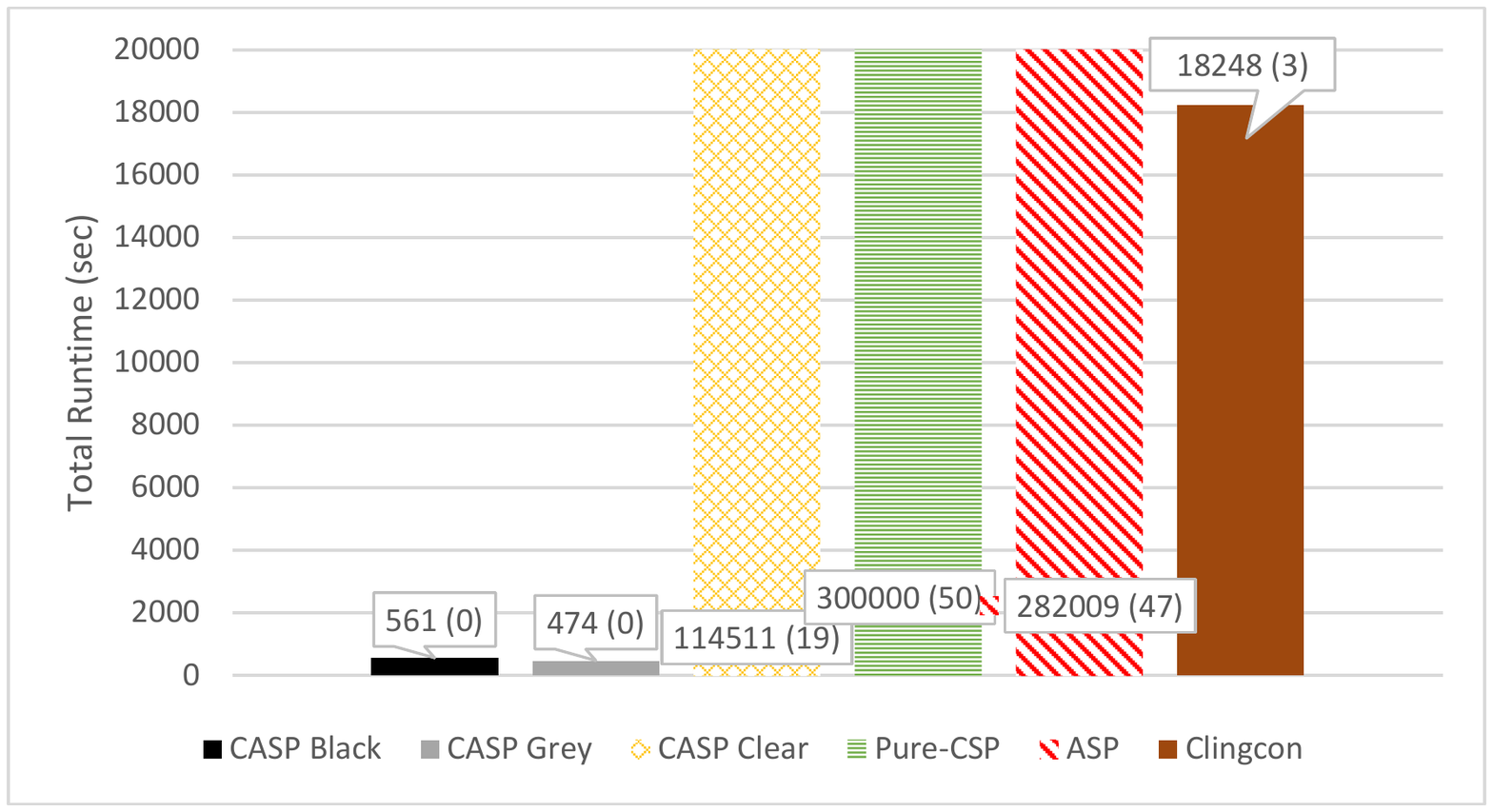}
\end{center}
\caption{Performance on {\rf} domain: total times (detail of 0-20,000sec execution time range, {\cbox} and pure-ASP off-chart)}
\label{fig:rf-totals}
\end{figure}
\begin{figure}[h]
\begin{center}
\includegraphics[clip=true,trim=40 100 40 110,width=1\columnwidth]{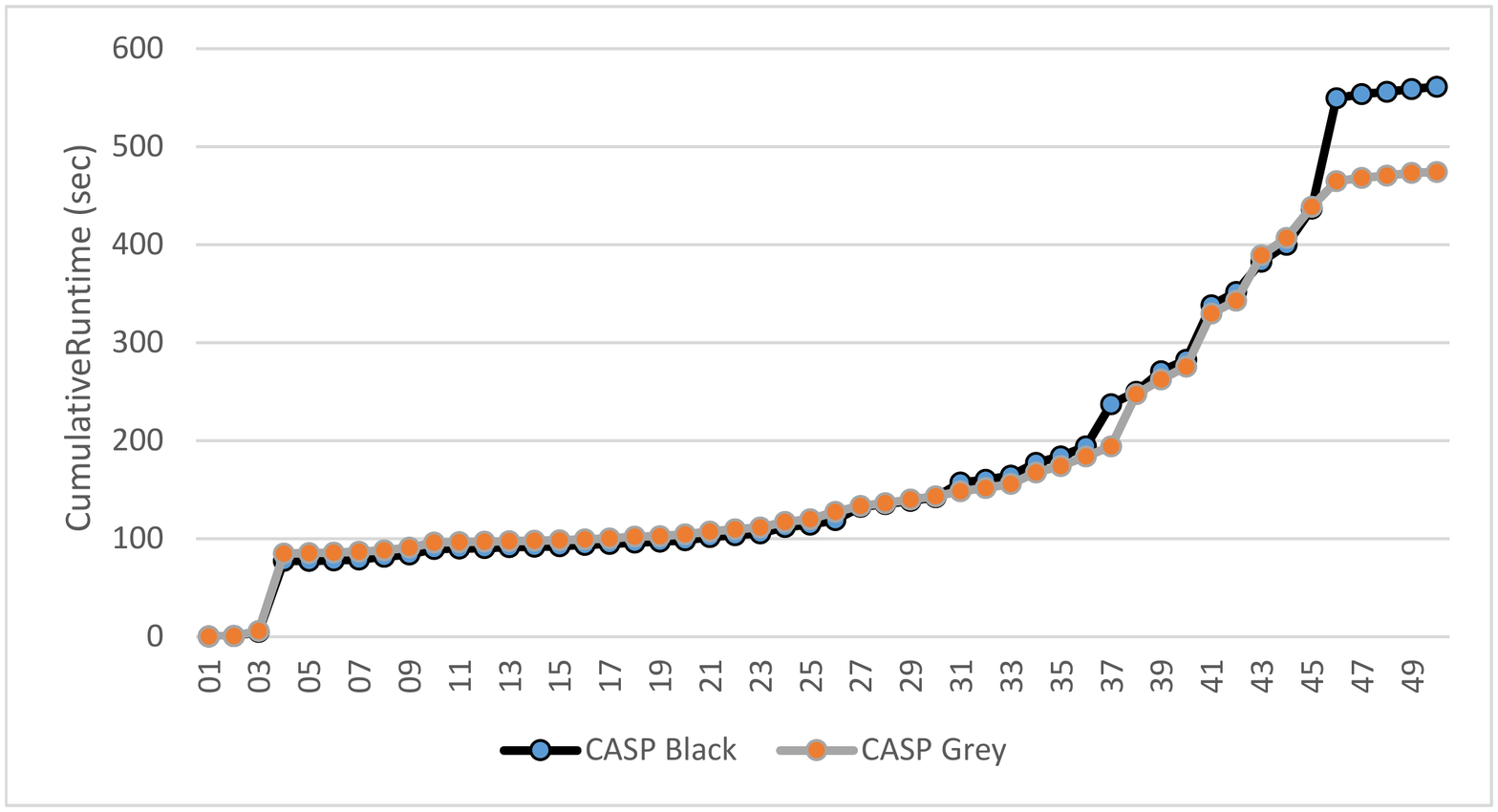}
\end{center}
\caption{Performance on {\rf} domain: cumulative view (\bbox and \gbox)}
\label{fig:rf-cumulative}
\end{figure}

\st
The final experiment focuses on the {\rf} domain (Figures
\ref{fig:rf-totals} and \ref{fig:rf-cumulative}). 
The instances used
in this experiment are the 50 official instances from {\aspcomp}.
The total execution times are presented in 
Figure~\ref{fig:rf-totals}. 
Although the instances for this domain are comparatively easy, as
suggested by the {\bbox} and {\gbox} times, some of the configurations
have high total execution times.
The {\cbox} encoding is also off-chart, due to timeouts on~$19$ instances.
This is a substantial difference in performance compared to the other
true-CASP configurations, upon which we expand later in this
section. Surprisingly, the total time of {\clingcon} is also close to
off-chart. 
Upon closer inspection, we have found this to be due to~$3$
instances for which {\clingcon} runs out of memory. 
This is an interesting instance
of the trade-off between speed of execution and performance stability,
considering that on the other instances {\clingcon} is very fast. 
The per-instance execution times for {\gbox} and {\bbox} are detailed
in Figure~\ref{fig:rf-cumulative}. 
The figure highlights the very similar performance of the two schemas,
with {\bbox} losing only in the final 10\% of the instances in spite
of its higher overhead. This is likely due to the simplicity of the
{\rf} problem: most extended answer sets can be found with little
backtracking between base and theory solver, and thus the difference
between the two schemas has little bearing on the execution
times. Similarly to the hard instances of the {\is} domain, the better
performance of {\bbox} and {\gbox} in comparison to {\cbox} 
can be explained by the fact that,  
in this domain, frequent checks
with the theory solver add overhead but are overall ineffective at pruning
the search. 



\section{A Brief Account on Related Systems}\label{sec:relsys}

In the introduction we mentioned solvers
{\acsolver}~\cite{mel08},
{\clingcon}~\cite{geb09}, 
{\sc idp}~\cite{idp}, {\sc inca}~\cite{dre11a}, {\sc
        dingo}~\cite{jan11},  {\sc mingo}~\cite{liu12}, \aspmt~\cite{Bartholomew2014}, and~\ezsmt~\cite{sus16a}. In this section we briefly remark on this variety of CASP systems. This is not intended as a detailed comparison  between the systems, but as a quick summary.

At a high-level abstraction, one may easily relate the architectures of the {\clingcon}, {\acsolver},  {\sc idp}, and {\sc inca} to that of \ezcsp.
Given a CASP program, all of these systems first utilize an answer set solver to compute a part of an answer set for an asp-abstraction and then utilize a constraint programming system to solve a resulting csp-abstraction. All of these systems implement the \cbox integration schema. 
Table \ref{tab:solvers} provides a summary of base solvers and theory solvers utilized by them.

\begin{table}
\begin{tabular}{||l|l|l||}
        &Base Solver & Theory Solver\\
        \hline
 \acsolver      &\smodels~\cite{sim02}& constraint logic programming systems\\
\clingcon & {\clasp}~\cite{geb07}&{\gecode}~\cite{gecode}\\
{\sc idp} & {\sc minisat(id)}~\cite{minisatid-alp}&{\gecode}~\cite{gecode}\\ 
{\sc inca} & {\clasp}~\cite{geb07}& its own CP solver\\
\end{tabular}
\caption{Solvers used by state-of-the-art CASP systems}
\label{tab:solvers}
\end{table}

A few remarks are due. Unlike its peers, \acsolver does not implement learning as its base solver {\smodels} does not support this technique. 
The fact that system {\sc inca} implements its own CP solver, or, in other words, a set of its in house CP-based propagators allows this system to take advantage of some sophisticated techniques stemming from CP. In particular, it implements so called ``lazy nogood generation''. This technique allows one to transfer some of the information stemming from CP-based propagations into a propositional logic program extending the original input to a base solver. 
We also note that the latest version of \clingcon, as well, bypasses the use of \gecode by implementing its own CP-based propagators.
All of the above systems are focused on finite domain integer linear constraints. Some of them allow for global constraints. 

System {\sc dingo}  translates CA programs into  SMT modulo difference logic formulas and applies the SMT solver {\sc z3}~\cite{z3} to find their models. 
Rather than arbitrary integer linear constraints, the system only handles those that fall into the class of difference logic. On the other hand, the system does not pose the restriction of finite domain.
The \ezsmt~\cite{sus16a} solver and the \aspmt~\cite{Bartholomew2014} solvers utilize SMT solvers to process CA programs. Both of these systems may only deal with tight programs.
They allow for arbitrary integer linear constraints.
None of the SMT-based CASP solvers  allow for global constraints in their programs due to the underlying solving technology.

Last but not least, the solver~{\sc mingo} translates CA programs into mixed integer programming expressions and then 
utilizes IBM ILOG {\sc cplex}\footnote{\url{http://www.ibm.com/software/commerce/optimization/cplex-optimizer/}} system to find solutions.  

Susman and Lierler~(\shortcite{sus16a}) provide an experimental analysis of systems from all of the families mentioned.

\section{Conclusions}\label{sec:concl}
\if 0
This paper presents a comprehensive account of the state-of-the-art
constraint answer set solver {\ezcsp}. It formalizes a variety of its
configurations by means of  graph-based framework
$\ezg_\cP$, developed for the purpose of this analysis. We use this
graph to highlight the key differences between 
the various integration schemas featured by {\ezcsp}, namely
{\bbox},  {\gbox},
and {\cbox}.  Experimental analysis conducted in this work demonstrates
the value of each one of these schemas. Furthermore, the tight
relation between ez-programs and CA programs 
makes it evident that, although the origins of {\ezcsp} are in providing
a simple, yet effective framework for modeling constraint satisfaction problems, the
{\ezcsp} language developed as a full-fetched constraint answer set programming
formalism.  This also yields another interesting observation: constraint
answer set programming can be seen as a  declarative modeling framework
utilizing  constraint satisfaction solving technology. 
\fi
In this paper, we  have addressed in a principled way the integration of
answer set solving techniques and constraint solving techniques in
CASP solvers and, in particular, in the realm of the constraint answer set
solver {\ezcsp}.
To begin, we  defined logic programs
with constraint atoms (CA programs). To
bridge the ASP and the constraint programming aspects of such programs, we introduced the
notions of asp-abstractions and csp-abstractions, which allow for
a simple and yet elegant way of defining the extended answer sets of CA programs. 

Next, we  described the syntax of the CASP language used by 
the constraint answer set solver {\ezcsp}, which we call \ez. 
It is worth noting that this paper contains the first detailed and principled account of the syntax of 
the {\ez} language. 
We  relate programs
written in the {\ezcsp} language and CA programs. 
The tight relation between {\ez} programs and CA programs
makes it evident that  the
{\ez} language is a full-fledged constraint answer set programming
formalism. Recall that the {\ezcsp} system originated as an attempt to provide
a simple, flexible framework for modeling constraint satisfaction problems. This yields an  interesting observation: constraint
answer set programming can be seen as a  declarative modeling framework
utilizing  constraint satisfaction solving technology.

In this paper we also drew a parallel between CASP  and SMT. We  used this connection to introduce three
important kinds of integration of CASP solvers: {\bbox} integration,
{\gbox} integration, and {\cbox} integration. 
      We  introduced a graph-based abstract
      framework suitable for  describing the
      {\ezcsp} solving algorithm. The idea of using graph-based
      representations for backtrack-search procedures was pioneered by
      the SAT community.  
      Compared to the use of pseudocode for describing algorithms, such a framework
      allows for simpler descriptions of search algorithms, and is
      well-suited for capturing the similarities and differences of
      the various configurations of {\ezcsp} stemming from different
      integration schemas. 

Finally, we presented an experimental comparison of the various
integration schemas, using the implementation of {\ezcsp} as a
testbed. For the comparison, we used  three challenging
benchmark problems from the {\sl Third  Answer Set Programming
  Competition --  2011} ~\cite{aspcomp3}. The experimental analysis takes
into account how differences in the encoding of the solutions may
influence overall performance by exploiting the components of the
solver in different ways. 
The case study that we conducted clearly illustrates the
influence that integration methods have on the behavior of hybrid 
systems. 
\tcb{ 
The main attractive feature of the {\bbox} integration schema is the ease of inception of a new system. In realm of CASP,
one  may take existing off-the-shelf ASP and CSP tools and connect
them together by simple intermediate translation functions. This facilitates fast
implementation of a prototypical CASP solver. One can then move towards a {\gbox} or {\cbox} architecture in the hope of increased performance when a prototype system proves to be
promising.} 
Yet, our experiments demonstrate that different integration schemas may be of use and importance for different
domain, and that, when it comes to performance, there is no single-best integration schema. 
Thus, systematic means ought to be found for facilitating
building   hybrid  systems supporting various coupling
mechanisms. 
\tcb{
Just as the choice of a particular heuristic for selecting decision
literals is often configurable in SAT or ASP solvers via command line
parameters, 
the choice of integration schema in hybrid systems should be configurable. }
Experimental results also indicate a strong need for
theory propagation. Standard interfaces in both base and theory systems are
required in order to easily build  hybrid systems to support this feature.

Building clear and flexible APIs allowing for
various types of interactions between the solvers seems a necessary 
step towards making the development of hybrid solvers effective.
This work provides evidence for the need of an effort towards 
this goal. \tcg{Many SAT solvers and SMT solvers already come with
  APIs that aim at facilitating extensions of these
  complex software systems. 
We argue for this practice to be adopted by other automated reasoning communities.}

Finally, our study was performed in the realm of CASP
technology, but it translates to SMT as well, given the discussed links 
between the two technologies. Incidentally, this also
brings to 
the surface the importance of establishing means of effective
communication between the two communities of constraint answer set
programming and SMT solving. 


\ \\
\ \\

\if 0
\appendix 
\section{Experimental Results Figures}
\begin{table*}[h]
\begin{center}
\begin{tabular}{|r||r|r|r||r|r|r||r|r|}
\hline
& \multicolumn{3}{c||}{\em True-CASP} & \multicolumn{3}{c||}{\em Pure-CSP} & & \\
{\em Instance} & {\bbox} & {\gbox} & {\cbox} & {\bbox} & {\gbox} & {\cbox} & {\em Pure-ASP} & {\em Clingcon} \\
\hline
\input{TABLES/tab_28}
\hline
\end{tabular}
\end{center}
\caption{Performance on {\wseq} domain}
\label{table:wseq}
\end{table*}
\begin{table*}[h]
\begin{center}
\begin{tabular}{|r||r|r|r||r|r|r||r|r|}
\hline
& \multicolumn{3}{c||}{\em True-CASP} & \multicolumn{3}{c||}{\em Pure-CSP} & & \\
{\em Instance} & {\bbox} & {\gbox} & {\cbox} & {\bbox} & {\gbox} & {\cbox} & {\em Pure-ASP} & {\em Clingcon} \\
\hline
\input{TABLES/tab_28}
\hline
\end{tabular}
\end{center}
\caption{Performance on {\wseq} domain}
\label{table:wseq}
\end{table*}

\begin{table*}[h]
\begin{center}
\begin{tabular}{|r||r|r|r||r|r|r||r|r|}
\hline
& \multicolumn{3}{c||}{\em True-CASP} & \multicolumn{3}{c||}{\em Pure-CSP} & & \\
{\em Instance} & {\bbox} & {\gbox} & {\cbox} & {\bbox} & {\gbox} & {\cbox} & {\em Pure-ASP} & {\em Clingcon} \\
\hline
\input{TABLES/tab_35}
\hline
\end{tabular}
\end{center}
\caption{Performance on {\is} domain; easier instances}
\label{table:is-easy}
\end{table*}

\begin{table*}[h]
\begin{center}
\begin{tabular}{|r||r|r|r||r|r|r||r|r|}
\hline
& \multicolumn{3}{c||}{\em True-CASP} & \multicolumn{3}{c||}{\em Pure-CSP} & & \\
{\em Instance} & {\bbox} & {\gbox} & {\cbox} & {\bbox} & {\gbox} & {\cbox} & {\em Pure-ASP} & {\em Clingcon} \\
\hline
\input{TABLES/tab_35_sel}
\hline
\end{tabular}
\end{center}
\caption{Performance on {\is} domain; harder instances}
\label{table:is-hard}
\end{table*}

\begin{table*}[h]
\begin{center}
\begin{tabular}{|r||r|r|r||r|}
\hline
& \multicolumn{3}{c||}{\em True-CASP}  & \\
{\em Instance} & {\bbox} & {\gbox} & {\cbox} & {\em Pure-ASP} \\
\hline
\input{TABLES/tab_21}
\hline
\end{tabular}
\end{center}
\caption{Performance on {\rf} domain}
\label{table:rf}
\end{table*}

\begin{figure}[h]
\begin{center}
\includegraphics[scale=0.50]{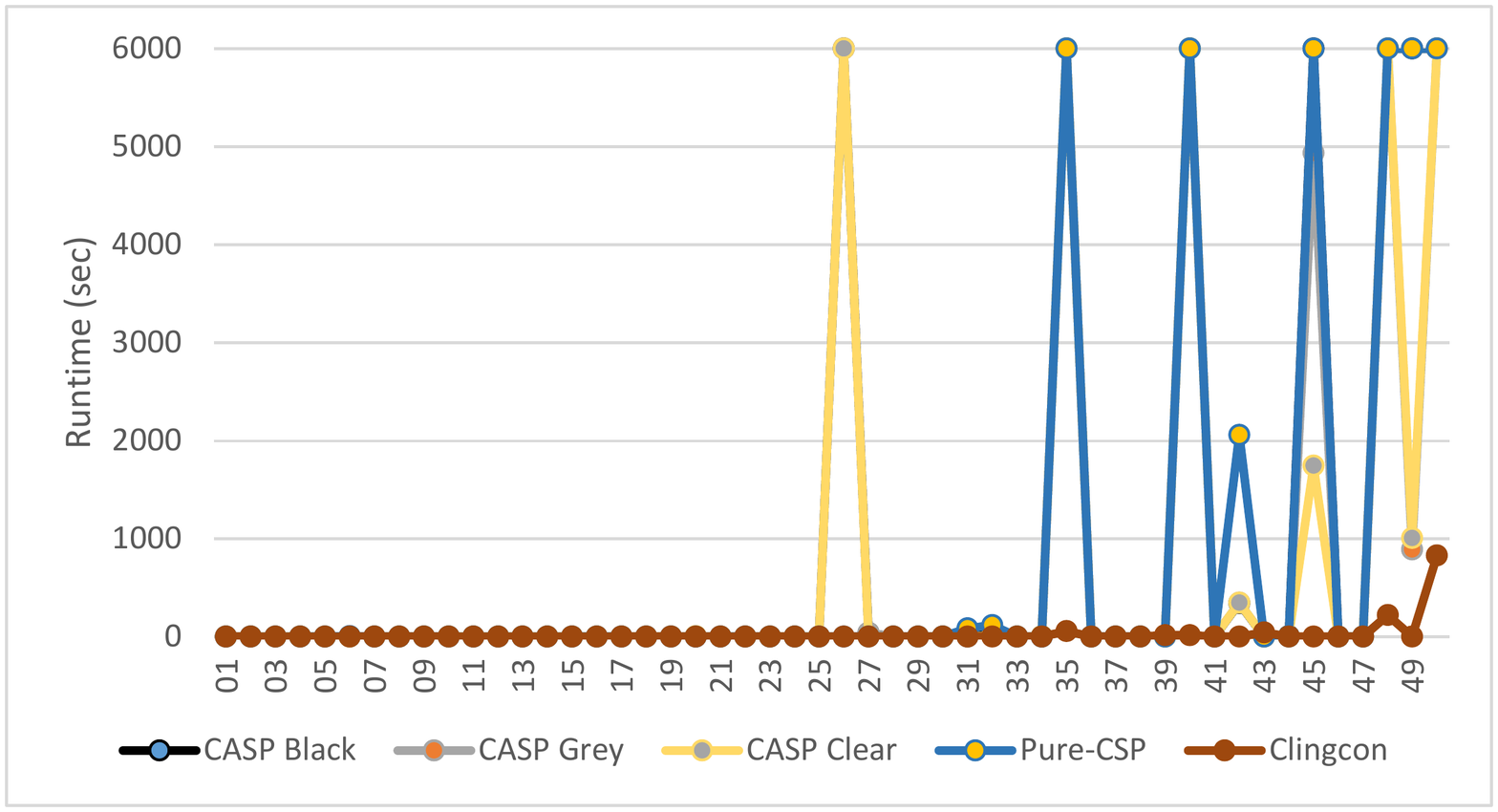}
\end{center}
\caption{Performance on {\is} domain, easy instances; overall view}
\label{fig:is-easy-overall}
\end{figure}

\begin{figure}[h]
\begin{center}
\includegraphics[scale=0.75]{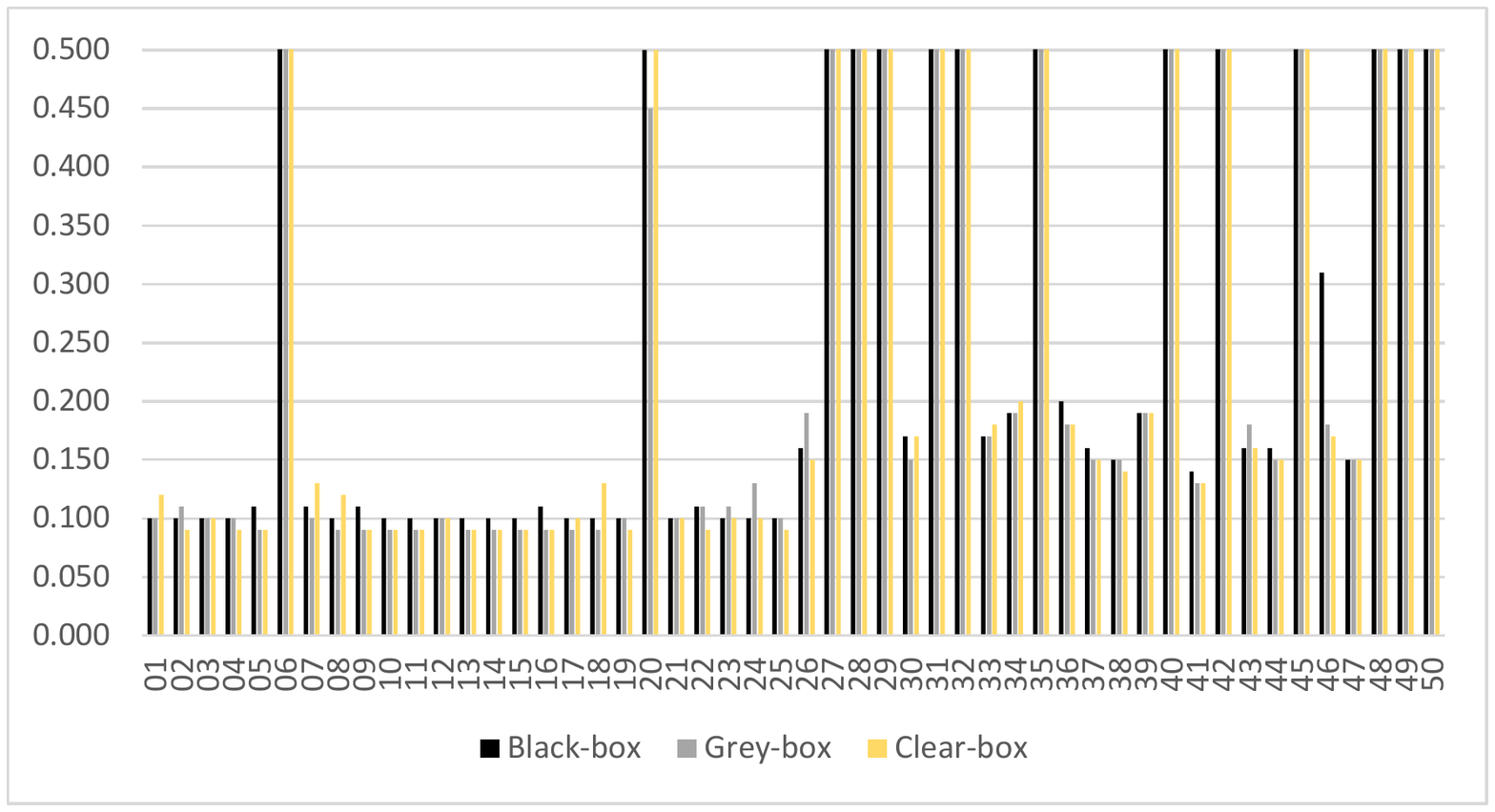}
\end{center}
\caption{Performance on {\is} domain, easy instances; true-CASP
  encoding; detail of 0-1sec execution time range} 
\label{fig:is-easy-true_CASP}
\end{figure}
\fi

\bibliography{shared}

\appendix
\section{{\ez} -- The Language of {\ezcsp}}\label{sec:ez-language}
The \ez language is aimed at a convenient specification of a propositional ez-program  
$\cE=\langle E,\cA,\cC,\gamma,D \rangle$. To achieve this, the
language supports an explicit specification of domains and variables,
the use of non-ground rules, and a compact representation of lists in
constraints. We begin by describing the syntax of the language. Next,
we define a mapping from {\ez} programs to propositional ez-programs.

Let $\Sigma_{\ez}=\tbeg C_{\ez},  V_{\ez}, F_{\ez}, R_{\ez} \tend$ be a signature, where
$C_{\ez},  V_{\ez}, F_{\ez}$, and $R_{\ez}$
denote pairwise disjoint sets of constant symbols, non-constraint
variable  symbols, function symbols, and relation symbols
respectively.  
Set $C_{\ez}$ includes symbols for integers and pre-defined constants
($fd$, $q$, $r$), denoting CSP domains. 
We use common convention in logic programming and denote
non-constraint variable  symbols in $V_{\ez}$ by means of upper case
letters. 
Function and relation symbols are associated 
with a non-negative integer called {\em arity}. 
The arity of function symbols is always greater than $0$.
Set $F_{\ez}$ includes pre-defined symbols that intuitively correspond
to arithmetic operators (e.g., $+$), reified arithmetic connectives
(e.g., $<$, $=$), reified logical connectives (see Table
\ref{tab:logic-conn}), list delimiters ($[$ and $]$) and names of
global constraints (discussed later in this section). Set $R_{\ez}$
contains pre-defined symbols $cspdomain$, $cspvar$, $required$. 

The notions of terms, atoms, literals, and rules are defined over $\Sigma_{\ez}$ similarly to ASP, although
the notion of term is slightly expanded. Specifically, a term over signature $\Sigma_{\ez}=\tbeg C_{\ez},  V_{\ez}, F_{\ez}, R_{\ez} \tend$
is defined as:
\begin{enumerate}
\item
a constant symbol from  $C_{\ez}$.
\item
a variable symbol from $V_{\ez}$.
\item
an expression of the form 
\beq
f(t_1,\ldots,t_k), 
\eeq{eq:term1}
where $f$  is a function symbol in  $F_{\ez}$ of arity $k$
and $\tbeg t_1, \ldots, t_k \tend$ are terms from cases 1--3 (If a function symbol is a pre-defined
arithmetic operator, arithmetic connective, or logical connective,
then common infix notation is used.)  
\item an \emph{extensional list}, i.e., an expression of the form $[
  t_1, t_2, \ldots, t_k ]$ where $t_i$'s are terms from cases 1--3.  
\item an \emph{intensional list}, i.e., an expression of the form $[ g
  / k ]$ where $g \in F_{\ez}$ or (with slight abuse of notation) $g
  \in R_{\ez}$ and $k$ is an integer.  
\item a global constraint, i.e., an expression
of the form $f(\lambda_1,\lambda_2,\ldots,\lambda_k)$, 
where $f \in F_{\ez}$ and each $\lambda_i$ is a list.\footnote{In
  constraint satisfaction, global constraints are applied to lists of
  terms of arbitrary length, while local constraints, such as $x>y$,
  apply to a fixed number of arguments. For simplicity, in the
  definition of the language we disregard special cases of global
  constraints, whose arguments are not lists.} 
\end{enumerate}
\begin{table}[hptb]
\begin{center}
\begin{tabular}{|l|l|}
\hline
Connective & Constraint Domain\\
\hline
$\lor$ & Disjunction\\
$\land$ & Conjunction\\
$\backslash$ & Exclusive disjunction\\
$\leftarrow$ or $\rightarrow$ & Implication \\
$\leftrightarrow$ & Equivalence\\
$!$ & Negation\\
\hline
\end{tabular}
\end{center}
\caption{{\ez} Logical Connectives}\label{tab:logic-conn}
\end{table} 
Pre-defined arithmetic and logical connectives from $F_{\ez}$ are
dedicated to the specification of constraints. The connectives are
reified to enable their use within atoms of the form
$required(\beta)$.  Furthermore, the logical connectives enable the 
specification of so called ``reified constraints'' such as:
\beq
x \geq 12 \lor y < 3,
\eeq{eq:reif1}
which specifies that either constraint $x \geq 12$ or constraint $y <
3$ should be satisfied by a solution to a problem containing reified
constraint~\eqref{eq:reif1}. 

An {\ez} program is a pair $\tbeg \Sigma_{\ez}, \Pi \tend$,
where~$\Pi$ is a set of rules over signature $\Sigma_{\ez}$.  
Every {\ez} program is required to contain exactly one fact, whose
head is $cspdomain(fd)$, $cspdomain(q)$, or $cspdomain(r)$. Following
common practice, we denote a program simply by the set of its rules,
and let the signature be implicitly defined. 

Similarly to ASP, a \emph{non-ground rule} is a rule containing one or
more non-constraint variables.
A non-ground rule is interpreted 
as a shorthand for the set of propositional (ground) rules obtained by replacing
every non-constraint variable in the rule by suitable
terms not containing non-constraint variables.
The process of replacing non-ground rules by their propositional
counterparts is called {\em grounding} and is well understood in
ASP~\cite{geb07b,cal08}. 
For this reason, in the rest of this section we focus on ground {\ez} programs.


We now define a mapping from a (ground) {\ez} program $\Pi$ to a
propositional ez-program $\cE=\langle E,\cA,\cC,\gamma,D \rangle$. We
assume that $\gamma$, a function from $\cC$ to constraints, is defined
along the lines of Section \ref{sec:LPCA} and given. 
Recall that only one fact formed from relation $cspdomain$ is allowed
in a program $\Pi$. The fact's head is mapped to the constraint domain~$D$ by mapping $\mu_D$: 
\[
\mu_D(\Pi)=\left\{
\begin{array}{cl}
\mathcal{FD} \mbox{ (finite domains)} & \mbox{ if } cspdomain(fd). \in \Pi\\
\mathcal{Q} & \mbox{ if } cspdomain(q). \in \Pi\\
\mathcal{R} & \mbox{ if } cspdomain(r). \in \Pi\\
\end{array}
\right.
\] 
Atoms formed from relation $cspvar$ specify the set 
$\cV_{{\cP_\cE}}$ of variables (recall that $\cV_{{\cP_\cE}}$ is the
set of constraint variables that appear in csp-abstractions
corresponding to $\cE$). 
 The corresponding atoms take two forms, $cspvar(v)$ and
 $cspvar(v,l,u)$, where $v$ is a term from $\Sigma_{\ez}$ and $l$, $u$
 belong to $C_{\ez} \cap D$. 
The latter form allows one to provide a range for the
variable. Specifically, set $\cV_{{\cP_\cE}}$ is obtained from facts
containing the above atoms as follows: 
\[
\cV_{{\cP_\cE}}=\{ v \,|\, cspvar(v). \in \Pi \mbox{ or } cspvar(v,l,u). \in \Pi \}.
\]
The constraints that specify the range of the variables are generated by mapping $\mu_V$:
\[
\ba{ll}
\mu_V(\Pi)=&\{ required(v \geq l). \,|\, cspvar(v,l,u). \in\Pi \} \ \ \cup\\
&\{ required(v \leq u). \,|\, cspvar(v,l,u). \in\Pi \}.
\ea
\]
Next, we address the specification of lists. Let us begin by
introducing some needed terminology. If a term is of the
form~\eqref{eq:term1} 
then we refer to $f$ as a {\em functor}
and to $\tbeg t_1, \ldots, t_k \tend$ as its {\em arguments.} 
For an atom of the form $r(t_1, \ldots, t_k)$ we say that $r$  is its
{\em relation} and $\tbeg t_1, \ldots, t_k
\tend$ are  its arguments. The expression $terms(f,k,\tbeg t_1, t_2,
\ldots t_m))$ (with $0 \leq m \leq k$) denotes the set of terms from
$\Sigma_{\ez}$ formed by functor $f$ that have arity $k$ and whose
arguments have prefix $\tbeg t_1, t_2, \ldots, t_m \tend$.  
The expression $atoms(r,k,\tbeg t_1, \ldots, t_k \tend)$ denotes the
set of atoms formed by relation $r$ that have arity $k$ and whose
arguments have prefix $\tbeg t_1, t_2, \ldots, t_m \tend$.  The
expression $facts(\Pi)$ denotes the facts in $\Pi$. 
Finally, given a set $S$, $lexord(S)$ denotes a list $[ e_1, e_2,
  \ldots, e_n ]$ enumerating the elements of $S$ in such a way that
$e_i \leq e_{i+1}$ (where $\leq$ denotes lexicographic
ordering\footnote{The choice of  
a particular order is due to the fact that
global constraints that accept multiple lists typically expect
the elements in the same position throughout the lists to
be in a certain relation.
More sophisticated techniques for the specification of lists are
possible, but in our experience, this method 
gives satisfactory results.
}).
We can now define mappings $\lambda_v$ and $\lambda_r$ from the two
forms of intensional lists to corresponding extensional lists:  
\begin{itemize}
\item
 Given an expression of the form $[ f(t_1, t_2, \ldots, t_m) / k ]$,
 where $f \in F_{\ez}$, $k$ is an integer from $C_{\ez}$, $t_i$'s are
 terms, and $0 \leq m \leq k$, 
its \emph{extensional representation} is the list:
\[
\lambda_v([ f(t_1, t_2, \ldots, t_m) / k ])=lexord(terms(f,k,\tbeg t_1, t_2, \ldots, t_m \tend) \cap \cV_{{\cP_\cE}})
\]
of all variables with functor $f$, arity $k$, and
whose arguments have prefix $\tbeg t_1, t_2, \ldots, t_m \tend$.
For example, given a set  of variables $$X_1=\{ v(1), v(2), v(3),
w(a,1), w(a,2), w(b,1) \},$$ the expression $[w(a)/2]$ denotes the 
list $\lambda_v(w,2,\tbeg a \tend)=[ w(a,1), w(a,2) ]$.
When $m=0$, the expression is abbreviated $[ f / k ]$. For instance,
given set $X_1$ as above, the expression $[v/1]$ denotes $[ v(1),
  v(2), v(3) ]$. \item Consider an expression $[r(t_1, t_2, \ldots,
  t_m) / k ]$, where $r$ is not a pre-defined relation from $R_{\ez}$
and $0 \leq m \leq k$. Let $[ a_1, a_2, \ldots, a_n ]$ denote list
$lexord(facts(\Pi) \cap atoms(r,k,\tbeg t_1, \ldots, t_m \tend))$ and
let $\alpha^k_i$ denote the $k^{th}$ argument of $a_i$. Then, the
extensional representation, $\lambda_r([ r(t_1, t_2, \ldots, t_m) / k
])$, of $[r(t_1, t_2, \ldots, t_m) / k ]$ is: 
\[
\lambda_r([ r(t_1, t_2, \ldots, t_m) / k ])= [\alpha_1^k,\alpha_2^k, \ldots, \alpha_n^k].
\]
 For example, given a relation $r'$ defined by facts
$r'(a,1,3), r'(a,2,1), r'(b,5,7)$, the expression $[r'(a)/3]$
denotes the list $[ 3, 1 ]$ and the expression $[r'(a,2)/3]$ denotes $[ 1 ]$.
Similarly to the previous case, when the list of arguments is empty,
the expression can be abbreviated as $[ r / k ].$ For instance, given a
relation $r''$ for which we are given facts 
$r''(a,3), r''(b,1), r''(c,2)$, the expression $[r''/2]$
denotes $\tbeg 3, 1, 2 \tend$.
\end{itemize}
As a practical example of the use of intensional lists, suppose
that, above, relation $r''$ denotes the amount of  
resources required for a job and suppose that we are given facts
$d(a,1), d(b,1), d(c,1)$, specifying  
that jobs $a$, $b$, $c$ have duration $1$. Additionally, variables $st(a)$, $st(b)$, $st(c)$ represent
the start time of the jobs. A cumulative
constraint\footnote{\ref{sec:glc} gives information on
  cumulative and other global constraints.} for this scenario can be
written as 
\[
required(cumulative([st/1], [d/2], [r''/2],4)),
\]
which is an abbreviation of\footnote{Note that the first argument is of the type $[ f(t_1, t_2, \ldots, t_m) / k ]$ while the other two are of type $[ r(t_1, t_2, \ldots, t_m) / k ]$, hence the different expansions.}
\[
required(cumulative([ st(a), st(b), st(c) ], [ 1, 1, 1 ], [ 3, 2, 1 ],4)).
\]
and means that values should be assigned to variables $st(a), st(b), st(c)$ so that each job, of duration $1$ and requiring amounts of resources $3, 2, 1$ respectively, can be executed on a machine that can provide at most $4$ resources at any given time.

Next, let $\mu_{R}$ be a function that maps an atom of the form
$required(\beta)$ to an atom $required(\beta')$  by: 
\begin{itemize}
\item
Replacing every occurrence of $[f(t_1,\ldots,t_m)/k]$ in $\beta$ by $\lambda_v([f(t_1,\ldots,t_m)/k])$;
\item
Replacing every occurrence of $[r(t_1,\ldots,t_m)/k]$ in $\beta$ by $\lambda_r([f(t_1,\ldots,t_m)/k])$.
\end{itemize}
The mapping is easily extended to rules and to programs as follows:
\[
\mu_{R}(a \hif B\ldotp)=\left\{
\begin{array}{ll}
\mu(a) \hif B\ldotp & \mbox{if $a$ is of the form $required(\beta)$} \\
a \hif B\ldotp & \mbox{otherwise}
\end{array}
\right.
\]
where $B$ denotes the body of a rule.

\[
\mu_R(\Pi)=\bigcup_{r \in \Pi} \mu_R(r)
\]
Finally, let $\mu_\cA(\Pi)$ and $\mu_\cC(\Pi)$ denote mappings from
$\Pi$ to alphabets $\cA$ and $\cC$, which are straightforward given
the above construction. Thus, given an {\ez} program $\Pi$, the
corresponding propositional ez-program is: 
\[
\cE(\Pi)=\langle\  \mu_V(\Pi) \cup \mu_R(\Pi),\  \mu_\cA(\Pi),\  \mu_\cC(\Pi),\  \gamma,\  \mu_D(\Pi)\  \rangle.
\]

\subsection{Global Constraints in Language {\ez}}\label{sec:glc}
The global constraints supported by the {\ez} language include:
\begin{itemize}
\item
$all\_dif\!ferent(V)$, where $V$ is a list of variables. This
  constraint, available only in the $fd$ domain, ensures that all the
  variables in $V$ are assigned unique values. 
Typically\footnote{See for example
  \url{http://sicstus.sics.se/sicstus/docs/3.7.1/html/sicstus_33.html}},
the implementation of the  
corresponding algorithm found in constraint solvers is
incomplete. Global constraint $all\_distinct(V)$, which provides a
complete 
implementation of the algorithm, is also supported.
\item
$assignment(X,Y)$, where $X$ and $Y$ are lists of $n$ variables whose domain is $1..n$. The constraint is satisfied if, for every $i,j$, $X_i = j$ if and only if $Y_j = i$.
\item
$circuit(V)$, where $V$ is a list of $n$ variables whose domain is
  $1..n$. The constraint is satisfied by an assignment $V_1=v_1$,
  $V_2=v_2$, $\ldots$, $V_n=v_n$ if the directed graph with nodes $1
  \ldots n$ and arcs $\tbeg 1,v_1 \tend$, $\tbeg 2, v_2 \tend$,
  $\ldots$, $\tbeg n, v_n \tend$ forms a Hamiltonian cycle. 
\item
$count(M,V,\circ,E)$, where $M$ is an integer or variable, $V$ a list
  of variables, $\circ$ an arithmetic comparison operator, and $E$ an
  integer or variable. This constraint is satisfied if the number,
  $c$, of elements of $V$ that equal $M$ is such that $c \circ E$. 
\item
$cumulative(S,D,R,L)$, where $S$ is a list of variables, $D$ and $R$
  are lists of non-negative integers matching the length of $S$, and
  $L$ is an integer or a variable. This constraint, which is only
  available in the $fd$ domain, is typically used in scheduling
  problems. In that context, $S$ represent the start times of a set of
  jobs, $D$ provides the duration of those jobs, and $R$ the resources
  they require. $L$ is the amount of resources available at any time
  step. Intuitively, the constraint assigns start times to the jobs so
  that, at any time, no more than an amount $L$ of resources is used. 
\item
$disjoint2(X,W,Y,H)$, where $X, Y$ are lists of variables and $W, H$
  are lists of integers defining the coordinates and dimensions of
  rectangles. For example, if $X=[x_1, \ldots]$, $Y=[y_1,\ldots]$,
  $W=[w_1,\ldots]$, $H=[h_1,\ldots]$, one of the rectangles they
  describe has top-left vertex $\tbeg x_1, y_1 \tend$ and bottom-right
  vertex $\tbeg x_1+w_1, y_1+h_1 \tend$. This constraint is only
  available in the $fd$ domain, and assigns values to the variables so
  that the corresponding rectangles do not overlap. 
\item
$element(I,V,E)$, where $I$ is an integer or variable, $V$ a list of
  variables,
 and $E$ an integer or variable. This constraint is satisfied if the $I^{th}$ element of $V$ is $E$.
\item
$minimum(M,V)$ and $maximum(M,V)$, where $M$ is a variable or integer
  and $V$ is a list of variables. These constraints are satisfied if
  minimum or  maximum of $V$ equals $M$. 
\item
$scalar\_product(C,X,\circ,E)$, where $C$ is a list of integers, $X$
  is a list of variables, $\circ$ is an arithmetic comparison
  operator, and $E$ is an integer or variable. The intuitive meaning
  of this constraint is that the scalar product, $p$, of the elements
  of $C$ and $X$ must be such that $p \circ E$.
\item
$serialized(S,D)$, where $S$ is a list of variables and $D$ is a list
  of integers, intuitively denoting start time and duration of
  jobs. The constraint assigns start times to the jobs so that their
  execution does not overlap, and can be viewed as a special case of
  $cumulative$. 
\item
$sum(V,\circ,E)$, where $V$ is a list of variables, $\circ$ an
  arithmetic comparison operator, and $E$ is an integer or a
  variable. 
This constraint assigns value to the variables so that $(\sum_{v \in V}v) \circ E$ is satisfied.
\end{itemize}

\if 0
\section{CSP Search Directives}\label{sec:csp-search-directives}
The theory solvers used by the {\ezcsp} system employ a backtracking
search algorithm, which consists of a depth-first search procedure
that selects one variable at a time and 
chooses a value for this variable. The selection of a variable
is called a {\em choice point} and the assignment of a value is called
{\em labeling}~\cite{tsa96}. It is common for constraint solvers to
implement various strategies for choosing a variable at
a choice point, as well as for labeling. The constraint
solvers also often allow  a user to include in the specification of a
CSP directives stating which
strategies should be used for solving. This is often useful for
performance tuning. Although the discussion on search tuning is
orthogonal to the topic of the present paper, for completeness we
provide an overview of the search directives supported by the {\ez}
language. 
An atom of the form
\[
label\_order(v,n),
\]
where $x$ is a variable and $n$ is an integer, specifies a partial
ordering of the variables by associating $v$ with a ``priority''
$n$. Where 
supported by the constraint solver, the partial order will
be followed in the selection of variables for value assignment.

Pre-defined variable selection and labeling strategies can be specified by atoms of the form:
\[
label\_option(s)
\]
where $s$ is a constant representing a strategy. The supported strategies for variable selection are:
\begin{table}[hptb]
\begin{center}
\begin{tabular}{|l|l|l|}
\hline
Constant & Strategy\\
\hline
$leftmost$ & leftmost variable\\
$min$ & leftmost variable with the smallest lower bound\\
$max$ & leftmost variable with the greatest upper bound\\
$f\!f$ & leftmost variable with the smallest domain (first-fail)\\
\hline
\end{tabular}
\end{center}
\caption{Constants for labeling strategies}\label{tab:labeling}
\end{table}

The available value selection strategies are:
\begin{table}[hptb]
\begin{center}
\begin{tabular}{|l|l|l|}
\hline
Constant & Strategy & Constraint Solver\\
\hline
$step$ & $x = \text{upper/lower-bound}$ or $x \not= \text{upper/lower-bound}$ & SICStus \\
$bisect$ & $x \leq \text{mid-point}$ or $x > \text{mid-point}$ & SICStus \\
$split$ & $x > \text{mid-point}$ & B-Prolog \\
$reverse\_split$ & $x \leq \text{mid-point}$ & B-Prolog \\
\hline
\end{tabular}
\end{center}
\caption{Constants for value selection strategies}\label{tab:val-sel}
\end{table}

The above choices can be further refined by specifying in which order the domain should be explored:
\begin{table}[hptb]
\begin{center}
\begin{tabular}{|l|l|l|}
\hline
Constant & Strategy & Constraint Solvers\\
\hline
$up$ & Ascending order & SICStus \\
$down$ & Descending order & SICStus \\
\hline
\end{tabular}
\end{center}
\caption{Constants for exploration order}\label{tab:ord-sel}
\end{table}
\if 0
 \textbf{\textcolor{blue}{Marcello, note how I replaced ``atom'' by
     ``statement''. My understanding is that these ``label'' expressions
     occur only as facts and their true role is much more in spirit of
     ``\#hide'' and ``\#show''. Please correct if I am
     wrong.\textless\textless\textless\textless\textless\textless}} 
\fi

\fi

\end{document}